\newcommand{\beq}{\begin{equation}}
\newcommand{\eeq}{\end{equation}}
\theoremstyle{definition} }
\theoremstyle{definition} }
\newtheorem{theo}{Theorem}
\newtheorem{lemm}{Lemma}
\newtheorem{asmp}{Assumption}
\theoremstyle{definition}
\newtheorem{defn}{Definition}
\newcommand\I{\mathbb{I}}
\newcommand\E{\mathbb{E}}
\newcommand{\lp}{\left(}
\newcommand{\rp}{\right)}
\newcommand{\lla}{\left\langle}
\newcommand{\rra}{\right\rangle}
\newcommand{\g}{\mathbf{g}}
\renewcommand{\a}{\mathbf{a}}
\renewcommand{\b}{\mathbf{b}}
\renewcommand{\bm}{\mathbf{m}}
\renewcommand{\v}{\mathbf{v}}
\newcommand{\w}{\mathbf{w}}
\newcommand{\x}{\mathbf{x}}
\newcommand{\z}{\mathbf{z}}
\newcommand{\cM}{{\cal M}}
\newcommand{\cN}{{\cal N}}
\newcommand{\cP}{{\cal P}}
\newcommand{\cZ}{{\cal Z}}
\newcommand{\bI}{\mathbf{I}}
\newcommand{\vertiii}[1]{{\left\vert\kern-0.25ex\left\vert\kern-0.25ex\left\vert #1
    \right\vert\kern-0.25ex\right\vert\kern-0.25ex\right\vert}}
\newcommand{\proof}{\noindent{\itshape Proof:}\hspace*{1em}}
\newcommand{\qed}{\nolinebreak[1]~~~\hspace*{\fill} \rule{5pt}{5pt}\vspace*{\parskip}\vspace*{1ex}}
\newcommand {\commentout}[1] {}
\def\ints{{{\rm Z} \kern -.35em {\rm Z} }}  % ints
\def\smallints{{{\rm Z} \kern -.3em {\rm Z} }}  % small ints
\def\pints{{{\rm I} \kern -.15em {\rm N} }}      % pints
\newcommand{\reals}{\mathbb R}
\def\cplx{{{\rm I} \kern -.45em {\rm C} }}       % complex
\def\l2{\rm {\mathcal L}^{2}(\reals)}            % l2
\newcommand{\nr}{\nonumber}
\newcommand{\be}{\begin{eqnarray}}
\newcommand{\ee}{\end{eqnarray}}
\newcommand{\bea}{\begin{eqnarray}}
\newcommand{\eea}{\end{eqnarray}}
\newcommand{\beaa}{\begin{eqnarray*}}
\newcommand{\eeaa}{\end{eqnarray*}}
\newcommand{\bnad}{\begin{nad}}
\newcommand{\enad}{\end{nad}}
\renewcommand{\overline}{\bar}
\title{Private Stochastic Non-Convex Optimization:\\ Adaptive Algorithms and Tighter Generalization Bounds}
\author{
  Yingxue Zhou\thanks{equal contribution} \footnote{Department of Computer Science \& Engineering, University of Minnesota. Email: \texttt{zhou0877@umn.edu, zsw@umn.edu, banerjee@cs.umn.edu}.}, ~ Xiangyi Chen$^{*}$\footnote{Department of  Electrical \& Computer Engineering, University of Minnesota. Email: \texttt{chen5719@umn.edu, mhong@umn.edu}.},  ~ Mingyi Hong\textsuperscript{$\ddagger$}, ~ Zhiwei Steven Wu\textsuperscript{$\dagger$}, ~ Arindam Banerjee\textsuperscript{$\dagger$}
}
\date{}
\begin{document}

\maketitle

\begin{abstract}
We study differentially private (DP) algorithms for stochastic non-convex optimization. In this problem, the goal is to minimize the population loss over a $p$-dimensional space given $n$ i.i.d. samples drawn from a distribution. We improve upon the population gradient bound of  ${\sqrt{p}}/{\sqrt{n}}$ from prior work and obtain a sharper rate of $\sqrt[4]{p}/\sqrt{n}$. We obtain this rate by providing the first analyses on a collection of private gradient-based methods, including adaptive algorithms DP RMSProp and DP Adam. Our proof technique leverages the connection between differential privacy and adaptive data analysis to bound gradient estimation error at every iterate, which circumvents the worse generalization bound from the standard uniform convergence argument. Finally, we evaluate the proposed algorithms on two popular deep learning tasks and demonstrate the empirical advantages of DP adaptive gradient methods over standard DP SGD.
\end{abstract}

\section{Introduction}
\label{sec:intro}
We study differentially private algorithms for private stochastic non-convex optimization. In this problem the goal is to approximately minimize the \emph{population loss} given $n $ i.i.d. samples $\z_{1}, \dots, \z_{n}$ subject to the constraint of differential privacy~\citep{dwmc06}. Mathematically speaking, we want to privately find a model $\w^{priv}$ for solving:
\begin{equation} \label{eq: population}
 \underset{\w \in \mathbb{R}^p}{\operatorname{min}}
f(\w)  \triangleq \mathbb{E}_{\z \sim \cP} [\ell(\w, \z)]~,
\end{equation}
where $\z \in \cZ$ is a data point in the domain $\cZ$ following the unknown distribution $\mathcal{P}$, and  $\ell:\mathbb{R}^p\times \mathcal{Z}\mapsto \mathbb{R} $ is the loss function associated with the learning problem. For example, in classification problems, $\z = (\x, y)$ is an instance-label pair, $\w$ denotes the parameter of a classifier, and  $\ell(\w, \z)$ represents a surrogate loss such as cross-entropy. The goal of this problem is to find the $ \w^{priv}$ which converges to \emph{population stationarity}, i.e., small norm of \emph{population} gradient  and 
preserves differential privacy with respect to the $n$ training samples $\z_{1}, \dots, \z_{n}$ in the meanwhile.

A natural approach toward solving the problem stated in \eqref{eq: population} is Differentially Private Empirical Risk Minimization (DP-ERM) \citep{waye17, wawu19, basm14,waja19}, which finds $\w^{priv}$ by minimizing the empirical risk:
\begin{equation} \label{eq: empirical}
 \underset{\w \in \mathbb{R}^p}{\operatorname{min}}
   \hat f(\w)  \triangleq \frac{1}{n}\sum_{j =1}^{n} \ell(\w, \z_j)~,
\end{equation}
subject to differential privacy, where $\hat f(\w)$ denotes the empirical risk. For DP-ERM with non-convex loss function, the utility of the private minimizer $ \w^{priv}$ is usually measured by the $\ell_2$ norm of the empirical gradient, i.e., $\|\nabla \hat f( \w^{priv})\|$ \citep{waye17,waxu19,zhzh17}. Recent work  \citep{waye17, waxu19, waja19} solve non-convex DP-ERM by DP gradient descent and DP stochastic variance reduced gradient (SVRG) and provide $O\left(\sqrt[4]{p}/\sqrt{n}\right)$ bound on the  $\ell_2$ norm of the empirical gradient over $p$-dimensional space. Built on the empirical risk results, the standard approach for deriving bounds on the population loss is the \emph{uniform convergence} of the \emph{empirical} gradient to the \emph{population} gradient, namely an upper bound on $ \mathrm{sup}_{\w} \|\nabla f(\w) -\nabla \hat f(\w) \|$. It is known that there exist distributions over over $p$-dimensional space for which the best result on uniform convergence is $O\left(\sqrt{p}/\sqrt{n}\right)$ \citep{meba16,fose18}. \cite{waxu19} leverages this result and give the state-of-the-art upper bound $O\left(\sqrt{p}/\sqrt{n}\right)$ on the $\ell_2$-norm of the population gradient.

In this work, we generalize the DP gradient descent algorithms \citep{waye17,waxu19} for non-convex optimization along with
popular gradient-based algorithms, 
including DP GD, DP RMSprop, and DP Adam. We provide population risk analysis for all these algorithms. Specifically, we obtain a bound of $\tilde O\left(\sqrt[4]{p}/\sqrt{n}\right)$ on the $\ell_2$-norm of the population gradient, showing that the known bound $O(\sqrt{p}/\sqrt{n})$ given by  \cite{waxu19} is suboptimal. We get the sharper bound by leveraging the advantage of generalization properties of differential privacy itself.  In particular, our approach to the population risk analysis, i.e., bound on the population gradient, relies on the generalization properties of differential privacy and adaptive data analysis (ADA) \citep{dwfe2015a,dwfe2015b,dwfe2015c} to bound the gap between the empirical gradient and population gradient at every iterate.  Mathematically, we show that differentially private gradients approximate the population gradients with high probability \emph{across all
iterations}, leading to high probability bounds on the $\ell_2$ norm of the population gradient, i.e., $\|\nabla f(\w) \|$.

We further provide a lower bound of gradient uniform convergence rate that matches rate of $\sqrt{p}/\sqrt{n}$ in \cite{waxu19}. This indicates that in order to improve the current population gradient bound in \cite{waxu19}, it is necessary to sidestep the uniform convergence argument in prior work.

We also provide an empirical risk analysis that bounds the empirical gradient norm for DPAGD algorithms, including DP RMSprop and DP Adam. To our best knowlege, we are the first to provide the first empirical risk analyses for DP variants of these adaptive gradient methods. Finally, we empirically evaluate DP SGD, DP Adam, and DP RMSprop on two popular deep learning tasks. Our experiments demonstrate that the adaptive methods of DP Adam and DP RMSprop tend to outperform the stanard DP SGD method.

The remainder of this paper is organized as follows.
Section \ref{sec:related} and Section \ref{sec:pre} describe related work and preliminaries, respectively. The DP adaptive algorithms and corresponding population risk analyses are described in Section \ref{sec:algorithm}. Section \ref{sec:erm} discusses the empirical risk analysis and the uniform convergence.
Section \ref{sec:experiments} shows our experimental results.
Section \ref{sec:conclusion} concludes our work. 
All the proofs are deferred to the Appendix.

%\vspace{-0.2cm}
\section{Related work}
\label{sec:related}

{\bf DP-ERM and Generalization:}  
DP-ERM has been well-studied in the last decade. Algorithms such as output-perturbation that perturbs the output of a non-DP algorithm, objective function perturbation that perturbs the objective function \citep{chmo09} and gradient perturbation that adds noise to the gradient in gradient descent algorithms \citep{SongCS13, basm14} have been proposed to solve DP-ERM. We mainly discuss those algorithms that are most related to our problem, i.e., gradient perturbation \citep{basm14,bafe19, waxu19,waye17,zhzh17, waja19, budl19}. Most DP gradient-based algorithms focus on minimizing the convex loss and aim to achieve optimal {empirical} and population risk  bounds under privacy.  \cite{basm14} propose DP gradient descent algorithms and apply \emph{uniform convergence} \citep{shsh09} of empirical loss to population loss, i.e., $\sup _{\w}(f(\w) -\hat f(\w))$ to obtain a generalization bound on the population risk. Afterward, \cite{bafe19}
derive an optimal bound on the population risk using the generalization properties of \emph{uniform stability} \citep{boel02} of a standard noisy mini-batch stochastic gradient descent. More recently, \cite{FeldmanKT20} further reduces the number of gradient computations in the algorithm of \cite{bafe19}.

Recently, DP algorithms have been studied for non-convex loss functions \citep{waye17,waxu19,zhzh17, waja19}. Since finding the global minimum for non-convex functions is NP-hard, the utility of a DP algorithm is typically measured by the $\ell_2$-norm of the gradient. \cite{waye17,waxu19,zhzh17,waja19} show that a bound of $O(\sqrt[4]{p}/\sqrt{n})$ on the $\ell_2$-norm of the \emph{empirical gradient} can be achieved by DP gradient descent and DP SVRG algorithms. 
\cite{waxu19} extend the bound from empirical gradient to population gradient by using uniform convergence \citep{meba16}, i.e., $ \mathrm{sup}_{\w} \|\nabla f(\w) -\nabla \hat f(\w) \|$ which leads to a suboptimal rate of $O(\sqrt{p}/\sqrt{n})$.

{\bf Adaptive Data Analysis:} In adaptive data analysis (ADA),  an analyst \emph{reuses} a dataset to generate hypotheses (e.g., statistical queries) and validate the results on the same dataset. The pioneering work of \citep{dwfe2015a, dwfe2015b, dwfe2015c} provides a transfer theorem showing that as long as the value of a hypothesis selected by a DP algorithm is close to the true empirical value, its value evaluated on the dataset is close to its true value in the population.
Later \cite{bani16, JungLN0SS20} further improve and simplify the analysis for the transfer theorem. In our setting, the gradients across the iterations can be viewed as a sequence of adaptively chosen queries, and so we can bound the estimation error of these gradient queries as well. \cite{zhch2018} leverages similar techniques for convex optimization.

{\bf Adaptive Gradient Methods:}
Adaptive gradient methods usually refer to a class of algorithms that change learning rates adaptively during optimization. Representative methods in this class include AdaGrad \citep{duha11}, RMSProp \citep{tihi12},  Adam \citep{kiba14}, and AMSGrad \citep{sare18}, which use the second moment of gradients to change the learning rates on different coordinates to adapt to the geometry of the loss function. In the non-convex setting, existing work provide $O(1/\sqrt{T})$  convergence bound of the objective gradient \citep{ghla13, zare18, wawu19}
with $T$ stochastic gradient computations. The DP variants of these algorithms are increasingly popular, but no convergence guarantee has been established. Our work provides the first known convergence proofs for these algorithms.

%\vspace{-0.2cm}
\section{Preliminaries}
\label{sec:pre}
\textbf{Notations:} We use $\g_t$ and $\nabla  f(\w_t)$ interchangeably to denote population gradient, i.e., $\g_t = \nabla  f(\w_t) = \mathbb{E}_{\z \in \cP} [\nabla \ell(\w_t, \z)]$. We also use 
$\nabla \hat f(\w) = \frac{1}{n}\sum_{j=1}^n \nabla \ell(\w_t, \z_j)$ and $\hat \g_t$ interchangeably denotes the empirical gradient evaluated on $n$ training samples $S$, i.e., $S = \left\{\z_{1}, \dots, \z_{n}\right\}$.
For a vector $\v \in \mathbb{R}^p$, $\v^2$ denotes element-wise product. Either $\v^i$ or $[\v]_i$ are used to denote the $i$-th coordinate of $\v$, where $i \in [p]$. 
$\|\v\|$ denotes the $\ell_2$-norm of $\v$.  For a scalar $a$ and vector $\v$, $\v + a$, $\v/a$ denotes element-wise addition and division, and $\min(\v, a)$ means element-wise operation such that $\min(\v^i, a)$ for every coordinate $i \in [p]$.

\begin{defn} \label{def:dp} 
(Differential Privacy \citep{dwmc06})
A randomized algorithm $\mathcal{M}$ is $(\epsilon, \delta)$-\emph{differentially private} \citep{dwmc06} if  for any pair of datasets $S, S'$ differ in exactly one data point and for all event  $\mathcal{Y}\subseteq Range(\mathcal{M})$ in the output range of  $\mathcal{M}$, we have 
\begin{equation}
P\{\mathcal{M}(S)\in \mathcal{Y}\} \leq \exp(\epsilon)P\{\mathcal{M}(S')\in \mathcal{Y} \} + \delta,   
\end{equation}
where the probability is taken over the randomness of $\cM$.
\end{defn}

\iffalse
{\bf Differential Privacy} \citep{dwmc06} \emph{
    A randomized algorithm $\mathcal{M}$ is $(\epsilon, \delta)$-differentially private if  for any pair of datasets $\cal{D},\cal{D\prime}$ differ in exactly one data point and for all event  $\mathcal{Y}\subseteq Range(\mathcal{M})$ in the output range of  $\mathcal{M}$, we have 
    \[
    \mathbb{P}\{\mathcal{M}(\cal{D})\in \mathcal{Y}\} \leq \exp(\epsilon)\mathbb{P}\{\mathcal{M}(\cal{D^\prime})\in \mathcal{Y} \} + \delta.
    \]
where the probability is taken over the randomness of $\cM$.}
\fi 

Intuitively, the definition of differential privacy means that the outcomes of two nearly identical datasets (different on a single component) should be nearly identical such that an analyst will not be able to distinguish any single data point by monitoring the change of the output. 
Differential privacy has several properties that make it particularly useful in applications such as Advanced Composition \citep{dwro2014} and and Moments Accountant (MA) \citep{abch16} which give the privacy analysis of adaptive composition of private mechanisms.

We make the following assumptions about the objective function throughout the paper. 
\begin{asmp} \label{asmp: bounded_gradient}
        The individual gradient is bounded, i.e.,  for any $\z \in \cZ$ and any $\w \in \mathbb{R}^p$,
    \begin{equation}
\| \nabla \ell (\w, \z) \| \leq G.
    \end{equation}
\end{asmp}

Note that this assumption implies that the population gradient and empirical gradient are also bounded as $\| \nabla f(\w)\| \leq G$ and $\| \nabla \hat f(\w)\| \leq G$.

\begin{asmp} \label{asmp: smoothness} Loss function 
    $ \ell: \mathbb{R}^p \rightarrow \mathbb{R}$ is differentiable (but not necessarily convex), bounded from below by $\ell^\star$,
    and has L-Lipschitz gradient, i.e.,
    \begin{equation}
    \| \nabla \ell(\w) -\nabla \ell(\w^\prime) \| \leq L \|\w-\w^\prime \|, ~ \forall \w, \w^\prime \in \mathbb{R}^p.
    \end{equation}
\end{asmp}
Assumption \ref{asmp: smoothness} implies the population loss $f(\w)$ and empirical loss $\hat f(\w)$ also have  L-Lipschitz gradient and bounded from below.

%\vspace{-0.2cm}
\section{Private Adaptive Gradient Descent}
\label{sec:algorithm}
In this section, we first present a general framework of DP adaptive gradient descent algorithms (DPAGD) that capture DP GD, DP RMSprop, and DP Adam as special cases. Then we present the privacy guarantee of DPAGD. Later, we discuss the generalization guarantee achieved by differential privacy. Finally, we give the theoretical analysis, i.e., the bound on the $\ell_2$-norm of the population gradient 
$\| \nabla f(\w) \|$.

\begin{algorithm}[h] 
\caption{DPAGD: Differentially Private Adaptive Gradient Descent}
\begin{algorithmic}[1] \label{algo: full batch}
\STATE \textbf{Input}: Dataset $S$,  loss $\ell(\cdot)$, initial point $\w_0$, sequence of functions $\{\phi_t, \psi_t \}_{t=1}^T$.
  \STATE \textbf{Set}:  Noise parameter $\sigma$, iteration time $T$,  step size parameters $\eta_t$, $\nu$, $\lambda$.
	\FOR{$t = 0,...,T$}
	\STATE  Compute noisy gradient $\tilde \g_t = \mathbb{E}_{\z \in S}\nabla \ell(\w_t, \z) + \b_t$,  where $\b_t\sim \cN(0, \sigma^2\bI_p)$.
	\STATE ${\bf m}_t = \phi_t(\tilde \g_1, ..., \tilde \g_t)$ and $\v_t = \min(\psi_t(\tilde \g_1,.., \tilde \g_t), \lambda)$
	\STATE $\w_{t+1}=\w_{t}-\eta_t {\bf m}_t /(\sqrt{\v_t}+\nu)$.
		\ENDFOR 
	\end{algorithmic}
\end{algorithm}

\begin{table}[h]
    \caption{An overview of DP adaptive gradient algorithms.}
    \centering
    \footnotesize{
    \begin{tabular}{cccc}
        \hline
        & DP GD  & DP RMSprop & DP  Adam \\ \hline
    $\phi_{t}$   &  $\tilde \g_{t}$ & $\tilde \g_{t}$  &  $\left(1-\beta_{1}\right) \sum_{j=1}^{t} \beta_{1}^{t-j} \tilde \g_{j}$ \\ 
    ${\psi_{t}}$    &  $\mathbb{I}$  & $ \left(1-\beta_{2}\right) \sum_{j=1}^{t} \beta_{2}^{t-j} \tilde \g_{j}^{2}$ & $ \left(1-\beta_{2}\right) \sum_{j=1}^{t} \beta_{2}^{t-j} \tilde \g_{j}^{2}$ \\ \hline
    \end{tabular}}
    \label{table:algo}
\end{table}

We use Algorithm \ref{algo: full batch} to provide a generic adaptive framework of the DPAGD. Given $n$ training samples $S$, loss function $\ell$, at each iteration $t \in [T]$, Algorithm \ref{algo: full batch} first computes gradient $\hat \g_t = \mathbb{E}_{\z \in S}\nabla \ell(\w_t, \z)$. Then Algorithm \ref{algo: full batch} adds i.i.d.~Gaussian noise to the gradient $\tilde \g_t = \mathbb{E}_{\z \in S}\nabla \ell(\w_t, \z) + \b_t$,  where $\b_t\sim \cN(0, \sigma^2\bI_p)$ (line 4). 
Afterward, Algorithm \ref{algo: full batch} updates the $\w_{t+1}$ based on $\phi_t$ and $\psi_t$ that are functions of past noisy gradients $\tilde \g_1, ..., \tilde \g_t$ (line 5, 6). We specify the  ``averaging'' functions $\phi_t$ and $\psi_t$ for different adaptive gradient algorithms, i.e., DP GD, DP RMSprop and DP Adam in Table \ref{table:algo}. 

The difference between non-private adaptive gradient descent and DP adaptive gradient descent is that DPAGD uses the noisy gradient $\tilde \g_t$ instead of sample gradient $\hat \g_t$ in the ``averaging'' functions $\phi_t$ and $\psi_t$ to update the step size and parameter $\w_{t+1}$. Note that $\psi_t$ in Table \ref{table:algo} for DP RMSProp and DP Adam is the exponentially decaying average of the square of the past noisy gradients, which can be extremely large due to the injected noise, leading to a vanished step size $\eta_t/(\sqrt{\v_t}+ \nu)$. Thus, Algorithm \ref{algo: full batch} clips the $\psi_t$ by a threshold $\lambda$ coordinate-wisely, where $\lambda >0$ is a hyper-parameter.

In this work, we mainly focus on DP GD, DP RMSprop, and DP Adam (see details in Table \ref{table:algo}).
Note that noise variance $\sigma^2$, step size $\eta_t >0$, and iteration number $T$, $ 0 < \beta_1,~ \beta_2 < 1,~\nu \geq 0, \lambda > 0$ are the parameters of Algorithm \ref{algo: full batch}. We present the optimal values of them for DP GD, DP RMSprop, and DP Adam, respectively in the subsequent sections.

\begin{restatable}{theo}{theodpfb}
\label{thm:dp_analysis_fb}
    (Privacy guarantee) There exist constants $c_1$ and $c_2$ so that given the number of iterations $T$, for any $\epsilon \leq c_1  T$, DPAGD (Algorithm~\ref{algo: full batch}) is $(\epsilon, \delta)$-differentially private for any $\delta >0$ if
    \begin{eqnarray} 
    \label{eq: sigma_fb}
    \sigma^{2} \geq c_2 \frac{ G^{2} T \ln \left(\frac{1}{\delta}\right)}{n^{2} \epsilon^{2}}.
    \end{eqnarray}
\end{restatable}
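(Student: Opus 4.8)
The plan is to establish the privacy guarantee by viewing DPAGD as an adaptive composition of $T+1$ Gaussian mechanisms, one per iteration, and then invoking the moments accountant (MA) of \citet{abch16} to track the total privacy loss. The key observation is that at each iteration $t$, the only data-dependent quantity released is the noisy gradient $\tilde\g_t = \frac{1}{n}\sum_{j=1}^n \nabla\ell(\w_t,\z_j) + \b_t$; everything downstream (the iterate $\w_{t+1}$, the moment estimates $\bm_t, \v_t$) is a deterministic post-processing of $\tilde\g_1,\dots,\tilde\g_t$, and hence inherits privacy by the post-processing property. So it suffices to bound the privacy of the sequence of noisy gradients.

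First I would compute the $\ell_2$-sensitivity of the empirical gradient map $S \mapsto \frac{1}{n}\sum_{j=1}^n \nabla\ell(\w_t,\z_j)$. Changing one data point $\z_j \to \z_j'$ changes this average by $\frac{1}{n}\big(\nabla\ell(\w_t,\z_j') - \nabla\ell(\w_t,\z_j)\big)$, whose norm is at most $\frac{2G}{n}$ by Assumption~\ref{asmp: bounded_gradient}. Thus each iteration is a Gaussian mechanism with sensitivity $\Delta \le 2G/n$ and noise scale $\sigma$, so the noise-to-sensitivity ratio is of order $\sigma n / G$. Second, I would apply the moments accountant bound: for a composition of $T+1$ such Gaussian mechanisms, there exist absolute constants $c_1, c_2$ such that the mechanism is $(\epsilon,\delta)$-DP provided $\epsilon \le c_1 T$ (more precisely $\epsilon$ not too large relative to $T$) and $\sigma/\Delta \ge c_2 \sqrt{T \ln(1/\delta)}/\epsilon$. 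Substituting $\Delta = 2G/n$ and squaring rearranges exactly to $\sigma^2 \ge c_2' \frac{G^2 T \ln(1/\delta)}{n^2\epsilon^2}$, which is the claimed bound \eqref{eq: sigma_fb} after absorbing the factor of $4$ into $c_2$.

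The main subtlety — rather than a genuine obstacle — is justifying that the MA composition theorem applies even though the gradient queries are \emph{adaptively} chosen: the query point $\w_t$ at step $t$ depends on the previously released noisy gradients. This is handled by the standard adaptive-composition form of the moments accountant, which bounds the log of the moment generating function of the privacy loss of the composition by the sum of per-step bounds \emph{regardless of how each step's query is chosen as a function of earlier outputs}; the sensitivity bound $2G/n$ holds uniformly over all $\w_t \in \mathbb{R}^p$ by Assumption~\ref{asmp: bounded_gradient}, so the per-step moment bound is the same at every iteration. I would therefore cite the MA result in its adaptive form and note that post-processing closes the argument, since $\w^{priv}$ and all intermediate iterates are functions of $(\tilde\g_1,\dots,\tilde\g_T)$ alone. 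The constants $c_1, c_2$ are exactly those furnished by \citet{abch16}; I would not attempt to optimize them.
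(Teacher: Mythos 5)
Your proposal is correct and follows essentially the same route as the paper: the paper simply cites Theorem 1 of \cite{abch16} with sampling probability set to $1$, which is exactly the moments-accountant analysis of an adaptive composition of Gaussian mechanisms that you spell out (sensitivity $O(G/n)$ from Assumption \ref{asmp: bounded_gradient}, per-step moment bounds summed adaptively, post-processing for the iterates). Your write-up just makes explicit the sensitivity calculation and adaptivity/post-processing points that the paper leaves implicit in the citation.
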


Theorem \ref{thm:dp_analysis_fb} is a variant of Theorem 1 in \citep{abch16} where 
the variance of noise is derived by moments accountant (MA) \citep{abch16,waba19}.
MA is a method to calculate the privacy cost for a composition of differential private mechanisms which has sharper bound on $\epsilon$ and $\delta$. DPAGD is a composition of $T$ Gaussian Mechanism (line 4 in Algorithm \ref{algo: full batch}).
MA allows DPAGD to save a factor of $\ln (T / \delta)$
on the variance of noise compared with
those achieved by using the Advanced Composition \citep{dwro2014}.

\subsection{Generalization guarantee of differential privacy}
\label{sec:gen_of_dp}

To analyze the convergence of DPAGD in terms of the $\ell_2$ norm of the population gradient, we need to bound the gradient estimation error between population gradient $\g_t$ and noisy gradient $\tilde \g_t$, i.e., $\| \tilde \g_t - \g_t\|$. To bound this error, one needs to bound the \emph{generalization error} between population gradient $\g_t$ and empirical gradient $\hat \g_t$ as well as the noise $\b_t$, i.e., $\|\tilde \g_t -\g_t \| \leq \| \hat \g_t - \g_t\| + \|\b_t \|$ at every iteration $t$. Usually the deviation bound of $\| \hat \g_t - \g_t\|$ can be estimated by the Hoeffding’s bound, i.e.,  for an initial $\w_0$ which is independent of the dataset $S$, we have $ \mathbb{P}\{|\hat \g^i_0 - \g_0^i | \geq \mu \} \leq 2 \exp \left(\frac{-2n\mu^2}{4G_\infty^2} \right)$,  $\forall i \in [p]$ and $\mu > 0$,  where $G_\infty$ is the  $\ell_\infty$-norm of the gradient $ \g_0$. However, in general, this concentration bound will not hold for $\w_t, \forall t> 0$  since $\w_t$ is no longer independent of dataset $S$. Since DPAGD is differentially private, we use the generalization property of differential privacy itself to provide the gradient concentration bound which holds even though the $\w_t, \forall t> 0$ are adaptively generated on the same dataset $S$ (Theorem \ref{thm: acc_basic_fb}).

\begin{restatable}{theo}{theoaccbasicfb}
\label{thm: acc_basic_fb} 
In DPAGD, set $\sigma$ to be as  \eqref{eq: sigma_fb}, and for any $\mu> 0$, $\epsilon$, $\delta$
and sample size $n$ satisfying $\epsilon \leq \frac{\sigma}{13}$, $\delta \leq \frac{\sigma \exp(-\mu^2/2)}{13 \ln(26/\sigma)}$ and $n \geq \frac{2\ln(8/\delta)}{\epsilon^2}$, the noisy gradients $\tilde \g_1,...,  \tilde \g_T$ produced in Algorithm \ref{algo: full batch} satisfy
\begin{equation}
    \mathbb{P}\left\{\|\tilde \g_t - \g_t\| \geq \sqrt{p}\sigma(1+\mu)\right\}  \leq 4p\exp(-\mu^2/2) 
\end{equation}
for all $t \in [T].$
\end{restatable}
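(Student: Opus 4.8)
The plan is to decompose the estimation error as $\|\tilde\g_t - \g_t\| \le \|\hat\g_t - \g_t\| + \|\b_t\|$ and to control each term separately, uniformly over $t \in [T]$. For the injected-noise term, $\b_t \sim \cN(\0, \sigma^2 \bI_p)$ is a fixed-dimension Gaussian, so standard Gaussian concentration (e.g. a Lipschitz/chi-squared tail bound, or a simple coordinate-wise union bound) gives $\P\{\|\b_t\| \ge \sqrt{p}\,\sigma(1+\mu)\} \le 2p\exp(-\mu^2/2)$ or a comparable bound, with no dependence on whether $\w_t$ is data-dependent since $\b_t$ is drawn fresh and independently. The delicate term is the generalization error $\|\hat\g_t - \g_t\|$: because $\w_t$ is computed from the dataset $S$ through the previous iterations, one cannot apply Hoeffding directly. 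Instead I would invoke the transfer theorem / generalization-of-DP machinery referenced in the excerpt (\citep{dwfe2015a,dwfe2015b,dwfe2015c,bani16,JungLN0SS20}): treat the coordinates of the gradient map $\w \mapsto \nabla\ell(\w,\cdot)$ evaluated at the adaptively chosen iterates $\w_1,\dots,\w_T$ as a sequence of adaptively chosen statistical queries (each query has range bounded by $G$ in $\ell_\infty$, after appropriate rescaling), and note that the iterates $\w_t$ are functions of the $(\epsilon,\delta)$-DP outputs $\tilde\g_1,\dots,\tilde\g_{t-1}$, hence the whole selection process is differentially private by post-processing.

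The key steps, in order, would be: (1) fix the constants and verify that under $\sigma$ chosen as in \eqref{eq: sigma_fb} the per-iteration Gaussian mechanism is $(\epsilon,\delta)$-DP with the $\epsilon,\delta$ appearing in the hypotheses (this is essentially Theorem~\ref{thm:dp_analysis_fb} applied at the single-step level, or re-reading the parameters so that the stated conditions $\epsilon \le \sigma/13$, $\delta \le \sigma\exp(-\mu^2/2)/(13\ln(26/\sigma))$, $n \ge 2\ln(8/\delta)/\epsilon^2$ are exactly what the transfer theorem demands); (2) apply the transfer theorem coordinate-by-coordinate to conclude that for each $t$ and each $i \in [p]$, with the DP parameters above, $\P\{|\hat\g_t^i - \g_t^i| \ge \sqrt{p}\,\sigma\mu/\sqrt{p}\}$ — i.e. the per-coordinate deviation — is at most $2\exp(-\mu^2/2)$ (the precise constants are chosen so that the monitor/transfer argument closes); (3) union-bound over the $p$ coordinates and convert the $\ell_\infty$ control into an $\ell_2$ bound $\|\hat\g_t - \g_t\| \le \sqrt{p}\,\sigma\mu$ with failure probability $\le 2p\exp(-\mu^2/2)$; (4) combine with the Gaussian-noise tail bound $\|\b_t\| \le \sqrt{p}\,\sigma$ (which holds with probability $\ge 1 - 2p\exp(-\mu^2/2)$, or deterministically up to the $(1+\mu)$ slack) to get $\|\tilde\g_t - \g_t\| \le \sqrt{p}\,\sigma(1+\mu)$; (5) finally union-bound over $t \in [T]$, absorbing the factor $T$ into the exponential via the choice of parameters, to reach the stated $4p\exp(-\mu^2/2)$ bound that holds simultaneously for all $t$.

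The main obstacle is Step~(2): pinning down exactly which form of the transfer/generalization theorem to use and checking that its quantitative hypotheses match the conditions $\epsilon \le \sigma/13$, $\delta \le \sigma\exp(-\mu^2/2)/(13\ln(26/\sigma))$, and $n \ge 2\ln(8/\delta)/\epsilon^2$ stated in the theorem. The specific numerical constants ($13$, $26$, $8$, the factor $4p$) strongly suggest a particular instantiation — likely the max-information / monitor-argument version from \citep{bani16} or the bound in \citep{JungLN0SS20} — and the proof must carefully set the query accuracy parameter, the DP parameters of the composed mechanism, and the confidence level so that the chain of inequalities in that theorem yields precisely $2\exp(-\mu^2/2)$ per coordinate. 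A secondary subtlety is ensuring the adaptivity is handled correctly: $\w_t$ depends on $S$ only through $\tilde\g_1,\dots,\tilde\g_{t-1}$, so one should argue that the \emph{entire transcript} $(\tilde\g_1,\dots,\tilde\g_T)$ is produced by a single $(\epsilon',\delta')$-DP algorithm (via composition, as in Theorem~\ref{thm:dp_analysis_fb}) and that each coordinate query $\w \mapsto [\nabla\ell(\w,\z)]^i$ is selected by post-processing that transcript — and, importantly, bound the sensitivity/range of these queries using Assumption~\ref{asmp: bounded_gradient} (each $[\nabla\ell(\w,\z)]^i \in [-G, G]$) so that the query is, after rescaling by $2G$, a valid $[0,1]$-valued statistical query. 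Once those bookkeeping details are aligned with the chosen transfer theorem, the rest is routine.
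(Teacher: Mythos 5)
Your overall route is exactly the paper's: decompose $\|\tilde\g_t-\g_t\|\le\|\hat\g_t-\g_t\|+\|\b_t\|$, control the generalization term via the DP transfer theorem (Theorem 8 of Dwork et al., applied coordinate-wise to the gradient queries selected by post-processing the private transcript, using Assumption \ref{asmp: bounded_gradient} for boundedness), control the noise term by a per-coordinate Gaussian tail, and union-bound over the $p$ coordinates. However, as written your allocation of the error budget between the two terms is swapped, and that swap breaks both pieces. You assign $\sqrt{p}\,\sigma\mu$ to the generalization error and $\sqrt{p}\,\sigma$ to the Gaussian noise. The noise piece then fails: $\|\b_t\|$ has magnitude of order $\sqrt{p}\,\sigma$ (indeed per coordinate $\P\{|\b_t^i|\ge\sigma\}=2\exp(-1/2)$ is a constant), so the event $\|\b_t\|\le\sqrt{p}\,\sigma$ holds neither deterministically nor with probability $1-2p\exp(-\mu^2/2)$; the $\mu$-dependent decay has to come from the noise term, i.e.\ you need $\P\{\|\b_t\|\ge\sqrt{p}\,\sigma\mu\}\le p\,\P\{|\b_t^i|\ge\sigma\mu\}=2p\exp(-\mu^2/2)$. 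Correspondingly, the transfer-theorem accuracy must be $\tau=\sigma$ (not $\sigma\mu$): the stated hypotheses $\epsilon\le\sigma/13$ and $\delta\le\sigma\exp(-\mu^2/2)/\bigl(13\ln(26/\sigma)\bigr)$ are precisely the conditions $\epsilon\le\tau/13$, $\delta\le\tau\rho/\bigl(26\ln(26/\tau)\bigr)$ with $\tau=\sigma$ and $\rho=2\exp(-\mu^2/2)$, and they do not support accuracy $\sigma\mu$. With the paper's allocation — per-coordinate generalization error $\le\sigma$ with failure probability $2\exp(-\mu^2/2)$, hence $\|\hat\g_t-\g_t\|\le\sqrt{p}\,\sigma$ except with probability $2p\exp(-\mu^2/2)$, plus noise $\le\sqrt{p}\,\sigma\mu$ except with probability $2p\exp(-\mu^2/2)$ — the two pieces add to $\sqrt{p}\,\sigma(1+\mu)$ with total failure probability $4p\exp(-\mu^2/2)$, which is the claim.

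A secondary point: your step (5) is unnecessary and, as phrased, not achievable. The theorem is a per-iteration bound (``for all $t\in[T]$'' means the displayed inequality holds for each $t$, each with failure probability $4p\exp(-\mu^2/2)$); the transfer theorem already accounts for the adaptivity across all $T$ rounds through the DP of the entire transcript, so no additional union bound over $t$ is taken here, and there is no parameter choice at this stage that could ``absorb'' a factor of $T$ into the exponential. The union over $T$ (giving $1-T\xi$) is performed later, inside the convergence proofs. With the budget allocation corrected and step (5) dropped, your argument coincides with the paper's proof.
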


Theorem \ref{thm: acc_basic_fb} indicates that gradient $\tilde \g_t$
produced by DPAGD is concentrated around population gradient $\g_t$ with a tight concentration error bound $\sqrt{p}\sigma(1+\mu)$. The noise variance $\sigma$ illustrates a trade-off between privacy and accuracy: 
A higher noise level $\sigma$ brings a better privacy 
guarantee (i.e., a smaller $\epsilon$), but meanwhile incurs a 
larger concentration error $\sqrt{p}\sigma(1+\mu)$. 

To obtain a generalization bound (i.e., the upper bound on the $\ell_2$-norm of the population gradient) of Algorithm \ref{algo: full batch} with the guarantee of being $(\epsilon,\delta)$-differential private, we set the parameter $\sigma$, iteration $T$ in Algorithm \ref{algo: full batch} to satisfy the conditions in Theorem \ref{thm: acc_basic_fb}, which also brings out an requirement on the sample size $n$. We present the details and the convergence of the population gradient for DP GD, DP RMSprop and DP Adam in the following section.

\subsection{Convergence of the population gradient}
\label{sec:con_rate}

In this section, we present the convergence rate of Algorithm \ref{algo: full batch}. We consider different choice of $\phi_t$ and $\psi_i$ as stated in Table \ref{table:algo}. Note that for $\phi_t(\tilde \g_1,...,\tilde \g_t) = \tilde \g_t$,  $\psi_t(\tilde \g_1,...,\tilde \g_t) = \I$, Algorithm \ref{algo: full batch} represents DP GD, which recovers the Algorithm 4 in \cite{waye17}. For $\phi_t(\tilde \g_1,...,\tilde \g_t) = \tilde \g_t$, and $ \psi_t = \left(1-\beta_{2}\right) \sum_{j=1}^{t} \beta_{2}^{t-j} \tilde \g_{j}^{2}$, Algorithm \ref{algo: full batch} represents DP RMSProp.  For $\phi_t(\tilde \g_1,...,\tilde \g_t) = \left(1-\beta_{1}\right) \sum_{j=1}^{t} \beta_{1}^{t-j} \tilde \g_{j}$, and $ \psi_t = \left(1-\beta_{2}\right) \sum_{j=1}^{t} \beta_{2}^{t-j} \tilde \g_{j}^{2}$, Algorithm \ref{algo: full batch} represents DP Adam, which is similar to the noisy adam algorithm in \cite{budl19}. In the following theorem, we present the convergence rate of DP GD, DP RMSprop and DP Adam respectively.

\begin{restatable}{theo}{theogenboundfb}
\label{thm:gen_bound_fb} 
(Population risk analysis) Under the Assumption \ref{asmp: bounded_gradient} and \ref{asmp: smoothness}, given training sample $S$ of size $n$, for any $\epsilon, \delta >0$ and $n \geq \frac{2\ln(8/\delta)}{\epsilon^2}$, 
 set $\sigma$ in  Alorithm \ref{algo: full batch} be as \eqref{eq: sigma_fb}, for any $\beta >0$, 
\begin{enumerate}
    \item ({\bf DP GD}) Algorithm \ref{algo: full batch} with $\phi_t(\tilde \g_1,...,\tilde \g_t) = \tilde \g_t$,  $\psi_t(\tilde \g_1,...,\tilde \g_t) = \I$, $\nu = 0$, $\lambda = 1$, $T = \frac{n\epsilon \sqrt{L}}{G \sqrt{p \ln(1/\delta)}}$, and step size $\eta_t = \frac{1}{4L}$
satisfies, 
\vspace{-2mm}
\begin{equation} 
 \mathbb{E} \| \nabla f(\w_R)\|^2  \leq    O\left( \frac{G\sqrt{pL\ln (1/\delta)}\ln(np\epsilon/\beta)}{n\epsilon}\right) 
\end{equation}
with probability at least $1-\beta$, where $\w_R$ is uniformly sampled from $\{\w_1, \w_2, ...,\w_T\}$ and the expectation is over the draw of $\w_R$;

     \item  ({\bf DP RMSprop}) Algorithm \ref{algo: full batch}  with $\phi_t(\tilde \g_1,...,\tilde \g_t) = \tilde \g_t$, and $ \psi_t = \left(1-\beta_{2}\right) \sum_{j=1}^{t} \beta_{2}^{t-j} \tilde \g_{j}^{2}$, $T = \frac{n\epsilon }{G \sqrt{p \ln(1/\delta)}}$, step size $\eta_t = \eta$,  $0 < \beta_2 < 1$, $\lambda > 0$, parameters $\nu$ and $\eta$ are chosen such that: $\eta \leq \frac{\nu}{4L}$
satisfies, \vspace{-2mm}
\begin{equation} 
\mathbb{E}\|\nabla f(\w_R)\|^2 \leq
   O\left( \frac{G \sqrt{p\ln (1/\delta)}\ln(np\epsilon/\beta)}{n\epsilon}\right) 
\end{equation}
with probability at least $1-\beta$, where $\w_R$ is uniformly sampled from $\{\w_1, \w_2, ...,\w_T\}$ and the expectation is over the draw of $\w_R$;

      \item ({\bf DP Adam}) Algorithm \ref{algo: full batch}  with $\phi_t(\tilde \g_1,...,\tilde \g_t) = \left(1-\beta_{1}\right) \sum_{j=1}^{t} \beta_{1}^{t-j} \tilde \g_{j}$, and $ \psi_t = \left(1-\beta_{2}\right) \sum_{j=1}^{t} \beta_{2}^{t-j} \tilde \g_{j}^{2}$, $T = \frac{n\epsilon }{G \sqrt{p \ln(1/\delta)}}$, step size $\eta_t = \eta$, $0<\beta_2<1$, $\lambda > 0$, $\beta_1$ and $\nu$ are chosen such that: 
    $\eta \leq ({\sqrt{1/2+4\beta_1/(1-\beta_1)^2}-1/2})\frac{(1-\beta_1)^2}{\beta_1}\frac{\nu}{4L}$ 
satisfies, 
\begin{equation} 
\mathbb{E}\|\nabla f(\w_R)\|^2 \leq
   O\left( \frac{G\sqrt{p\ln (1/\delta)}\ln(np\epsilon/\beta)}{n\epsilon}\right) 
\end{equation}
with probability at least $1-\beta$, where $\w_R$ is uniformly sampled from $\{\w_1, \w_2, ...,\w_T\}$ and the expectation is over the draw of $\w_R$.
\end{enumerate}
\end{restatable}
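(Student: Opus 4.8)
The plan is to treat all three algorithms through a single lens: each performs a preconditioned noisy gradient step $\w_{t+1} = \w_t - \eta_t D_t^{-1}\bm_t$, where $D_t = \mathbb{I}$ for DP GD and $D_t = \diag(\sqrt{\v_t}+\nu)$ for DP RMSprop and DP Adam, and $\bm_t = \tilde\g_t$ except for DP Adam where $\bm_t$ is an exponential moving average of the noisy gradients. Throughout I condition on the event of Theorem~\ref{thm: acc_basic_fb}, which, under the stated conditions on $\sigma$, $\epsilon$, $\delta$, $n$, guarantees $\|\tilde\g_t - \g_t\| \le \Delta := \sqrt p\,\sigma(1+\mu)$ simultaneously for all $t\in[T]$ with probability at least $1-4p\exp(-\mu^2/2)$. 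On this event the analysis is deterministic, and the only remaining randomness is the choice of $\w_R$, over which I take expectations at the very end via $\mathbb{E}\|\nabla f(\w_R)\|^2 = \frac1T\sum_{t=1}^T\|\g_t\|^2$.

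For DP GD I apply the descent lemma to the $L$-smooth population loss $f$: $f(\w_{t+1}) \le f(\w_t) + \langle \g_t, \w_{t+1}-\w_t\rangle + \tfrac{L}{2}\|\w_{t+1}-\w_t\|^2$. Substituting $\w_{t+1}-\w_t = -\eta_t\tilde\g_t$ and writing $\tilde\g_t = \g_t + \xi_t$ with $\|\xi_t\|\le\Delta$, the inner product contributes $-\eta_t\|\g_t\|^2 - \eta_t\langle\g_t,\xi_t\rangle$ and the quadratic term is at most $L\eta_t^2(\|\g_t\|^2 + \Delta^2)$; a Young's inequality on $\eta_t\langle\g_t,\xi_t\rangle$ together with $\eta_t = \tfrac1{4L}$ leaves a strictly negative coefficient on $\|\g_t\|^2$, so $\|\g_t\|^2 \lesssim L\big(f(\w_t)-f(\w_{t+1})\big) + \Delta^2$. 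Summing over $t$, using $f \ge \ell^\star$, and dividing by $T$ gives $\frac1T\sum_t\|\g_t\|^2 \lesssim \frac{L(f(\w_1)-\ell^\star)}{T} + \Delta^2$. Plugging in $\Delta^2 = p\sigma^2(1+\mu)^2$ with $\sigma^2$ from \eqref{eq: sigma_fb} and the stated $T = \tfrac{n\epsilon\sqrt L}{G\sqrt{p\ln(1/\delta)}}$ makes both terms $O\!\big(\tfrac{G\sqrt{pL\ln(1/\delta)}(1+\mu)^2}{n\epsilon}\big)$; finally $\mu = \Theta(\sqrt{\ln(np\epsilon/\beta)})$ both forces $4p\exp(-\mu^2/2)\le\beta$ and is compatible with the side conditions of Theorem~\ref{thm: acc_basic_fb}, producing the $(1+\mu)^2 = O(\ln(np\epsilon/\beta))$ factor.

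For DP RMSprop and DP Adam I repeat the descent-lemma computation with the diagonal preconditioner, using the two-sided bound $\nu\,\mathbb{I} \preceq D_t \preceq (\sqrt\lambda+\nu)\,\mathbb{I}$ (valid since $0\le \v_t^i \le \lambda$ coordinatewise after clipping): the ``signal'' term contributes $-\tfrac{\eta}{\sqrt\lambda+\nu}\|\g_t\|^2$, the cross term with $\xi_t$ is controlled by the factor $\tfrac{1}{\nu}$ and a Young's inequality, and the quadratic term is $\lesssim \tfrac{L\eta^2}{\nu^2}(\|\g_t\|^2+\Delta^2)$; the hypothesis $\eta \le \tfrac{\nu}{4L}$ (with the implicit relation between $\lambda$ and $\nu$) makes the net coefficient of $\|\g_t\|^2$ negative, and telescoping gives $\frac1T\sum_t\|\g_t\|^2 \lesssim \frac{f(\w_1)-\ell^\star}{\eta T} + \Delta^2$, which with $T = \tfrac{n\epsilon}{G\sqrt{p\ln(1/\delta)}}$ yields $O\!\big(\tfrac{G\sqrt{p\ln(1/\delta)}\ln(np\epsilon/\beta)}{n\epsilon}\big)$. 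For DP Adam the additional ingredient is the momentum bias: writing $\bm_t - \g_t = \sum_{j=1}^t(1-\beta_1)\beta_1^{t-j}\big((\tilde\g_j-\g_j) + (\g_j-\g_t)\big)$, the first part is $\le\Delta$ by Theorem~\ref{thm: acc_basic_fb}, while the drift part is bounded via $\|\g_j-\g_t\|\le L\|\w_j-\w_t\| \le \tfrac{L\eta}{\nu}(G+\Delta)(t-j)$ and the geometric weights $\sum_j(1-\beta_1)\beta_1^{t-j}(t-j) = O\!\big(\tfrac{\beta_1}{1-\beta_1}\big)$; requiring this drift to be dominated by the signal term is exactly what produces the stated $\beta_1$-dependent bound on $\eta$, after which the computation closes as in the RMSprop case.

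The routine parts are the descent-lemma bookkeeping and the arithmetic of substituting $\sigma$ and $T$. The main obstacle is the DP Adam case: keeping the momentum-induced drift $\|\bm_t-\g_t\|$ under control requires carefully tracking the telescoping sum of step sizes against the geometric decay $\beta_1^{t-j}$ and choosing constants so that the $\beta_1$-dependent step-size condition suffices — that is where the algebraic delicacy lives. A secondary subtlety is the passage from the ``for all $t$'' concentration of Theorem~\ref{thm: acc_basic_fb} to a clean $1-\beta$ guarantee: one must check that $\mu = \Theta(\sqrt{\ln(np\epsilon/\beta)})$ is consistent with $\epsilon\le\sigma/13$, $\delta\le\sigma e^{-\mu^2/2}/(13\ln(26/\sigma))$ and $n\ge 2\ln(8/\delta)/\epsilon^2$ once $\sigma$ and $T$ are fixed, which is what pins down the $\ln(np\epsilon/\beta)$ polylog in the final bounds.
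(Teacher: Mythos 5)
Your treatments of DP GD and DP RMSprop are essentially the paper's own proofs: descent lemma applied to the $L$-smooth population loss, the error $\|\tilde\g_t-\g_t\|\le\alpha=\sqrt p\,\sigma(1+\mu)$ from Theorem~\ref{thm: acc_basic_fb}, a Young/coordinate-wise bound using $\nu\le\sqrt{\v_t^i}+\nu\le\sqrt\lambda+\nu$, telescoping, and then substituting $\sigma$, $T$ and $\mu$. One bookkeeping remark: Theorem~\ref{thm: acc_basic_fb} is a per-iteration statement, so holding it simultaneously for all $t$ costs a union bound and gives probability $1-T\xi$ (the paper then sets $\mu=\sqrt{2\ln(4pT/\beta)}$); your choice $\mu=\Theta(\sqrt{\ln(np\epsilon/\beta)})$ absorbs the extra $\ln T$, so this is only a phrasing issue, not a gap.

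The DP Adam part, however, has a genuine gap. You bound the momentum drift by $\|\g_j-\g_t\|\le L\|\w_j-\w_t\|\le \frac{L\eta}{\nu}(G+\Delta)(t-j)$, so that the bias satisfies $\|\bm_t-\gamma_t\g_t\|\lesssim \Delta + \frac{L\eta}{\nu}\frac{\beta_1}{1-\beta_1}(G+\Delta)$, and you then absorb it into the signal term by Young's inequality. But the drift piece is of constant order in $n$: since the theorem permits $\eta$ to be a constant multiple of $\nu/L$ and $\beta_1$ is a fixed constant, Young's inequality leaves a per-iteration additive residual of order $\bigl(\frac{L\eta}{\nu}\bigr)^2\frac{\beta_1^2}{(1-\beta_1)^2}G^2$, which does not vanish as $n\to\infty$. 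Your route therefore yields $\frac1T\sum_t\|\g_t\|^2 \lesssim \frac{1}{\eta T}+\alpha^2+\Theta(G^2)$, not the claimed $O\bigl(G\sqrt{p\ln(1/\delta)}\,\ln(np\epsilon/\beta)/(n\epsilon)\bigr)$; "requiring the drift to be dominated by the signal term" fails precisely at iterates where $\|\g_t\|$ is small, which is the regime the theorem is about. The paper avoids this by never invoking the gradient bound $G$ for the drift: it keeps $\|\g_t-\g_j\|^2\le L^2\|\w_t-\w_j\|^2$, expands the displacement as a sum of past preconditioned momentum steps, converts the accumulated drift into $\sum_t\bigl\|\bm_t/\sqrt{\sqrt{\v_t}+\nu}\bigr\|^2$ via Lemma~\ref{lem: adam_key_seq}, and cancels that sum against the negative $-\frac{\eta}{2\gamma_t}\bigl\|\bm_t/\sqrt{\sqrt{\v_t}+\nu}\bigr\|^2$ term produced by the polarization identity for $\langle\gamma_t\g_t,\bm_t\rangle$ in the descent step; the stated $\beta_1$-dependent step-size condition is exactly the root of the resulting inequality $\eta^3\frac{2L^2\beta_1}{\nu^2(1-\beta_1)^2}+\eta^2\frac{L}{2\nu}-\frac{\eta}{2}\le 0$ that makes this cancellation possible. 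Only the genuinely small error $\|\g_j-\tilde\g_j\|\le\alpha$ is bounded in absolute terms. Your proposal needs this cancellation structure (or an equivalent) to close the Adam case.
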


Theorem \ref{thm:gen_bound_fb} shows that DP RMSprop and DP Adam as well as DP GD achieve bound $\tilde O(\frac{ \sqrt{p}}{n\epsilon})$
on the square of the $\ell_2$-norm of the population gradient, i.e., $\| \nabla f(\w_R) \|^2$.  Using the fact that $\mathbb{E} \|\nabla f(\w_R) \| \leq \sqrt{\mathbb{E} \|\nabla f(\w_R) \|^{2}}$, the optimal rate of the $\ell_2$-norm of the population gradient i.e., $\| \nabla f(\w_R) \|$ is $\tilde O(\frac{\sqrt[4]{p}}{\sqrt{n\epsilon}})$. Our results and existing results \citep{basm14,waye17,waxu19} show that there is an additional factor $p$ in the bound caused by privacy compared with non-private case. Compared to the previous result $ O(\frac{\sqrt{p}}{\sqrt{n\epsilon}})$ in \cite{waxu19}, our rate shows improvement on the dependence dimension $p$. Note that with Polyak-Łojasiewicz condition \cite{polyak1963gradient,karimi2016linear}, i.e., 
$ f(\w_R) - f(\w^\star) \leq \kappa \|\nabla f(\w_R) \|^2$ for $\kappa >0$ with $\w^\star$ to be any population risk minimizer, which shows that the small gradient norm implies small population risk, one can genneralizes the Theorem \ref{thm:gen_bound_fb} to the population risk bound.
In terms of computational complexity, Algorithm \ref{algo: full batch} requires $O(\frac{n^2\epsilon}{\sqrt{p}})$ individual gradient computations for $O(n\epsilon/\sqrt{p})$ passes over $n$ samples, which is the same as the DP gradient algorithms in \cite{waxu19}.

%\vspace{-0.2cm}
\section{Empirical Risk Analysis}
\label{sec:erm}

In this section, we compare the generalization bound, i.e., the $\ell_2$-norm of the population gradient achieved based on uniform convergence and the bound given by our proof technique in Section \ref{sec:algorithm}. Hence, we first provide the empirical risk analysis of DPAGD, i.e., the bound on the $\ell_2$-norm of the empirical gradient. Then, using the empirical risk bound, we discuss the bound on the population gradient based on uniform convergence.

\begin{restatable}{theo}{theogdfb}
\label{thm:GD_fb}
({\bf DP GD}) Under the Assumption \ref{asmp: bounded_gradient} and \ref{asmp: smoothness}, for any $\epsilon, \delta >0$, DPAGD (Algorithm \ref{algo: full batch}) with $\phi_t(\tilde \g_1,...,\tilde \g_t) = \tilde \g_t$,  $\psi_t(\tilde \g_1,...,\tilde \g_t) = \I$, $\sigma^2$ be as in \eqref{eq: sigma_fb}, $\eta_t = \frac{1}{L}$, $ T = O\left(\frac{\sqrt{L} n \epsilon}{\sqrt{p \log (1 / \delta) }G}\right)$, $\lambda = 1$ and $\nu = 0$ achieves:
\begin{equation} 
\label{eq: opt_gd}
   \mathbb{E} \| \nabla \hat f(\w_R) \|^2 \leq O\left(\frac{\sqrt{L} G \sqrt{p \log (1 / \delta)}}{n \epsilon}\right),
\end{equation}
where $\w_R$ is is uniformly
sampled from $\{\w_1, \w_2, ...,\w_T\}$.
\end{restatable}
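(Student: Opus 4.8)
For the DP~GD instantiation we have $\psi_t\equiv\I$, $\lambda=1$ and $\nu=0$, so that $\v_t=\min(\I,1)=\I$ and $\sqrt{\v_t}+\nu$ is the all-ones vector; hence line~6 of Algorithm~\ref{algo: full batch} collapses to the plain noisy gradient step $\w_{t+1}=\w_t-\eta_t\,\tilde\g_t$ with $\tilde\g_t=\hat\g_t+\b_t$, $\b_t\sim\cN(0,\sigma^2\bI_p)$ and $\hat\g_t=\nabla\hat f(\w_t)$. The plan is the standard descent-lemma argument for smooth non-convex optimization, combined with telescoping and an optimization over the horizon $T$. First I would use that $\hat f$ has $L$-Lipschitz gradient (implied by Assumption~\ref{asmp: smoothness}) to write
\[
\hat f(\w_{t+1}) \;\le\; \hat f(\w_t) - \eta_t\langle \nabla\hat f(\w_t),\tilde\g_t\rangle + \tfrac{L\eta_t^2}{2}\|\tilde\g_t\|^2 .
\]

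Next I would condition on $\w_t$ (equivalently on $\b_1,\dots,\b_{t-1}$) and take expectation over $\b_t$. Using $\mathbb{E}[\b_t]=0$, $\mathbb{E}\|\b_t\|^2=p\sigma^2$ and $\nabla\hat f(\w_t)=\hat\g_t$, this yields $\mathbb{E}[\hat f(\w_{t+1})\mid\w_t]\le \hat f(\w_t)-\eta_t\|\hat\g_t\|^2+\tfrac{L\eta_t^2}{2}(\|\hat\g_t\|^2+p\sigma^2)$, and with $\eta_t=1/L$ this simplifies to $\mathbb{E}[\hat f(\w_{t+1})\mid\w_t]\le \hat f(\w_t)-\tfrac{1}{2L}\|\hat\g_t\|^2+\tfrac{p\sigma^2}{2L}$. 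Taking total expectation, summing over the $T$ iterations, telescoping, and using that $\hat f$ is bounded below by $\ell^\star$ (Assumption~\ref{asmp: smoothness}) gives $\tfrac{1}{2L}\sum_{t}\mathbb{E}\|\hat\g_t\|^2\le \hat f(\w_0)-\ell^\star+\tfrac{Tp\sigma^2}{2L}$. Dividing by $T$ and noting that $\w_R$ is drawn uniformly from the iterates so that $\mathbb{E}\|\nabla\hat f(\w_R)\|^2=\tfrac1T\sum_t\mathbb{E}\|\hat\g_t\|^2$, I obtain $\mathbb{E}\|\nabla\hat f(\w_R)\|^2\le \tfrac{2L(\hat f(\w_0)-\ell^\star)}{T}+p\sigma^2$.

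It then remains to pick $T$. Substituting $\sigma^2=c_2 G^2 T\ln(1/\delta)/(n^2\epsilon^2)$ from \eqref{eq: sigma_fb} turns the bound into $\mathbb{E}\|\nabla\hat f(\w_R)\|^2\le \tfrac{2L(\hat f(\w_0)-\ell^\star)}{T}+\tfrac{c_2 G^2 p\ln(1/\delta)}{n^2\epsilon^2}\,T$, a sum of a term decreasing in $T$ and one increasing in $T$. Balancing the two terms gives the stated choice $T=\Theta\!\big(\sqrt{L}\,n\epsilon/(\sqrt{p\ln(1/\delta)}\,G)\big)$ (absorbing the constant $\hat f(\w_0)-\ell^\star$), and plugging this back produces $\mathbb{E}\|\nabla\hat f(\w_R)\|^2=O\!\big(\sqrt{L}\,G\sqrt{p\ln(1/\delta)}/(n\epsilon)\big)$, which is exactly \eqref{eq: opt_gd}.

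\textbf{Main obstacle.} There is no deep difficulty once the algebraic simplification of the DP~GD update is noticed; the only genuinely essential point is that $\sigma^2$ itself grows linearly in $T$ (more steps means more noise under a fixed privacy budget), so the optimization term $2L(\hat f(\w_0)-\ell^\star)/T$ and the accumulated-noise term $p\sigma^2$ drive $T$ in opposite directions — this is precisely why the optimal $T$ is finite and why the resulting rate carries the extra $\sqrt p$. Secondary bookkeeping I would still check: that the chosen $T$ (rounded to an integer) is at least $1$, which is a mild lower bound on $n\epsilon$; that it is consistent with the hypothesis $\epsilon\le c_1 T$ of Theorem~\ref{thm:dp_analysis_fb}, so that $\sigma^2$ as in \eqref{eq: sigma_fb} indeed guarantees $(\epsilon,\delta)$-privacy; and that the tower property is applied correctly when passing from conditional to total expectations.
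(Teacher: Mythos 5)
Your proposal is correct and follows essentially the same route as the paper's own proof: reduce the DP GD instantiation to the plain noisy gradient step, apply the descent lemma to $\hat f$, take conditional expectation over $\b_t$ so the noise contributes only $p\sigma^2$, telescope with the lower bound $\ell^\star$, and then substitute $\sigma^2$ from \eqref{eq: sigma_fb} and the stated $T$ (your explicit balancing of the two terms in $T$ just makes transparent why that choice of $T$ is used; the constants differ immaterially from the paper's).
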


Theorem \ref{thm:GD_fb} shows that DP GD achieves the rate of $\frac{\sqrt[4]{p}}{\sqrt{n\epsilon}}$ on the $\ell_2$-norm of the empirical gradient. Actually, in this case, Algorithm \ref{algo: full batch} is exactly the Algorithm 4 in \cite{waye17} and we get the same result of the empirical gradient as in \cite{waye17}.

\begin{restatable}{theo}{theormspropfb}
\label{thm:rmsprop_fb}
({\bf DP RMSprop}) Under the Assumption \ref{asmp: bounded_gradient} and \ref{asmp: smoothness}, for any $\epsilon, \delta >0$, DPAGD (Algorithm \ref{algo: full batch}) with  $\phi_t(\tilde \g_1,...,\tilde \g_t) = \tilde \g_t$, and $ \psi_t = \left(1-\beta_{2}\right) \sum_{j=1}^{t} \beta_{2}^{t-j} \tilde \g_{j}^{2}$, $\sigma^2$ be as in \eqref{eq: sigma_fb}, $T=O\left(\frac{ n \epsilon}{\sqrt{p \log (1 / \delta) }G}\right)$, $\eta_t = \eta$, $\lambda > 0$ $\forall t \in [T]$, $\nu$, $\beta_2$ and $\eta$ are chosen such that: $\eta \leq \frac{\nu}{2L}$ and $1-\beta_{2} \leq \frac{\nu^{2}}{16 G^{2}}$
achieves:
\begin{equation} 
\label{eq: opt_rmsprop}
    \mathbb{E} \| \nabla \hat f(\w_R) \|^2 \leq O\left(\frac{ G^2 \sqrt{p \log (1 / \delta)}}{n \epsilon}\right),
\end{equation}
where $\w_R$ is uniformly
sampled from $\{\w_1, \w_2, ...,\w_T\}$.
\end{restatable}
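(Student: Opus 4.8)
The plan is to run the standard ``smoothness descent lemma $+$ telescoping'' argument for non-convex optimization, adapted to the coordinate-wise preconditioned, \emph{noisy} update of DP RMSprop, and then read off a bound on the average squared empirical gradient norm; since $\w_R$ is drawn uniformly from $\{\w_1,\dots,\w_T\}$, that average equals $\mathbb{E}\|\nabla\hat f(\w_R)\|^2$ (the expectation being over $\w_R$ and the Gaussian noise). Write the update as $\w_{t+1}=\w_t-\eta\,\mA_t\tilde\g_t$ with $\mA_t=\diag\!\big(1/(\sqrt{\v_t^i}+\nu)\big)$, $\tilde\g_t=\hat\g_t+\b_t$, $\hat\g_t=\nabla\hat f(\w_t)$, $\b_t\sim\cN(0,\sigma^2\bI_p)$. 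Since $\hat f$ has $L$-Lipschitz gradient (Assumption~\ref{asmp: smoothness}), $\hat f(\w_{t+1})\le\hat f(\w_t)-\eta\langle\hat\g_t,\mA_t\tilde\g_t\rangle+\tfrac{L\eta^2}{2}\|\mA_t\tilde\g_t\|^2$. The clip $\v_t^i\le\lambda$ gives $\langle\hat\g_t,\mA_t\hat\g_t\rangle\ge\|\hat\g_t\|^2/(\sqrt\lambda+\nu)$, which is the quantity we want to surface, and $\mA_t\preceq\nu^{-1}\bI$ gives $\|\mA_t\tilde\g_t\|^2\le 2\nu^{-1}\langle\hat\g_t,\mA_t\hat\g_t\rangle+2\nu^{-2}\|\b_t\|^2$, so the second-order term feeds back into the descent term and, under $\eta\le\nu/(2L)$, cannot swamp it.

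The crux is the cross term $\langle\hat\g_t,\mA_t\b_t\rangle$: because $\mA_t$ depends on the fresh noise $\b_t$ through $\v_t$, this is \emph{not} conditionally mean zero. I introduce the lagged preconditioner $\bar\mA_t=\diag\!\big(1/(\sqrt{\bar\v_t^i}+\nu)\big)$ built from $\bar\v_t=\min\!\big(\psi_t-(1-\beta_2)\tilde\g_t^2,\ \lambda\big)=\min(\beta_2\psi_{t-1},\lambda)$, i.e.\ the exponential average with the $t$-th term removed; then $\hat\g_t$ and $\bar\mA_t$ are measurable with respect to $\b_1,\dots,\b_{t-1}$, so $\mathbb{E}[\langle\hat\g_t,\bar\mA_t\b_t\rangle\mid\b_1,\dots,\b_{t-1}]=0$ and these terms vanish in expectation. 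For the correction, coordinatewise $0\le\bar\mA_t^i-\mA_t^i=\tfrac{\sqrt{\v_t^i}-\sqrt{\bar\v_t^i}}{(\sqrt{\v_t^i}+\nu)(\sqrt{\bar\v_t^i}+\nu)}\le\tfrac{\sqrt{1-\beta_2}\,|\tilde\g_t^i|}{\nu^2}$, using $0\le\v_t^i-\bar\v_t^i\le(1-\beta_2)(\tilde\g_t^i)^2$ and $\sqrt a-\sqrt b\le\sqrt{a-b}$; combining a Young's inequality (which puts a controllable $\rho\|\hat\g_t\|^2$ back into the descent term) with $\|\nabla\ell\|\le G$ (Assumption~\ref{asmp: bounded_gradient}) and the hypothesis $1-\beta_2\le\nu^2/(16G^2)$, the expectation of $|\langle\hat\g_t,(\mA_t-\bar\mA_t)\b_t\rangle|$ is bounded by a constant (depending only on $L,\nu,\lambda,G$) times $p\sigma^2+p^2\sigma^4$.

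Assembling these estimates into the descent inequality, the net coefficient of $\langle\hat\g_t,\mA_t\hat\g_t\rangle$ is a positive constant times $\eta$ (this is exactly where $\eta\le\nu/(2L)$ and the $\beta_2$ condition are used); so summing over $t=1,\dots,T$, using $\hat f\ge\ell^\star$ (Assumption~\ref{asmp: smoothness}), dropping the zero-mean martingale terms in expectation, and lower-bounding $\langle\hat\g_t,\mA_t\hat\g_t\rangle\ge\|\hat\g_t\|^2/(\sqrt\lambda+\nu)$ yields
\[
\mathbb{E}\|\nabla\hat f(\w_R)\|^2=\tfrac1T\sum_{t=1}^{T}\mathbb{E}\|\hat\g_t\|^2\ \le\ O\!\Big(\tfrac{\hat f(\w_0)-\ell^\star}{\eta T}+p\sigma^2+p^2\sigma^4\Big),
\]
with $O(\cdot)$ hiding $L,\nu,\lambda$. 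Now plug in the parameters: with $\sigma^2$ as in \eqref{eq: sigma_fb} and $T=\Theta\!\big(n\epsilon/(G\sqrt{p\log(1/\delta)})\big)$ one gets $p\sigma^2=\Theta\!\big(\sqrt p\,G\sqrt{\log(1/\delta)}/(n\epsilon)\big)$ and $1/(\eta T)=\Theta\!\big(\sqrt p\,G\sqrt{\log(1/\delta)}/(n\epsilon)\big)$ for $\eta=\Theta(\nu/L)$, while $p^2\sigma^4=(p\sigma^2)^2$ is dominated by $p\sigma^2$ in the nontrivial regime $n\epsilon\gtrsim\sqrt{p\log(1/\delta)}$; taking $\nu,\sqrt\lambda$ at their natural scale $\Theta(G)$ (consistent with $1-\beta_2\le\nu^2/(16G^2)$ and $\eta\le\nu/(2L)$) and absorbing $L$ and $\hat f(\w_0)-\ell^\star$ into $O(\cdot)$ gives the claimed $O\!\big(G^2\sqrt{p\log(1/\delta)}/(n\epsilon)\big)$.

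I expect the main obstacle to be the second paragraph --- decoupling the adaptive preconditioner $\mA_t$ from the fresh Gaussian noise $\b_t$, which breaks the usual ``the noise cross-term is mean zero'' step; the lagged-preconditioner trick restores a martingale structure, and the two hypotheses $\eta\le\nu/(2L)$ and $1-\beta_2\le\nu^2/(16G^2)$, together with the coordinatewise clipping at $\lambda$, are precisely what makes the de-correlation error and the second-order term absorbable into the descent term, so that in expectation the whole analysis reduces to the non-adaptive DP GD bookkeeping plus a $T\,p\sigma^2$ noise budget. A secondary bit of care is needed to keep the $\langle\hat\g_t,\mA_t\hat\g_t\rangle$ coefficient bounded away from zero after all absorptions, which is why the constants $\tfrac12$, $\tfrac14$, $16$ in the hypotheses appear.
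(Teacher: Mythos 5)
Your proposal is correct and follows essentially the same route as the paper's proof: the paper also applies the smoothness descent lemma to $\hat f$, decorrelates the preconditioner from the fresh Gaussian noise by adding and subtracting the lagged denominator $\sqrt{\beta_2\v_{t-1}^i}+\nu$ (your $\bar\mA_t$), bounds the resulting correction term via $\v_t^i-\beta_2\v_{t-1}^i\le(1-\beta_2)(\tilde\g_t^i)^2$ together with the hypotheses $G\sqrt{1-\beta_2}/\nu\le 1/4$ and $L\eta/\nu\le 1/2$, uses the clip $\v_t^i\le\lambda$ to surface $\|\nabla\hat f(\w_t)\|^2$, and telescopes before plugging in $T$ and $\sigma^2$. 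The only differences are bookkeeping: the paper retains the matching denominator $\sqrt{\beta_2\v_{t-1}^i}+\nu$ in the correction bound instead of your cruder $1/\nu^2$ estimate plus Young's inequality, so it avoids your extra $p^2\sigma^4$ term, but both variants hide the same $\lambda,\nu,L$ constants in the final $O(\cdot)$.
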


Theorem \ref{thm:rmsprop_fb} shows that DP RMSprop achieves the same bound $\tilde O(\frac{\sqrt[4]{p}}{\sqrt{n\epsilon}})$ as DP GD.

\begin{restatable}{theo}{theoadamfb}
\label{thm:adam_fb}
({\bf DP Adam}) Under the Assumption \ref{asmp: bounded_gradient} and \ref{asmp: smoothness},  for any $\epsilon, \delta >0$ and $n \geq \frac{2 \ln (8 / \delta)}{\epsilon^{2}}$, for any $\beta >0$, DPAGD (Algorithm \ref{algo: full batch}) with $\sigma^2$ set to be as \eqref{eq: sigma_fb}, $\phi_t(\tilde \g_1,...,\tilde \g_t) = \left(1-\beta_{1}\right) \sum_{j=1}^{t} \beta_{1}^{t-j} \tilde \g_{j}$, and $ \psi_t = \left(1-\beta_{2}\right) \sum_{j=1}^{t} \beta_{2}^{t-j} \tilde \g_{j}^{2}$, $T = \frac{n\epsilon }{G \sqrt{p \ln(1/\delta)}}$ , step size $\eta_t = \eta$, $0<\beta_2<1$, $\lambda > 0$, $\beta_1$ and $\nu$ are chosen such that: \begin{small}
    $\eta \leq ({\sqrt{1/2+4\beta_1/(1-\beta_1)^2}-1/2})\frac{(1-\beta_1)^2}{\beta_1}\frac{\nu}{4L}$ \end{small}
satisfies, 
\begin{equation}
     \nr  \mathbb{E}\| \nabla \hat f( \w_R)\|^2  \leq  O\left( \frac{G^2\sqrt{p\ln (1/\delta)}\ln(n\sqrt{p}\epsilon/\beta)}{n\epsilon}\right),   
\end{equation}
with probability at least $1-\beta$. where $\w_R$ is uniformly sampled from $\{\w_1, \w_2, ...,\w_T\}$ and the expectation is over the draw of $\w_R$.
\end{restatable}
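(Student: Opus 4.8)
The plan is to run a non-convex ``descent-lemma'' argument directly on the \emph{empirical} objective $\hat f$, which by Assumption~\ref{asmp: smoothness} is $L$-smooth and bounded below by some $\hat f^\star$, and to control the error coming from three sources: the heavy-ball momentum $\bm_t=(1-\beta_1)\sum_{j=1}^t\beta_1^{t-j}\tilde\g_j$, the clipped adaptive denominator $\sqrt{\v_t}+\nu$ (with $\v_t$ clipped coordinate-wise to $[0,\lambda]$, so $\nu\le\sqrt{\v_t}+\nu\le\sqrt\lambda+\nu$), and the Gaussian noise $\b_t$ inside $\tilde\g_t=\hat\g_t+\b_t$. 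Since $\w_R$ is uniform over $\{\w_1,\dots,\w_T\}$, it suffices to bound $\frac1T\sum_{t=1}^T\|\nabla\hat f(\w_t)\|^2$. Following the argument behind Theorem~\ref{thm:rmsprop_fb} but now with momentum, I would introduce the auxiliary iterate $\z_t:=\w_t+\tfrac{\beta_1}{1-\beta_1}(\w_t-\w_{t-1})$; a standard identity shows $\z_{t+1}-\z_t$ removes $\bm_t$ at leading order, leaving a term proportional to $\tilde\g_t/(\sqrt{\v_t}+\nu)$ plus a denominator-drift correction. Expanding $\hat f(\z_{t+1})-\hat f(\z_t)$ via $L$-smoothness then produces the ``signal'' contribution $-\Theta\!\big(\tfrac{\eta}{\sqrt\lambda+\nu}\big)\|\nabla\hat f(\w_t)\|^2$ together with the error terms discussed below.

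The first step is to dispatch the ``classical'' (noise-free) errors, as in the analysis behind Theorem~\ref{thm:rmsprop_fb}. Writing $\bm_t=\hat\g_t+\big[(1-\beta_1)\sum_{j\le t}\beta_1^{t-j}(\hat\g_j-\hat\g_t)\big]+\bm_t^{\mathrm{noise}}$ with $\bm_t^{\mathrm{noise}}:=(1-\beta_1)\sum_{j\le t}\beta_1^{t-j}\b_j$, the momentum-staleness bracket decays geometrically in $\beta_1^{t-j}$ and is bounded using $L$-smoothness together with $\|\w_t-\w_j\|=O\!\big(\eta(t-j)(G+\sqrt p\,\sigma(1+\mu))/\nu\big)$; the step-size condition $\eta\le\big(\sqrt{1/2+4\beta_1/(1-\beta_1)^2}-1/2\big)\tfrac{(1-\beta_1)^2}{\beta_1}\tfrac{\nu}{4L}$ is precisely what lets the staleness term, the ``self-bounding'' piece of the second-order term $O(L\eta^2\|\bm_t\|^2/\nu^2)$ proportional to $\|\nabla\hat f(\w_t)\|^2$, and the denominator-drift correction all be absorbed into half the signal, leaving (apart from terms of higher order in $\eta$) a residual of order $L\eta^2\,p\sigma^2(1+\mu)^2/\nu^2$ per step coming from the noise part of the second-order term.

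The crux is the \emph{noise cross-term} $\sum_t(-\eta)\big\langle\nabla\hat f(\w_t),\,\bm_t^{\mathrm{noise}}/(\sqrt{\v_t}+\nu)\big\rangle$: a crude bound $\le\eta G\|\bm_t^{\mathrm{noise}}\|/\nu$ summed over $t\in[T]$ loses a factor $\sqrt T$ and would only give $O(\sqrt p/(n\epsilon))$ for $\|\nabla\hat f(\w_R)\|^2$, weaker than claimed, so the zero mean of $\b_t$ must be exploited. Unlike DP GD --- where the $\langle\nabla\hat f(\w_t),\b_t\rangle$ terms cancel exactly --- momentum and the adaptive denominator here prevent such cancellation, so I would instead: (a) condition on $\cE_\mu:=\{\|\b_t\|\le\sqrt p\,\sigma(1+\mu):t\in[T]\}$, which by a Gaussian tail plus a union bound over $t$ and coordinates holds with probability $\ge1-2pT\,e^{-\mu^2/2}$, and take $\mu=\Theta\!\big(\sqrt{\ln(pT/\beta)}\big)$; (b) use Abel summation to rewrite $\sum_t\langle a_t,\bm_t^{\mathrm{noise}}\rangle$ with $a_t:=\nabla\hat f(\w_t)/(\sqrt{\v_t}+\nu)$ as a single sum $\sum_j\langle\tilde a_j,\b_j\rangle$ (the geometric weights telescope to $\approx1$), then replace $\tilde a_j$ by the $\cF_j$-measurable surrogate $\nabla\hat f(\w_j)/(\sqrt{\v_{j-1}}+\nu)$, paying corrections controlled by $L$-smoothness, the clipping threshold $\lambda$, and the conditions on $\beta_2,\nu$; (c) apply Azuma--Hoeffding to the resulting bounded martingale-difference sequence, whose increments are $\le\frac G\nu\sqrt p\,\sigma(1+\mu)$ on $\cE_\mu$, giving a deviation of order $\sqrt{T\ln(1/\beta)}\cdot\frac G\nu\sqrt p\,\sigma(1+\mu)$ with probability $\ge1-\beta/2$. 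Telescoping $\hat f(\z_{t+1})-\hat f(\z_t)$ down to $\hat f(\z_1)-\hat f^\star$, rearranging, and dividing by $\tfrac{\eta T}{\sqrt\lambda+\nu}$ then yields
\begin{equation*}
\frac1T\sum_{t=1}^T\|\nabla\hat f(\w_t)\|^2\ \le\ O\!\Big(\tfrac1{\eta T}\Big)\ +\ O\!\big(\eta L\,p\sigma^2(1+\mu)^2\big)\ +\ O\!\Big(\tfrac{G\sqrt p\,\sigma(1+\mu)\sqrt{\ln(1/\beta)}}{\sqrt T}\Big),
\end{equation*}
and substituting $\sigma^2=\Theta\!\big(G^2T\ln(1/\delta)/(n^2\epsilon^2)\big)$, $T=n\epsilon/(G\sqrt{p\ln(1/\delta)})$ (so $\sigma/\sqrt T=\Theta\!\big(G\sqrt{\ln(1/\delta)}/(n\epsilon)\big)$), $pT\asymp n\sqrt p\,\epsilon$ and $(1+\mu)^2=O\!\big(\ln(n\sqrt p\,\epsilon/\beta)\big)$ gives the claimed $O\!\big(G^2\sqrt{p\ln(1/\delta)}\,\ln(n\sqrt p\,\epsilon/\beta)/(n\epsilon)\big)$ with probability $\ge1-\beta$.

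The main obstacle is exactly steps (b)--(c): because \emph{both} the momentum accumulation and the clipped second-moment denominator are statistically entangled with the noise $\b_t$, turning the noise cross-term into a genuine martingale requires decoupling $\v_t$ from $\b_t$ (replacing it by $\v_{t-1}$), decoupling $\nabla\hat f(\w_t)$ from the later noise absorbed into the Abel coefficients, and checking that every such correction remains lower-order under the stated conditions on $\eta,\beta_1,\beta_2,\nu,\lambda$ --- this bookkeeping, rather than any single inequality, is the delicate part, and it is also why DP Adam's bound (unlike DP GD's, which is a clean expectation bound) carries the high-probability parameter $\beta$ and the extra $\ln(n\sqrt p\,\epsilon/\beta)$ factor. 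A secondary point is to confirm that the coordinate-wise clipping $\v_t=\min(\psi_t,\lambda)$ never breaks the descent, which is immediate since clipping only shrinks $\v_t$ and keeps $\sqrt{\v_t}+\nu\in[\nu,\sqrt\lambda+\nu]$.
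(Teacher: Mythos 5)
Your skeleton (descent argument on the $L$-smooth $\hat f$, bounding $\frac1T\sum_t\|\nabla\hat f(\w_t)\|^2$, conditioning on $\|\b_t\|\le\sqrt p\,\sigma(1+\mu)$ for all $t\in[T]$ with $\mu=\Theta(\sqrt{\ln(pT/\beta)})$, then substituting $\sigma^2$ and $T$) is consistent with the paper, and your final rate bookkeeping is right. The problem is that the decisive step of your plan --- (b)--(c), Abel summation of the noise cross-term, replacement of the coefficients by predictable surrogates, and Azuma --- is precisely what you defer to ``delicate bookkeeping,'' and as sketched it does not go through under the stated hypotheses. After Abel summation the coefficient multiplying $\b_j$ is $(1-\beta_1)\sum_{t\ge j}\beta_1^{t-j}\,\nabla\hat f(\w_t)/(\sqrt{\v_t}+\nu)$, which depends on $\b_j$ through every $\w_t$ and $\v_t$ with $t\ge j$; to obtain a martingale difference you must swap in $\nabla\hat f(\w_j)/(\sqrt{\v_{j-1}}+\nu)$, and the correction contains per-step terms of size $\|\b_j\|\cdot\|\nabla\hat f(\w_j)\|\cdot\bigl|1/(\sqrt{\v_t^i}+\nu)-1/(\sqrt{\v_{j-1}^i}+\nu)\bigr|$. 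Theorem~\ref{thm:adam_fb} assumes only $0<\beta_2<1$ (the smallness condition $1-\beta_2\le\nu^2/(16G^2)$ appears only in the DP RMSprop result, Theorem~\ref{thm:rmsprop_fb}), so the denominator drift is not small, and this correction is of order $G\alpha/\nu$ per step --- linear, not quadratic, in the noise level $\alpha$ --- which after dividing by $\eta T$ contributes $O(G\alpha)$ instead of $O(\alpha^2)$, i.e.\ a $\sqrt[4]{p}/\sqrt{n\epsilon}$-type term that destroys the claimed $\sqrt{p}/(n\epsilon)$ rate. The gradient-staleness part of the same correction has the analogous problem unless you Young-split it back into squared quantities. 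So there is a genuine gap exactly where the difficulty lies.

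Moreover, your premise that ``the zero mean of $\b_t$ must be exploited'' is mistaken, and the paper's proof shows why: it uses no martingale argument at all. The proof of Theorem~\ref{thm:adam_fb} simply reruns the proof of Theorem~\ref{thm: gen_adam_fb} with $f,\g_t$ replaced by $\hat f,\hat\g_t$ and with Lemma~\ref{lemm: acc_gauss} (Gaussian tail, $\alpha=\sqrt p\,\mu\sigma$) in place of the adaptive-data-analysis bound. There, the dangerous cross term is removed by the exact identity $2\langle\gamma_t\g_t,\bm_t\rangle=\|\bm_t\|^2+\|\gamma_t\g_t\|^2-\|\gamma_t\g_t-\bm_t\|^2$ (taken with the $\sqrt{\sqrt{\v_t}+\nu}$ weighting, $\gamma_t=1-\beta_1^t$), so the noise enters only quadratically through $\|\gamma_t\g_t-\bm_t\|^2\le\frac{1-\beta_1}{\nu}\sum_{j\le t}\beta_1^{t-j}\|\g_t-\tilde\g_j\|^2$; the staleness piece is absorbed into the negative $\|\bm_t\|^2$ term via Lemma~\ref{lem: adam_key_seq} and the step-size condition on $\eta,\beta_1,\nu$, and the noise piece contributes only $\alpha^2$ per iteration --- exactly the claimed rate, with no decoupling of $\v_t$ from $\b_t$ and no condition on $\beta_2$ needed. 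If you wish to keep your virtual-iterate/martingale route you would have to add a $1-\beta_2$ smallness assumption or rework step (b) so every noise contribution appears squared, at which point you have essentially reconstructed the paper's completing-the-square argument.
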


Theorem \ref{thm:adam_fb} shows that DP ADAM achieves the same bound $\tilde O(\frac{\sqrt[4]{p}}{\sqrt{n\epsilon}})$ as DP GD. Especially, based on the current optimization analysis \citep{chli18} of Adam that has a worse dependence on $p$, i.e., $\frac{\sqrt{p}}{\sqrt[4]{n}}$ over $n$ stochastic gradient computations/iterations.

From Theorem \ref{thm:rmsprop_fb}, Theorem \ref{thm:adam_fb} and Theorem \ref{thm:GD_fb}, we have $\mathbb{E} \|\nabla \hat f(\w_R) \| \leq \sqrt{\mathbb{E} \|\nabla \hat f(\w_R) \|^{2}} \leq O(\frac{\sqrt[4]{p}}{\sqrt{n\epsilon}})$.
The prior approach extends the bound on the empirical gradient $\| \nabla \hat f(\w)\|$ by using the uniform convergence of empirical gradient to population gradient, i.e., $ \mathrm{sup}_{\w} \|\nabla f(\w) -\nabla \hat f(\w) \| \leq O(\frac{\sqrt{p}}{\sqrt{n}})$ (Theorem 1 in \citep{meba16}). 
In Appendix \ref{app:lower_bound}, we provide a lower bound of gradient uniform convergence rate that matches rate of $\sqrt{p}/\sqrt{n}$. 
The lower bound suggests that, the uniform convergence approach, i.e., $\mathbb{E}\|\nabla f(\w_{R})- \hat f(\w_{R})\| \leq O(\frac{\sqrt{p}}{\sqrt{n}})$ and $\mathbb{E}\|\nabla f(\w_R) \| \leq 
O(\frac{\sqrt{p}}{\sqrt{n\epsilon}})$, fails to match our results in Theorem \ref{thm:gen_bound_fb}.

\section{Experiments}
\label{sec:experiments}

We empirically evaluate the performance of DP SGD, DP RMSprop and DP Adam \footnote{We  implemented the mini-batch version of DP GD, DP RMSprop and DP Adam in PyTorch based on this repository https://github.com/ChrisWaites/pyvacy.} for training various modern deep learning models. 
We consider MNIST image classification task \citep{lebo1998}.
After briefly discussing the experimental setup, we present experimental results.

{\bf Network Architecture and Datasets}: We focus on fully connected networks with ReLU activation of $2$ hidden layers with 128 nodes each layer. The MNIST dataset contains 60,000 black and white training images and 10,000 test examples, representing handwritten digits 0 to 9. Each image of size $28\times28$  is normalized by subtracting the mean and dividing the standard deviation of the training set and converted into a vector of size 784.

\textbf{Training and Hyper-parameter Setting:} Since optimization hyper-parameters affect the quality of solutions, and \cite{wiro17} find that the initial step size and the scheme of decaying step sizes have a marked impact on the performance, we follow the grid search method with search space $\{0.1, 0.01, 0.001\}$ to tune the step size.  For training, a fixed budget on the number of epochs i.e., 100 is assigned for 
the task. We decay the learning rate by 0.1 every 30 epochs.
The mini-batch size is set to be 128 for MNIST. Cross-entropy is used as our loss function throughout  experiments. We choose the settings achieving the lowest final training loss.  We repeat each experiments 5 times and report the mean and standard deviation of the accuracy on the training and test set.

{\bf Parameter of Differential Privacy.} Since the gradient bound $G$ is unknown for deep learning, we follow the gradient clipping method in \cite{abch16} to guarantee the privacy. We choose clip size to be $1.0$ for MNIST. We report results for three choices of the noise scale, i.e., $\sigma = \{2, 4, 8\}$ for MNIST. We follow the MA \cite{budl19} to calculate the accumulated privacy cost. Fixing $\delta = 10^{-5}$, the $\epsilon$ is $\{1.22, 0.57, 0.28\}$ for MNIST.

\begin{figure*}[t] 
\centering
 \subfigure[Training accuracy, $\epsilon = 0.28$]{
  % [trim=left bottom right top, clip]
 \includegraphics[trim = 3mm 1mm 4mm 4mm, clip, width=0.30\textwidth]{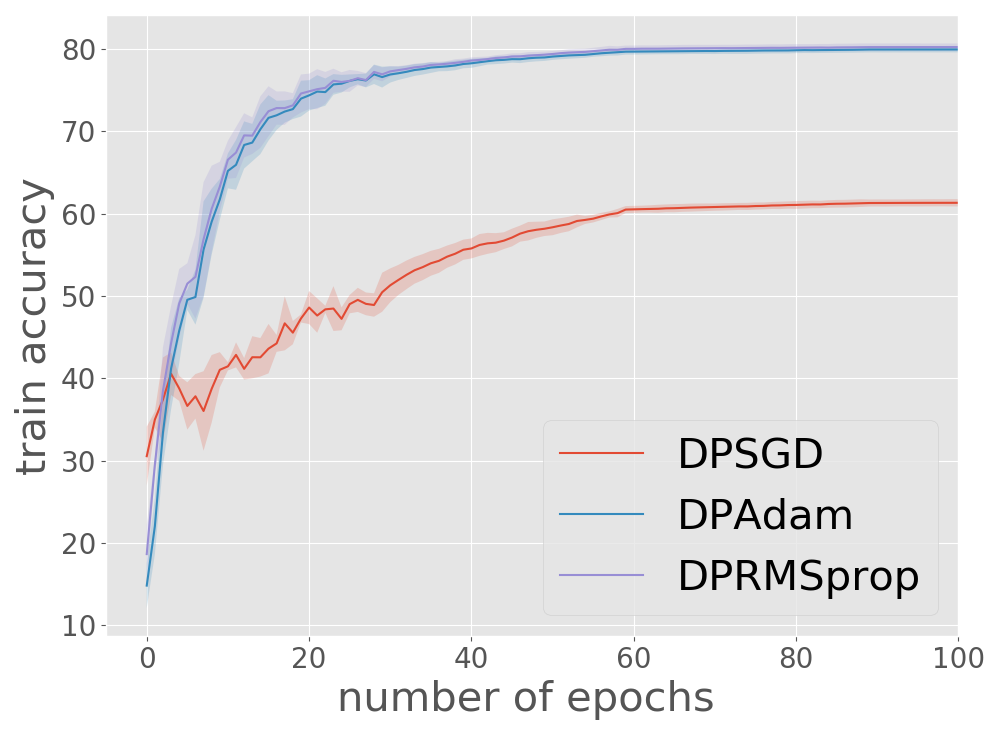}
 } % \vspace{-2mm}
  \subfigure[Training accuracy, $\epsilon = 0.57$]{
 \includegraphics[trim =3mm 1mm 4mm 4mm, clip, width=0.30\textwidth]{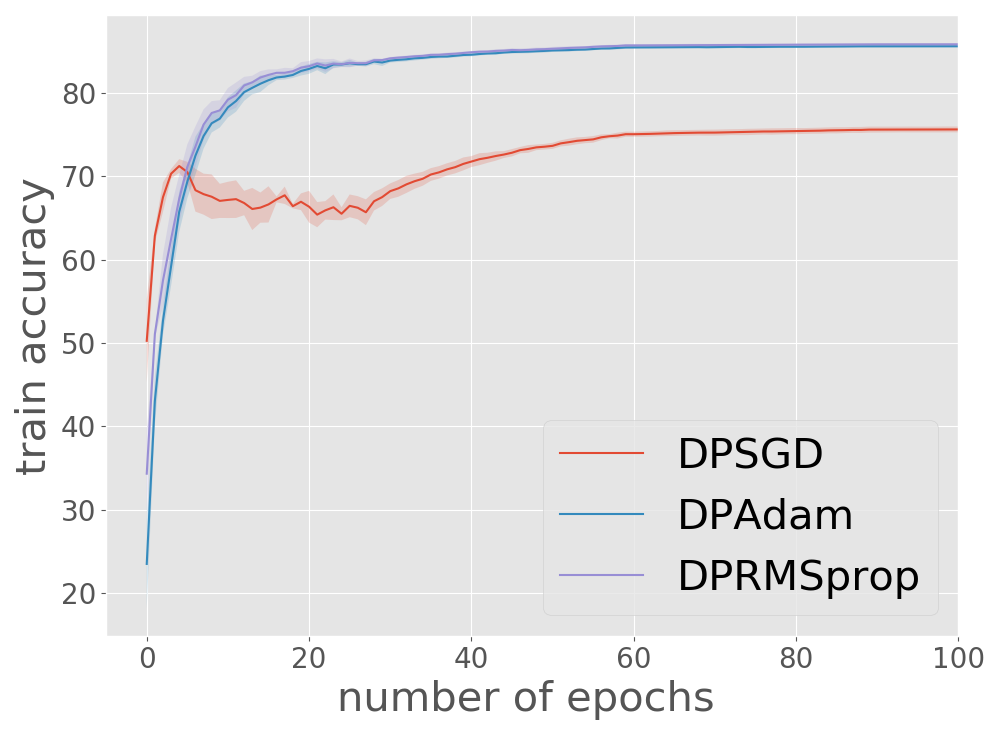}
 } % \vspace{-2mm}
 \subfigure[Training accuracy, $\epsilon = 1.22$]{
  % [trim=left bottom right top, clip]
 \includegraphics[trim = 3mm 1mm 4mm 4mm, clip, width=0.30\textwidth]{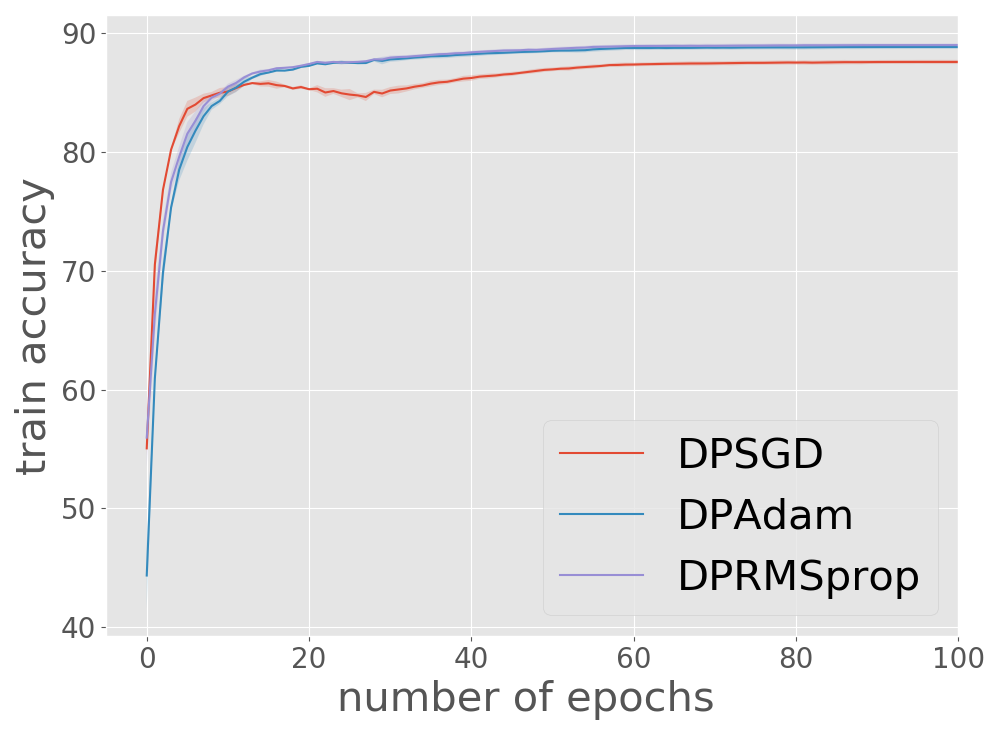}
 } \vspace{-3mm}
 \subfigure[Test accuracy, $\epsilon = 0.28$]{
  % [trim=left bottom right top, clip]
 \includegraphics[trim = 3mm 1mm 4mm 4mm, clip, width=0.30\textwidth]{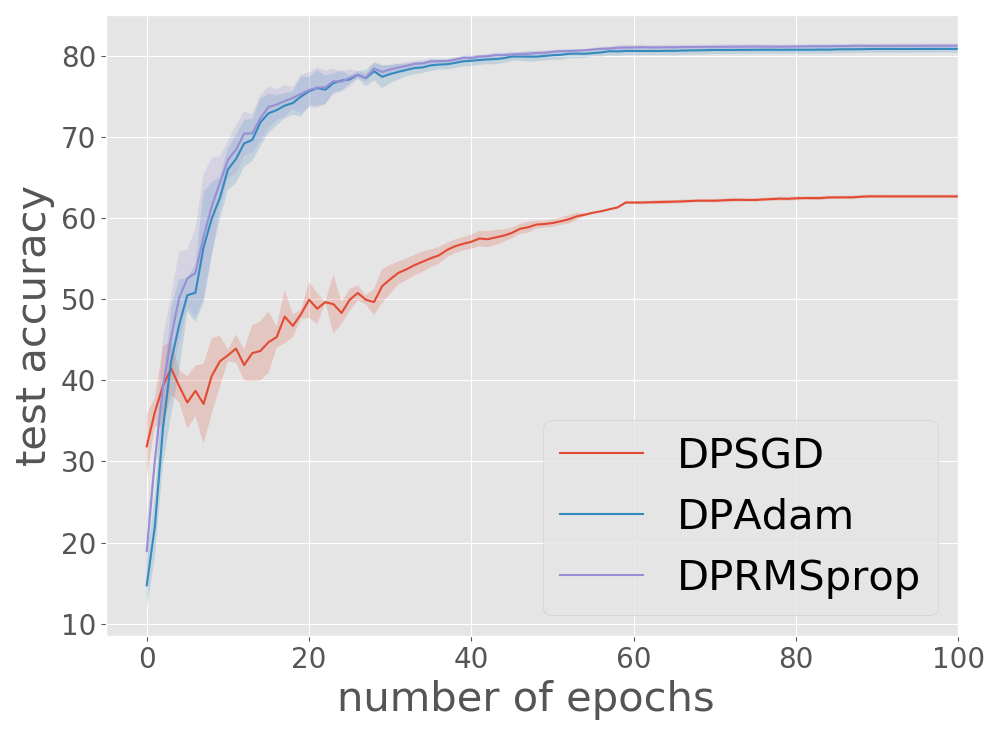}
 } % \vspace{-2mm}
  \subfigure[Test accuracy, $\epsilon = 0.57$]{
 \includegraphics[trim =3mm 1mm 4mm 4mm, clip, width=0.30\textwidth]{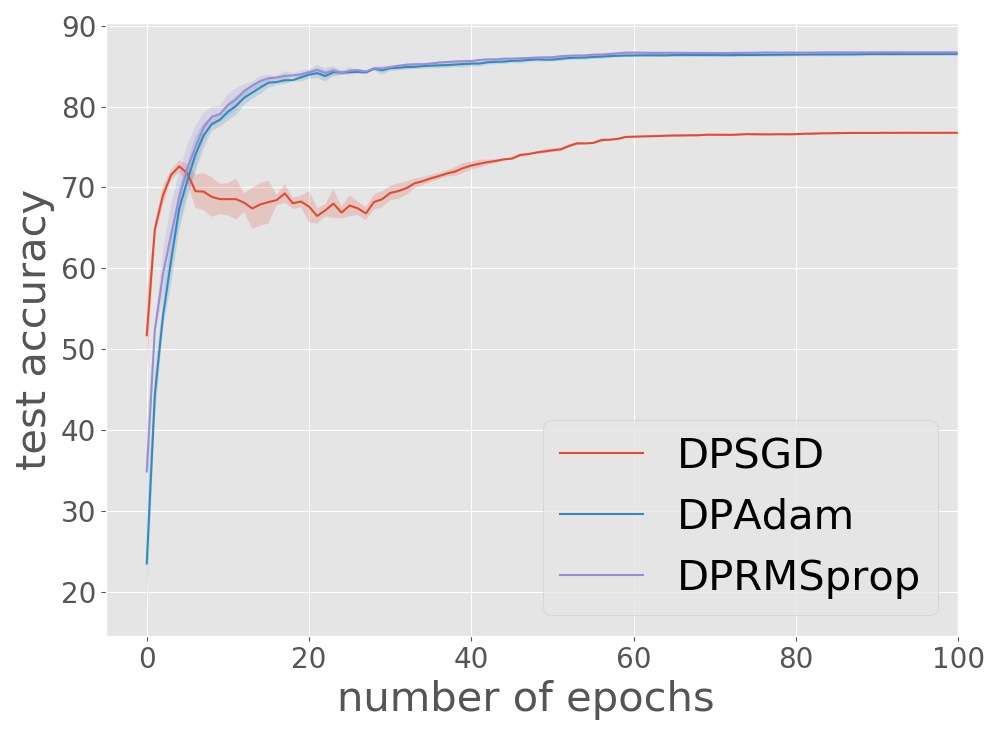}
 } % \vspace{-2mm}
 \subfigure[Test accuracy, $\epsilon = 1.22$]{
  % [trim=left bottom right top, clip]
 \includegraphics[trim = 3mm 1mm 4mm 4mm, clip, width=0.30\textwidth]{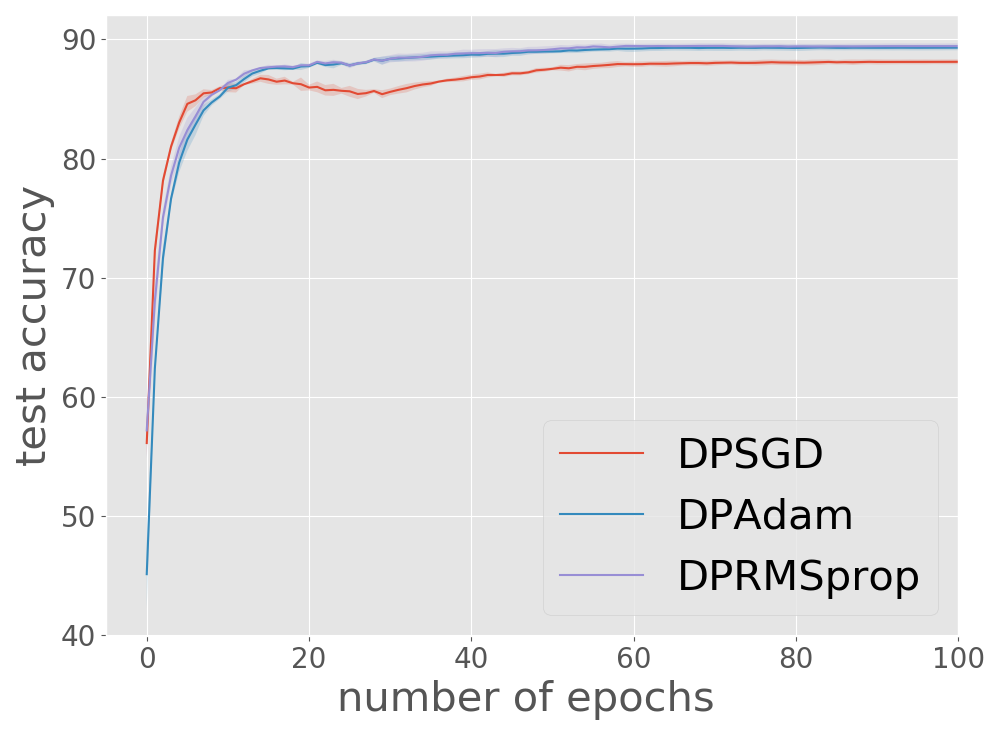}
 } 
%\vspace{-2mm}
\caption[]{Comparison of DP Adam, DP RMSprop and DP SGD on MNIST with $\epsilon = \{1.22, 0.57, 0.28\}$. (a-c) correspond to the training accuracy and (d-f) correspond to the test accuracy. The X-axis is the number of epochs, and the Y-axis is the train/test accuracy.  The adaptive gradient methods such as DP RMSprop and DP Adam achieve better training and test accuracy than DP SGD, especially for small $\epsilon$. 
}
\label{fig:mnist_eps}
\vspace*{-4mm}
\end{figure*}

{\bf Experimental Results.} The training accuracy and test accuracy for different level of privacy, i.e., $\epsilon$, are reported in Figure \ref{fig:mnist_eps}. Figure \ref{fig:mnist_eps} shows that adaptive methods such as DP Adam and DP RMSprop progress faster than DP SGD, especial for small privacy cost, i.e., $\epsilon = 0.28$.

\section{Conclusion}
\label{sec:conclusion}
In this paper, we study the differential private adaptive gradient descent algorithms for non-convex optimization. We provide population risk analysis using the generalization property of differential privacy itself and adaptive data analysis. We obtain a sharper bound on the $\ell_2$ norm of the population gradient by taking the advantages of generalization guarantee of differential privacy. 
We show that uniform convergence argument gives a worse generalization bound even if those algorithms obtain a better empirical gradient bound. Finally, we experimentally evaluate the proposed algorithms and show that DP adaptive gradient methods tend to outperform DP SGD for the task we consider.

\vspace{-0.2cm}
\section*{Acknowledgement}
The research was supported by NSF grants IIS-1908104, OAC-1934634, IIS-1563950, FAI 1939606, CMMI-172775, an ARO grant 73202-CS, an Amazon Research Award, a Google Faculty Research Award, and a Mozilla Research Grant. We would like to thank the Minnesota Super-computing Institute (MSI) for providing computational resources and support.

%\newpage

%\nocite{langley00}

\bibliographystyle{plain}
\bibliography{references}

%\onecolumn
\newpage
\appendix

\section{Proofs for Section \ref{sec:gen_of_dp}}

\theodpfb*

\emph{Proof of Theorem \ref{thm:dp_analysis_fb}:}  Theorem \ref{thm:dp_analysis_fb} is a variant of Theorem 1 in \cite{abch16}. The proof follows by setting the sampling probability to be $1$ in the proof of Theorem 1 in \cite{abch16}.

\begin{restatable}{lemm}{lemmgenfb}
\label{lem:gen_adv_fb}
Set $\sigma^2$ in  DPAGD (Algorithm \ref{algo: full batch})  
%be an $(\epsilon, \delta)$-differentially private  algorithm by setting $\sigma^2$ 
to be as eq. \eqref{eq: sigma_fb} in Theorem \ref{thm:dp_analysis_fb}. Let $\w_t %= \cA(S^\prime)
$ be the parameter generated at each iteration $t \in [T]$ and $\hat \g_t$ be the empirical gradient  such that $\hat \g_t = \mathbb{E}_{\z \in S} [\nabla \ell(\w_t, \z)]$. Then, for any $\tau >0$, $\rho> 0$, if the privacy cost of Algorithm \ref{algo: full batch} satisfies $\epsilon \leq \frac{\tau}{13}$, $\delta \leq \frac{\tau \rho}{26 \ln(26/\tau)}$ and the sample size $ n \geq \frac{2\ln(8/\delta)}{\epsilon^2}$, we have the following gradient concentration bound for $\hat \g_t$, i.e., $ \forall i\in [p]$ and  $\forall t \in [T]$,
\begin{equation}
    \mathbb{P}\left\{ |\hat \g_t^i - \g_t^i| \geq  \tau \right\} \leq \rho.
\end{equation}
\end{restatable}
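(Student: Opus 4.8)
The plan is to invoke the generalization (transfer) theorem of adaptive data analysis, treating the coordinates of the empirical gradient as adaptively chosen statistical queries. First I would fix a coordinate $i \in [p]$ and observe that, because the individual gradient is bounded by $G$ (Assumption~\ref{asmp: bounded_gradient}), each coordinate $[\nabla \ell(\w,\z)]_i$ lies in a bounded interval; after rescaling by a factor of order $G$ the map $\z \mapsto [\nabla \ell(\w_t,\z)]_i$ is a statistical query with values in $[0,1]$ (or $[-1,1]$). The key structural point is that $\w_t$ is produced by DPAGD, which by Theorem~\ref{thm:dp_analysis_fb} is $(\epsilon,\delta)$-differentially private in the dataset $S$ once $\sigma^2$ is set as in \eqref{eq: sigma_fb}; hence the query $\z\mapsto[\nabla\ell(\w_t,\z)]_i$ is the output of an $(\epsilon,\delta)$-DP mechanism applied to $S$. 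This is exactly the setting covered by the transfer theorem (e.g.\ \citep{dwfe2015a,bani16,JungLN0SS20}), which states that if a query is selected by an $(\epsilon,\delta)$-DP procedure then its empirical value on $S$ is close to its population value with high probability, provided $\epsilon$ is small enough, $\delta$ is small relative to the target failure probability, and $n$ is large enough.

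Concretely, the steps are: (1) rescale so the per-example gradient coordinate becomes a $[0,1]$-valued (or $[-1,1]$-valued) query, tracking the scaling constant $G$; (2) note $\w_t$, hence the selected query, is a function of an $(\epsilon,\delta)$-DP view of $S$, so we may apply the transfer theorem with accuracy parameter proportional to $\tau/G$ and confidence parameter $\rho$; (3) read off the hypotheses the transfer theorem demands — these are precisely $\epsilon \le \tau/13$ (after absorbing the $G$-scaling into $\tau$, or more carefully $\epsilon \lesssim \tau$ with the stated constant), $\delta \le \tau\rho/(26\ln(26/\tau))$, and $n \ge 2\ln(8/\delta)/\epsilon^2$, matching the standard statements of the theorem with its explicit constants; (4) conclude $\mathbb{P}\{|\hat\g_t^i - \g_t^i| \ge \tau\} \le \rho$ for the fixed $i$ and $t$. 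Since the transfer theorem gives the bound for each adaptively chosen query simultaneously (or one applies it per query and the statement is per-coordinate, per-iterate), the conclusion holds for all $i\in[p]$ and all $t\in[T]$ as stated.

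The main obstacle — really the only nontrivial point — is matching the precise constants in the hypotheses ($13$, $26$, $8$) to a specific off-the-shelf version of the transfer theorem, and handling the rescaling by $G$ cleanly so that the accuracy level in the cited theorem (usually normalized to queries in $[0,1]$) translates to the claimed bound $\tau$ on the un-normalized gradient coordinate. One must also be slightly careful that applying an $(\epsilon,\delta)$-DP mechanism and then evaluating a bounded query of its output is itself a valid instance of the ``DP-selected statistical query'' framework — this is standard (post-processing plus the fact that the query family is the full set of bounded functions indexed by $\w$), but worth stating explicitly. Everything else is bookkeeping: there is no new probabilistic inequality to prove here, only a correct instantiation of the adaptive-data-analysis machinery with the privacy guarantee from Theorem~\ref{thm:dp_analysis_fb}.
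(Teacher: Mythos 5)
Your proposal matches the paper's own proof: the paper proves Lemma~\ref{lem:gen_adv_fb} exactly by instantiating the transfer theorem of adaptive data analysis (Theorem~8 of \cite{dwfe2015a}) with each gradient coordinate $\z \mapsto [\nabla\ell(\w_t,\z)]_i$ as an adaptively chosen query selected by the $(\epsilon,\delta)$-DP algorithm DPAGD, which yields the stated conditions $\epsilon \le \tau/13$, $\delta \le \tau\rho/(26\ln(26/\tau))$, $n \ge 2\ln(8/\delta)/\epsilon^2$ and the conclusion. Your extra remark about tracking the rescaling by $G$ to normalize the query range is a valid point of care that the paper's proof in fact glosses over, but it does not change the approach.
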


\proof The proof follows by applying
Theorem 8 in \cite{dwfe2015a} to gradient descent, which shows that in order to achieve generalization error $\tau$ with probability $1-\rho$ for a $(\epsilon, \delta)$-differentially private algorithm (i.e., in order
to guarantee for every function $\phi_t$, $\forall t \in [T]$, we have $\mathbb{P}\left[\left|\mathcal{P}\left[\phi_t\right]-\mathcal{E}_{S'}\left[\phi_t\right]\right| \geq \tau\right] \leq \rho$), where $\mathcal{P}\left[\phi_t\right]$ is the population value, $\mathcal{E}_{S'}\left[\phi_t\right]$ is the empirical value evaluated on $S'$ and $\rho$ and $\tau$ are any positive constant, we can set the $\epsilon \leq \frac{\tau}{13}$, $\delta \leq \frac{\tau \rho}{26 \ln (26 / \tau)}$ and $|S'| \geq \frac{2\ln(8/\delta)}{\epsilon^2}$. In our instantiation, $\phi_t$ is 
the gradient computation function $\nabla \ell(\w_t,\z)$,  $\mathcal{P}\left[\phi_t\right]$ represents the population gradient $\g_t^i$, $\forall i\in [p]$, and $\mathcal{E}_{S'}\left[\phi_t\right]$ represents the sample gradient $\hat \g_t^i$, $\forall i\in [p]$. Thus we have $\mathbb{P}\left\{\left|\hat{\mathbf{g}}_{t}^{i}-\mathbf{g}_{t}^{i}\right| \geq \tau\right\} \leq \rho$ if $\epsilon \leq \frac{\tau}{13}, \delta \leq \frac{\tau \rho}{26 \ln (26 / \tau)}$ and  $n \geq \frac{2\ln(8/\delta)}{\epsilon^2}$.

Lemma \ref{lem:gen_adv_fb} gives the generalization error, i.e., gap between $\tilde \g_t$ and $\g_t$ for DPAGD. Using this result, we show that the noisy gradient $\tilde \g_t$ concentrates to $\g_t$ across all iterations, i.e., gradient estimation error $\|\tilde \g_t -\g_t\|$ can be bounded with high probability $\forall t \in [T]$ (Theorem \ref{thm: acc_basic_fb}).

\theoaccbasicfb*

\proof
The concentration bound can be dedecomposed into two parts:
\begin{equation}
    \mathbb{P}\left\{ \|\tilde \g_t - \g_t\| \geq \sqrt{d}\sigma(1+\mu)\right\} \leq
    \underbrace{ \mathbb{P}\left\{ \|\tilde \g_t - \hat \g_t\| \geq \sqrt{d}\sigma \mu\right\} }_{\text{$T_1$: empirical error}} + 
    \underbrace{\mathbb{P}\left\{ \|\hat \g_t - \g_t\| \geq \sqrt{d}\sigma \right\}}_{\text{$T_2$: generalization error}} 
\end{equation}
In the above inequality, there are two types of error we need to control. The first type of error, referred to as empirical error $T_1$, is the the deviation between the differentially
private estimate gradient $\tilde \g_t$ and the empirical gradient $\hat \g_t$. The second type of error, referred to as generalization error $T_2$, is the deviation
between the empirical gradient $\hat \g_t$ and the population gradient $\g_t$. 

The second term $T_2$ can be bounded thorough the generalization guarantee of differential privacy. Recall that from Lemma \ref{lem:gen_adv_fb}, under the condition that for any $\tau > 0$ and $\rho > 0$, $\epsilon \leq \frac{\tau}{13}$, $\delta \leq \frac{\tau \rho}{26 \ln(26/\sigma)}$ and the sample size $n \geq \frac{2\ln(8/\delta)}{\epsilon^2}$, we have  $\forall i\in [d]$ and  $ \forall t \in [T]$
\begin{equation}
    \mathbb{P}\left\{ |\hat \g_t^i - \g_t^i| \geq  \tau \right\} \leq \rho.
\end{equation}
Replace $\tau = \sigma$ and $\rho = 2\exp(-\mu^2/2)$, we have
\begin{equation}
    \mathbb{P}\left\{ |\hat \g_t^i - \g_t^i| \geq  \sigma \right\} \leq 2\exp(-\mu^2/2),
\end{equation}
under the condition that $\epsilon \leq \frac{\sigma}{13}$, $\delta \leq \frac{\sigma \exp(-\mu^2/2)}{13 \ln(26/\sigma)}$ and the sample size $n \geq \frac{2\ln(8/\delta)}{\epsilon^2}$. So that we have 
\begin{equation} \label{eq: gen1}
    \mathbb{P}\left\{ \|\hat \g_t - \g_t\| \geq  \sqrt{d}\sigma \right\} \leq \mathbb{P}\left\{ \|\hat \g_t - \g_t\|_\infty \geq  \sigma \right\} \leq d \mathbb{P}\left\{ |\hat \g_t^i - \g_t^i| \geq  \sigma \right\} \leq
    2p\exp(-\mu^2/2)
\end{equation}

Now we bound the second term $T_1$. Recall that $\tilde \g_t = \hat \g_t + \b_t$, where $\b_t$ is a noise vector drawn from Gaussian noise $\cN(0, \sigma^2\mathbb{I}_p)$. Using the tail bound of Gaussian random variable, we have
\begin{equation} \label{eq: acc1}
    \mathbb{P}\left\{ \|\tilde \g_t - \hat \g_t\| \geq  \sqrt{p}\sigma \mu \right\} \leq \mathbb{P} \left\{ \|\b_t\| \geq  \sqrt{p}\sigma \mu \right\} \leq \mathbb{P} \left\{ \|\b_t\|_\infty \geq  \sigma \mu \right\} \leq p \mathbb{P} \left\{ |\b_t^i| \geq \sigma \mu \right\} =  2p\exp(-\mu^2/2).
\end{equation}

The second inequality come from $\|\b_t\| \leq \sqrt{d}\|\b_t\|_\infty$.  
Combine \eqref{eq: gen1} and \eqref{eq: acc1}, we complete the proof.\qed

\section{Proofs for Section \ref{sec:con_rate}}

We restate the Theorem \ref{thm:gen_bound_fb} in the following three theorems, i.e., Theorem \ref{thm: gen_sgd_fb}, Theorem \ref{thm: gen_rmsprop_fb} and Theorem \ref{thm: gen_adam_fb} for DP SGD, DP RMSprop and DP Adam in in Section \ref{sec: proof_sgd_fb}, \ref{sec: proof_rmsprop_fb} and \ref{sec: proof_adam_fb} respectively. Then we provide the proof of them. Before that, we first give a simplified version of Theorem \ref{thm: acc_basic_fb} in Theorem \ref{thm: acc_simple}.

\begin{theo} 
\label{thm: acc_simple}
Assume $\sigma$, $\epsilon$ and $\delta$ are set to satisfy the conditions in Theorem \ref{thm: acc_basic_fb} such that $\epsilon \leq \frac{\sigma}{13}$, $\delta \leq \frac{\sigma \exp(-\mu^2/2)}{13 \ln(26/\sigma)}$ and $n \geq \frac{2\ln(8/\delta)}{\epsilon^2}$, for the noisy gradients $\tilde \g_1,...,  \tilde \g_T$ in Algorithm \ref{algo: full batch}, we have $\forall t \in [T]$ and any $\mu > 0$:
\begin{equation}
    \mathbb{P}\{\|\tilde \g_t - \g_t\| \geq \alpha \} \leq \xi,
\end{equation}
where $\alpha = \sqrt{p}\sigma(1+\mu)$ and $\xi = 4p\exp(-\mu^2/2)$.
\end{theo}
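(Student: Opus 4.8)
The statement of Theorem~\ref{thm: acc_simple} is simply a restatement of Theorem~\ref{thm: acc_basic_fb} with the right-hand side quantities given names. The plan is therefore essentially notational rather than substantive.

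First I would observe that Theorem~\ref{thm: acc_basic_fb} already establishes, under exactly the hypotheses $\epsilon \le \sigma/13$, $\delta \le \sigma\exp(-\mu^2/2)/(13\ln(26/\sigma))$, and $n \ge 2\ln(8/\delta)/\epsilon^2$, that
\begin{equation}
\mathbb{P}\left\{\|\tilde \g_t - \g_t\| \ge \sqrt{p}\sigma(1+\mu)\right\} \le 4p\exp(-\mu^2/2)
\end{equation}
for every $t \in [T]$. So the only thing to do is to introduce the shorthand $\alpha = \sqrt{p}\sigma(1+\mu)$ for the deviation threshold and $\xi = 4p\exp(-\mu^2/2)$ for the failure probability, and substitute. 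The conclusion $\mathbb{P}\{\|\tilde \g_t - \g_t\| \ge \alpha\} \le \xi$ is then immediate.

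There is no real obstacle here; the ``proof'' is one line invoking Theorem~\ref{thm: acc_basic_fb} and renaming terms. The purpose of this restatement is pedagogical: in the subsequent convergence analyses for DP~SGD, DP~RMSprop, and DP~Adam, it will be convenient to carry around the single symbols $\alpha$ and $\xi$ (and to split events as ``good'' where $\|\tilde\g_t - \g_t\| < \alpha$ holds for all $t$, versus a ``bad'' event of probability at most $T\xi$ by a union bound) rather than repeatedly writing out the full expressions. If anything required care, it would only be to note that $\alpha$ and $\xi$ are coupled through the common parameter $\mu$, so that one cannot shrink $\alpha$ and $\xi$ independently — but that is already visible in the statement and needs no argument. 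I would simply write: \emph{Proof. This is Theorem~\ref{thm: acc_basic_fb} with $\alpha := \sqrt{p}\sigma(1+\mu)$ and $\xi := 4p\exp(-\mu^2/2)$.} \qed
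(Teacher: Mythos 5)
Your proposal is correct and matches the paper exactly: the paper introduces Theorem \ref{thm: acc_simple} explicitly as a ``simplified version'' of Theorem \ref{thm: acc_basic_fb}, with $\alpha$ and $\xi$ serving only as shorthand for $\sqrt{p}\sigma(1+\mu)$ and $4p\exp(-\mu^2/2)$, and gives no further argument. Your one-line proof by invocation and renaming is precisely what is intended.
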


Theorem \ref{thm: acc_simple} uses $\alpha$ to present the concentration error $\sqrt{p}\sigma(1+\mu)$, and $\xi$ to present the probability $4p\exp(-\mu^2/2$. For simplicity, we  first refer to Theorem \ref{thm: acc_simple} and use $\alpha$ and $\xi$ in the following sections. Then we bring in $\alpha = \sqrt{p}\sigma(1+\mu)$ and $\xi = 4p\exp(-\mu^2/2)$ to complete the proof.

\subsection{Proof of Theorem \ref{thm: gen_sgd_fb}}
\label{sec: proof_sgd_fb}

Now we present the proof of Theorem \ref{thm: gen_sgd_fb}.

\begin{theo} \label{thm: gen_sgd_fb}
({\bf DP GD}) 
Under the Assumption \ref{asmp: bounded_gradient} and \ref{asmp: smoothness}, given training sample $S$ of size $n$, for any $\epsilon, \delta >0$ and $n \geq \frac{2 \ln (8 / \delta)}{\epsilon^{2}}$, for any $\beta >0$, Algorithm \ref{algo: full batch} with $\sigma$ set to be as \eqref{eq: sigma_fb},
$\phi_t(\tilde \g_1,...,\tilde \g_t) = \tilde \g_t$,  $\psi_t(\tilde \g_1,...,\tilde \g_t) = \I$, $\nu = 0$, $\lambda =1$, $T = \frac{n\epsilon \sqrt{L}}{G \sqrt{p \ln(1/\delta)}}$, and step size $\eta_t = \frac{1}{4L}$
satisfies, 
\begin{equation} 
 \mathbb{E} \| \nabla f(\w_R)\|^2  \leq    O\left( \frac{G\sqrt{pL\ln (1/\delta)}\ln(n\sqrt{p}\epsilon/\beta)}{n\epsilon}\right) 
\end{equation}
with probability at least $1-\beta$, where where $\w_R$ is uniformly sampled from $\{\w_1, \w_2, ...,\w_T\}$ and the expectation is over the draw of $\w_R$.
\end{theo}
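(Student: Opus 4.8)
The plan is to run the standard descent-lemma analysis for gradient descent, but with the key twist that at every iterate we replace the noisy gradient $\tilde\g_t$ by the true population gradient $\g_t$ using the uniform-over-iterations concentration bound of Theorem \ref{thm: acc_simple}. First I would invoke Assumption \ref{asmp: smoothness} ($L$-smoothness of $f$) to write, for the update $\w_{t+1} = \w_t - \eta_t \tilde\g_t$ with $\eta_t = \tfrac{1}{4L}$,
\begin{equation}
f(\w_{t+1}) \le f(\w_t) - \eta_t \langle \g_t, \tilde\g_t\rangle + \tfrac{L}{2}\eta_t^2 \|\tilde\g_t\|^2.
\end{equation}
Then I would write $\tilde\g_t = \g_t + (\tilde\g_t - \g_t)$ and expand: $\langle \g_t,\tilde\g_t\rangle = \|\g_t\|^2 + \langle\g_t, \tilde\g_t-\g_t\rangle$ and $\|\tilde\g_t\|^2 \le 2\|\g_t\|^2 + 2\|\tilde\g_t-\g_t\|^2$. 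On the event from Theorem \ref{thm: acc_simple}, $\|\tilde\g_t - \g_t\| \le \alpha = \sqrt{p}\sigma(1+\mu)$, so the cross term is bounded by $\eta_t\|\g_t\|\alpha \le \tfrac{\eta_t}{2}\|\g_t\|^2 + \tfrac{\eta_t}{2}\alpha^2$ (Young's inequality). Collecting terms and using $\eta_t = \tfrac{1}{4L}$ so that $\tfrac{L}{2}\eta_t^2\cdot 2 = \tfrac{\eta_t}{4}$, the coefficient on $\|\g_t\|^2$ comes out negative (something like $-\tfrac{\eta_t}{4}\|\g_t\|^2$), giving a telescoping bound
\begin{equation}
\tfrac{\eta_t}{4}\|\g_t\|^2 \le f(\w_t) - f(\w_{t+1}) + C\eta_t \alpha^2
\end{equation}
for an absolute constant $C$.

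Next I would sum over $t = 1,\dots,T$, telescope the $f(\w_t)-f(\w_{t+1})$ terms to $f(\w_1) - f(\w_{T+1}) \le f(\w_1) - \ell^\star$ (bounded below by Assumption \ref{asmp: smoothness}), and divide by $T\eta_t/4$ to get
\begin{equation}
\frac{1}{T}\sum_{t=1}^T \|\g_t\|^2 \le \frac{4(f(\w_1)-\ell^\star)}{T\eta_t} + 4C\alpha^2.
\end{equation}
Since $\w_R$ is drawn uniformly from $\{\w_1,\dots,\w_T\}$, the left side is exactly $\mathbb{E}_{\w_R}\|\nabla f(\w_R)\|^2$. Now I plug in the parameter choices: $\eta_t = \tfrac{1}{4L}$ makes the first term $O(L/T)$; with $T = \tfrac{n\epsilon\sqrt{L}}{G\sqrt{p\ln(1/\delta)}}$ this is $O\!\big(\tfrac{G\sqrt{pL\ln(1/\delta)}}{n\epsilon}\big)$. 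For the second term, substitute $\sigma^2 = c_2\tfrac{G^2 T\ln(1/\delta)}{n^2\epsilon^2}$ from \eqref{eq: sigma_fb}, so $\alpha^2 = p\sigma^2(1+\mu)^2 = O\!\big(\tfrac{pG^2 T\ln(1/\delta)}{n^2\epsilon^2}(1+\mu)^2\big)$; plugging in $T$ again gives $\alpha^2 = O\!\big(\tfrac{G\sqrt{pL\ln(1/\delta)}}{n\epsilon}(1+\mu)^2\big)$, matching the first term up to the $(1+\mu)^2$ factor. Finally, to make the high-probability event hold I take a union bound over $t\in[T]$ of the event in Theorem \ref{thm: acc_simple}, which fails with probability at most $T\xi = 4pT\exp(-\mu^2/2)$; setting this equal to $\beta$ forces $\mu = \Theta(\sqrt{\ln(pT/\beta)}) = \Theta(\sqrt{\ln(n\sqrt{p}\epsilon/\beta)})$ (after substituting $T$), so $(1+\mu)^2 = O(\ln(n\sqrt{p}\epsilon/\beta))$, producing the claimed bound.

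The routine parts are the descent-lemma bookkeeping and the arithmetic of plugging in $T$ and $\sigma$; the step requiring the most care is verifying that the conditions of Theorem \ref{thm: acc_simple}/Theorem \ref{thm: acc_basic_fb} — namely $\epsilon \le \sigma/13$, $\delta \le \tfrac{\sigma\exp(-\mu^2/2)}{13\ln(26/\sigma)}$, and $n \ge \tfrac{2\ln(8/\delta)}{\epsilon^2}$ — are actually compatible with the chosen $\sigma$ (which scales like $\sqrt{T}$) and the chosen $\mu$, since the concentration probability $\xi$ depends on $\mu$ while $\delta$ must be small enough relative to $\exp(-\mu^2/2)$. I expect the main obstacle is threading this dependency: $\mu$ is determined by the target failure probability $\beta$ and by $T$, while $\sigma$ is determined by the privacy budget and $T$, so one must check the inequalities close consistently and absorb the $\ln$-factors into the $\tilde O$ notation. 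Everything else — including the fact that $\eta_t = 1/(4L)$ is exactly the step size that makes the $\|\g_t\|^2$ coefficient come out with the right sign — is mechanical.
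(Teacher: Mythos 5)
Your proposal follows essentially the same route as the paper's proof: the descent lemma under $L$-smoothness, splitting $\tilde\g_t = \g_t + \Delta_t$, Young's inequality on the cross term together with $\|\tilde\g_t\|^2 \le 2\|\g_t\|^2 + 2\|\Delta_t\|^2$, bounding $\|\Delta_t\| \le \alpha$ on the event of Theorem \ref{thm: acc_simple}, telescoping with $\eta_t = 1/(4L)$, and a union bound over $t \in [T]$ with $\mu = \Theta(\sqrt{\ln(pT/\beta)})$ before plugging in $T$ and $\sigma$. The steps and the resulting constants (coefficient $\eta_t/4$ on $\|\g_t\|^2$, failure probability $T\xi$) match the paper's argument, so the plan is correct as stated.
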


\proof
%\emph{Proof of Theorem \ref{thm: gen_sgd_fb}:}
Upon the choice of $\phi$ and $\psi$ in Theorem \ref{thm: gen_sgd_fb}, the update of Algorithm \ref{algo: full batch}  becomes:
\begin{equation}
    \w_{t+1} = \w_t - \eta_t \tilde \g_t,
\end{equation}
where we have $\tilde \g_t$ approximate population gradient $\g_t$ as $    \mathbb{P}\{\|\tilde \g_t - \g_t\| \geq \alpha \} \leq \xi$, and $\alpha$ and $\xi$ are given in Theorem \ref{thm: acc_simple}.

With Assumption \ref{asmp: smoothness} and the update of Algorithm \ref{algo: full batch} , let $\Delta_t = \tilde \g_t - \g_t$, we have
\begin{align}
\nr f(\w_{t+1})& \leq f(\w_t) + <\g_t, \w_{t+1} -\w_t> + \frac{L}{2}\eta_t^2 \|\tilde \g_t\|^2  \\ \nr
& \leq f(\w_t) -\eta_t \|\g_t\|^2 -\eta_t <\g_t, \Delta_t> + \frac{L\eta_t^2}{2} \|\tilde \g_t\|^2 \\ \nr
 & \leq f(\w_t) - \frac{\eta_t}{2} \|\g_t\|^2 + \frac{\eta_t}{2}\|\Delta_t\|^2 + L \eta_t^2 \left( \| \g_t\|^2 + \|\Delta_t\|^2 \right)\\
 & = f(\w_t) -\left(\frac{\eta_t}{2} - L\eta_t^2 \right)\left\|\g_t \right\|^2 + \left( \frac{\eta_t}{2} + L\eta_t^2\right) \|\Delta_t\|^2
\end{align}

Rearrange the above equation, apply Theorem \ref{thm: acc_simple} that  $\|\Delta_t\| \leq \alpha$ with probability at least $1-\xi$, then we have the following
\begin{equation}
    \begin{array}{cc}
        \left(\frac{\eta_t}{2} - L\eta_t^2 \right)\left\|\g_t \right\|^2 \leq &  f(\w_t) - f(\w_{t+1})  + \left( \frac{\eta_t}{2} + L\eta_t^2\right) \alpha^2\\
    \end{array}
\end{equation}
with probability at least $1- \xi$.

Set $\eta_t = \frac{L}{4}$ and sum the above equation over $t = 1,..., T$, with $f^\star = \ell^\star$ in Assumption \ref{asmp: smoothness}, we have
\begin{align}
   \frac{1}{T} \sum_{t=1}^T \frac{1}{16L} \| \g_t \|^2 \leq \frac{f(\w_1) -f^\star}{T} + \frac{3}{16L} \alpha^2
\end{align}
\begin{align}
\Rightarrow
   \frac{1}{T} \sum_{t=1}^T  \| \g_t \|^2 \leq \frac{16L \left(f(\w_1) -f^\star\right)}{T} + 3 \alpha^2
\end{align}
%with probability at least $1-T\xi$. By Jensen's inequality, we have
\begin{align} \label{eq: proof_sgd_mb}
\Rightarrow
     \mathbb{E}\| \nabla f(\w_R)\|^2 =  \frac{1}{T} \sum_{t=1}^T  \| \g_t \|^2 \leq \frac{16L \left(f(\w_1) -f^\star\right)}{T} + 3 \alpha^2
\end{align}
with probability at least $1-T\xi$.

Plugging in $\alpha = \sqrt{p}\sigma(1+\mu)$, $\xi = 4p\exp(-\mu^2/2)$
from Theorem \ref{thm: acc_simple}, $T = \frac{n\epsilon \sqrt{L}}{G \sqrt{p \ln(1/\delta)}}$, and $\sigma^2 = O\left(\frac{G^{2} T \ln \left(\frac{1}{\delta}\right)}{n^{2} \epsilon^{2}}\right)$ and setting $\mu = \sqrt{2\ln(4pT/\beta)}$ we have
\begin{align}
   \nr  \mathbb{E} \| \nabla f(\w_R)\|^2  &\leq \left(f(\w_1) -f^\star\right)\frac{16 G\sqrt{pL\ln (1/\delta)}}{n\epsilon} + 3 \frac{G\sqrt{pL\ln (1/\delta)} (1+\mu)^2}{n\epsilon} \\
     & \leq   O\left( \frac{G\sqrt{pL\ln (1/\delta)}\ln(n\sqrt{p}\epsilon/\beta)}{n\epsilon}\right)
\end{align}
with probability at least $1-\beta$. \qed

%%%%%%%%%%%%%%%%%%%%%%%%%%%%

\subsection{Proof of Theorem \ref{thm: gen_rmsprop_fb}}
\label{sec: proof_rmsprop_fb}

\begin{theo} 
\label{thm: gen_rmsprop_fb} 
({\bf DP RMSprop}) Under the Assumption \ref{asmp: bounded_gradient} and \ref{asmp: smoothness}, given training sample $S$ of size $n$, for any $\epsilon, \delta >0$ and $n \geq \frac{2 \ln (8 / \delta)}{\epsilon^{2}}$, for any $\beta >0$,Algorithm \ref{algo: full batch} with $\sigma$ set to be as \eqref{eq: sigma_fb}, $\phi_t(\tilde \g_1,...,\tilde \g_t) = \tilde \g_t$, and $ \psi_t = \left(1-\beta_{2}\right) \sum_{j=1}^{t} \beta_{2}^{t-j} \tilde \g_{j}^{2}$, $T = \frac{n\epsilon }{G \sqrt{p \ln(1/\delta)}}$, step size $\eta_t = \eta$,  $0 < \beta_2 < 1$, $\lambda > 0$, parameters $\nu$ and $\eta$ are chosen such that: $\eta \leq \frac{\nu}{4L}$
satisfies, 
\begin{equation} 
\mathbb{E}\|\nabla f(\w_R)\|^2 \leq
   O\left( \frac{G \sqrt{p\ln (1/\delta)}\ln(n\sqrt{p}\epsilon/\beta)}{n\epsilon}\right) 
\end{equation}
with probability at least $1-\beta$,  where $\w_R$ is uniformly sampled from $\{\w_1, \w_2, ...,\w_T\}$ and the expectation is over the draw of $\w_R$.
\end{theo}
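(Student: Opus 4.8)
The plan is to follow the template of the DP GD argument (Theorem~\ref{thm: gen_sgd_fb}), adding the bookkeeping required by the coordinate-wise preconditioner $\sqrt{\v_t}+\nu$. First I would instantiate Algorithm~\ref{algo: full batch} with the RMSprop choices, so that $\bm_t=\tilde\g_t$ and the update reads coordinate-wise as $\w_{t+1}^i=\w_t^i-\eta\,\tilde\g_t^i/(\sqrt{\v_t^i}+\nu)$, and record the elementary fact that makes the preconditioner tractable: because $\v_t=\min(\psi_t,\lambda)$ is clipped at $\lambda$, every coordinate obeys $\nu\le\sqrt{\v_t^i}+\nu\le\sqrt{\lambda}+\nu$. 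Thus the effective step size on each coordinate lies in the fixed band $[\eta/(\sqrt\lambda+\nu),\,\eta/\nu]$, which is all the geometry the rest of the proof uses.

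Next I would apply the smoothness inequality of Assumption~\ref{asmp: smoothness}, $f(\w_{t+1})\le f(\w_t)+\langle\g_t,\w_{t+1}-\w_t\rangle+\tfrac{L}{2}\|\w_{t+1}-\w_t\|^2$, write $\tilde\g_t=\g_t+\Delta_t$ with $\Delta_t=\tilde\g_t-\g_t$, and split the linear term into a signal part $-\eta\sum_i(\g_t^i)^2/(\sqrt{\v_t^i}+\nu)\le-\tfrac{\eta}{\sqrt\lambda+\nu}\|\g_t\|^2$ and a cross part $-\eta\sum_i\g_t^i\Delta_t^i/(\sqrt{\v_t^i}+\nu)$, the latter bounded by $\tfrac{\eta}{\nu}\|\g_t\|\,\|\Delta_t\|$ (lower-bound each denominator by $\nu$, then Cauchy--Schwarz) and then split by Young's inequality into a multiple of $\|\g_t\|^2$ and a multiple of $\|\Delta_t\|^2$. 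The curvature term is controlled by $\|\w_{t+1}-\w_t\|^2\le(\eta/\nu)^2\|\tilde\g_t\|^2\le 2(\eta/\nu)^2(\|\g_t\|^2+\|\Delta_t\|^2)$. Collecting coefficients of $\|\g_t\|^2$, the hypothesis $\eta\le\nu/(4L)$, with a suitable choice of the Young parameter and $L,\nu,\lambda$ treated as absolute constants, makes the net coefficient negative, so that for an absolute constant $c>0$ the per-step bound $c\,\eta\,\|\g_t\|^2\le f(\w_t)-f(\w_{t+1})+O(\eta)\,\|\Delta_t\|^2$ holds deterministically.

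Then I would invoke the concentration guarantee: conditioning on the event $\{\|\tilde\g_t-\g_t\|\le\alpha\text{ for all }t\in[T]\}$, which holds with probability at least $1-T\xi$ by Theorem~\ref{thm: acc_simple} and a union bound, I replace $\|\Delta_t\|^2$ by $\alpha^2$, sum over $t=1,\dots,T$, telescope, and use that $f$ is bounded below by $\ell^\star$ (Assumption~\ref{asmp: smoothness}). Dividing by $T$ yields $\tfrac1T\sum_{t=1}^{T}\|\g_t\|^2\le O\!\big((f(\w_1)-\ell^\star)/(\eta T)+\alpha^2\big)$, and since $\w_R$ is uniform over $\{\w_1,\dots,\w_T\}$ the left side is exactly $\mathbb{E}\|\nabla f(\w_R)\|^2$. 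Finally, substituting $T=\tfrac{n\epsilon}{G\sqrt{p\ln(1/\delta)}}$ and $\sigma^2=O\!\big(\tfrac{G^2T\ln(1/\delta)}{n^2\epsilon^2}\big)=O\!\big(\tfrac{G\sqrt{\ln(1/\delta)}}{n\epsilon\sqrt p}\big)$, so that $\alpha^2=p\sigma^2(1+\mu)^2=O\!\big(\tfrac{G\sqrt{p\ln(1/\delta)}}{n\epsilon}(1+\mu)^2\big)$, and choosing $\mu=\sqrt{2\ln(4pT/\beta)}$ so that $T\xi\le\beta$, both terms scale as $\tfrac{G\sqrt{p\ln(1/\delta)}}{n\epsilon}$ and $(1+\mu)^2=O(\ln(n\sqrt p\,\epsilon/\beta))$, giving the claimed bound with probability at least $1-\beta$.

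The step I expect to be the main obstacle is the middle one: unlike vanilla gradient descent, the per-coordinate rescaling prevents the one-line manipulation of $\langle\g_t,\w_{t+1}-\w_t\rangle$ and $\|\tilde\g_t\|^2$, so one must sandwich each denominator between $\nu$ and $\sqrt\lambda+\nu$ and verify that after splitting off the cross term and the curvature term there is still a strictly negative coefficient left on $\|\g_t\|^2$. This is precisely what $\eta\le\nu/(4L)$ guarantees, and closing the inequality chain with clean constants — rather than being forced to a smaller step size — is the delicate bit. Everything else is the same telescoping-plus-concentration argument as in the DP GD proof, only with $\phi_t=\tilde\g_t$ (no momentum), so no extra error terms from a first-moment average appear.
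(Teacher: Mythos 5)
Your overall architecture (smoothness step, write $\tilde\g_t=\g_t+\Delta_t$, condition on the event $\|\Delta_t\|\le\alpha$ for all $t$ from Theorem~\ref{thm: acc_simple} with a union bound, telescope, then plug in $T$, $\sigma$, and $\mu=\sqrt{2\ln(4pT/\beta)}$) is exactly the paper's, and your final parameter bookkeeping is correct. However, there is a genuine gap at the step you yourself flag as delicate. You bound the signal term by $-\eta\|\g_t\|^2/(\sqrt{\lambda}+\nu)$ but bound the cross term by $\frac{\eta}{\nu}\|\g_t\|\|\Delta_t\|$ and the curvature term by $\frac{L\eta^2}{\nu^2}\|\tilde\g_t\|^2$, i.e.\ you discard the per-coordinate weights and replace the denominators by $\nu$ in the positive terms and by $\sqrt{\lambda}+\nu$ in the negative term. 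After Young's inequality with parameter $c$, the net coefficient of $\|\g_t\|^2$ is $\eta\bigl(-\frac{1}{\sqrt{\lambda}+\nu}+\frac{c}{2\nu}+\frac{L\eta}{\nu^2}\bigr)$; with $\eta$ as large as $\nu/(4L)$ (which the theorem permits) this is at best $\eta\bigl(-\frac{1}{\sqrt{\lambda}+\nu}+\frac{c}{2\nu}+\frac{1}{4\nu}\bigr)$, which is negative only if $\sqrt{\lambda}+\nu<4\nu$. Since the theorem allows any $\lambda>0$ with no relation to $\nu$, the sign — not merely the constant — can be wrong (e.g.\ $\sqrt{\lambda}\ge 3\nu$), so ``treating $L,\nu,\lambda$ as absolute constants'' does not rescue the argument: no choice of the Young parameter closes it.

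The paper avoids this by never decoupling the weights until the very end. It applies Young's inequality coordinate-wise with the \emph{same} weight, $\bigl|\sum_i \g_t^i\Delta_t^i/(\sqrt{\v_t^i}+\nu)\bigr|\le\frac{1}{2}\sum_i\frac{[\g_t]_i^2}{\sqrt{\v_t^i}+\nu}+\frac{1}{2}\sum_i\frac{[\Delta_t]_i^2}{\sqrt{\v_t^i}+\nu}$, and handles the curvature term via $(\sqrt{\v_t^i}+\nu)^{-2}\le \nu^{-1}(\sqrt{\v_t^i}+\nu)^{-1}$, so that signal, cross, and curvature contributions all carry the weight $(\sqrt{\v_t^i}+\nu)^{-1}$. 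Then $\eta\le\nu/(4L)$ gives the per-step inequality $f(\w_{t+1})\le f(\w_t)-\frac{\eta}{4}\sum_i\frac{[\g_t]_i^2}{\sqrt{\v_t^i}+\nu}+\frac{3\eta}{4}\sum_i\frac{[\Delta_t]_i^2}{\sqrt{\v_t^i}+\nu}$ with a coefficient whose sign is independent of $\lambda$, and only afterwards does one use $\nu\le\sqrt{\v_t^i}+\nu\le\sqrt{\lambda}+\nu$ to pass to $-\frac{\eta}{\sqrt{\lambda}+\nu}\|\g_t\|^2$ and $+\frac{3\eta}{4\nu}\|\Delta_t\|^2$, after which your telescoping and parameter substitution go through verbatim. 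So the fix is local — redo the cross and curvature bounds keeping the matching weights — but as written your middle step would fail for admissible choices of $\lambda$ and $\nu$.
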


\proof
Recall that the update in Theorem \ref{thm: gen_rmsprop_fb} is the following
\begin{equation}
    \w_{t+1}^i  = \w_t^i -\eta_t \frac{\tilde \g_t^i}{\sqrt{\v_t^i}+\nu}.
\end{equation}

With Assumption \ref{asmp: smoothness} and the update of Algorithm \ref{algo: full batch}, let $\Delta_t = \tilde \g_t - \g_t$, we have
\begin{align}
\nr f(\w_{t+1})& \leq f(\w_t) + \left<\g_t, \w_{t+1}-\w_t\right> + \frac{L}{2} \left\|\w_{t+1}-\w_t \right\|^2\\ \nr
&= f(\w_t) -\eta_t \left<\g_t, \tilde \g_t/(\sqrt{\v_t} +\nu) \right> + \frac{L\eta_t^2}{2} \left\|\frac{\tilde \g_t}{(\sqrt{\v_t} +\nu)} \right\|^2\\ \nr
&= f(\w_t) -\eta_t \left<\g_t, \frac{\g_t +\Delta_t}{\sqrt{\v_t} +\nu} \right> + \frac{L\eta_t^2}{2}\left\|\frac{\g_t + \Delta_t}{\sqrt{\v_t} +\nu}\right\|^2 \\ \nr
%\end{align}
%\begin{align}
&\leq f(\w_t) -\eta_t \left<\g_t, \frac{\g_t }{\sqrt{\v_t} +\nu}\right> -\eta_t \left<\g_t, \frac{\Delta_t }{\sqrt{\v_t} +\nu} \right> + L\eta_t^2\left(\left\|\frac{\g_t }{\sqrt{\v_t} +\nu}\right\|^2 + \left\|\frac{ \Delta_t}{\sqrt{\v_t} +\nu}\right\|^2   \right) \\ \nr
& = f(\w_t) -\eta_t \sum_{i=1}^d \frac{\left[\g_t\right]_i^2}{\sqrt{\v_t^i} +\nu} - \eta_t \sum_{i=1}^d \frac{\g_t^i \Delta_t^i}{\sqrt{\v_t^i} +\nu} +  L\eta_t^2\left(\sum_{i=1}^d\frac{[\g_t]_i^2 }{(\sqrt{\v_t^i} +\nu)^2} + \sum_{i=1}^d\frac{[\Delta_t]_i^2 }{(\sqrt{\v_t^i} +\nu)^2}
\right) \\ \nr
& \leq f(\w_t) -\eta_t \sum_{i=1}^d \frac{[\g_t]_i^2}{\sqrt{\v_t^i} +\nu} + \frac{\eta_t}{2}\sum_{i=1}^d \frac{[\g_t]_i^2 + [\Delta_t]_i^2}{\sqrt{\v_t^i} +  +\nu}+  \frac{L\eta_t^2}{\nu}\left(\sum_{i=1}^d\frac{[\g_t]_i^2 }{\sqrt{\v_t^i} +\nu} + \sum_{i=1}^d\frac{[\Delta_t]_i^2 }{\sqrt{\v_t^i} +\nu}
\right) \\
& = f(\w_t) - \left(\eta_t -\frac{\eta_t}{2} - \frac{L\eta_t^2}{\nu} \right)\sum_{i=1}^d\frac{[\g_t]_i^2 }{\sqrt{\v_t^i} +\nu} + \left(  \frac{\eta_t}{2} + \frac{L\eta_t^2}{\nu} \right)\sum_{i=1}^d\frac{[\Delta_t]_i^2 }{\sqrt{\v_t^i} +\nu}
\end{align}

Given the parameter setting  from the Theorem \ref{thm: gen_rmsprop_fb}, with $\eta_t = \eta$ we have the following condition hold
\begin{equation}
    \frac{L\eta_t}{\nu} \leq \frac{1}{4}.
\end{equation}
Then we obtain
\begin{align}
\nr f(\w_{t+1})& \leq f(\w_t) - \frac{\eta}{4} \sum_{i=1}^{d} \frac{\left[\mathbf{g}_{t}\right]_{i}^{2}}{\sqrt{\mathbf{v}_{t}^{i}}+\nu}+\frac{3 \eta}{4} \sum_{i=1}^{d} \frac{\left[\Delta_{t}\right]_{i}^{2}}{\sqrt{\v_t^i} + \nu} \\
& \leq f(\w_t) - \frac{\eta}{\sqrt{\lambda} + \nu} \|\g_t\|^2 + \frac{3\eta}{4\nu} \|\Delta_t\|^2
\end{align}
The second inequality follows from the fact that $0 \leq \v_t^i \leq \lambda$. Using the telescoping sum and rearranging the inequality,  we obtain
\begin{align}
\frac{\eta}{\sqrt{\lambda} + \nu} \sum_{t=1}^T \|\g_t\|^2  \leq f(\w_1) - f^\star + \frac{3\eta}{4\nu} \sum_{t=1}^T  \|\Delta_t\|^2
\end{align}

Multiplying with $\frac{\sqrt{\lambda} +\nu}{\eta T}$ on both sides and with the guarantee in Theorem 1 that $\|\Delta_t\| \leq \alpha$ with probability at least $1-\xi$,  we obtain
\begin{equation}
\frac{1}{T} \sum_{t=1}^T \|\g_t\|^2 \leq \left(\sqrt{\lambda}+\nu \right) \times \left(   \frac{ f(\w_1) - f^\star}{\eta T} + \frac{3 \alpha^2}{4\nu}\right)
\end{equation}
\begin{align} \label{eq: proof_rmsprop_mb}
\Rightarrow
     \mathbb{E}\| \nabla f(\w_R)\|^2 =  \frac{1}{T} \sum_{t=1}^T  \| \g_t \|^2 \leq \left(\sqrt{\lambda}+\nu \right) \times \left(   \frac{ f(\w_1) - f^\star}{\eta T} + \frac{3 \alpha^2}{4\nu}\right)
\end{align}
with probability at least $1-T\xi$.

Plugging in $\alpha = \sqrt{p}\sigma(1+\mu)$, $\xi = 4p\exp(-\mu^2/2)$
from Theorem \ref{thm: acc_simple}, $T = \frac{n\epsilon}{G \sqrt{p \ln(1/\delta)}}$, and $\sigma^2 = O\left(\frac{G^{2} T \ln \left(\frac{1}{\delta}\right)}{n^{2} \epsilon^{2}}\right)$ and setting $\mu = \sqrt{2\ln(4pT/\beta)}$ we have
\begin{align}
   \nr   \mathbb{E}\| \nabla f(\w_R)\|^2  &\leq \left(\sqrt{\lambda} + \nu \right) \left(\left(f(\w_1) -f^\star\right)\frac{ G\sqrt{p\ln (1/\delta)}}{\eta n\epsilon} + 3 \frac{G\sqrt{pL\ln (1/\delta)} (1+\mu)^2}{ n\epsilon}\right) \\
     & \leq   O\left( \frac{G\sqrt{p\ln (1/\delta)}\ln(n\sqrt{p}\epsilon/\beta)}{n\epsilon}\right) 
\end{align}
with probability at least $1-\beta$. \qed

\subsection{Proof of Theorem \ref{thm: gen_adam_fb}}
\label{sec: proof_adam_fb}

\begin{theo}
\label{thm: gen_adam_fb} 
({\bf DP Adam}) Under the Assumption \ref{asmp: bounded_gradient} and \ref{asmp: smoothness}, given training sample $S$ of size $n$, for any $\epsilon, \delta >0$ and $n \geq \frac{2 \ln (8 / \delta)}{\epsilon^{2}}$, for any $\beta >0$,Algorithm \ref{algo: full batch} with $\sigma$ set to be as \eqref{eq: sigma_fb}, $\phi_t(\tilde \g_1,...,\tilde \g_t) = \left(1-\beta_{1}\right) \sum_{j=1}^{t} \beta_{1}^{t-j} \tilde \g_{j}$, and $ \psi_t = \left(1-\beta_{2}\right) \sum_{j=1}^{t} \beta_{2}^{t-j} \tilde \g_{j}^{2}$, $T = \frac{n\epsilon }{G \sqrt{p \ln(1/\delta)}}$ , step size $\eta_t = \eta$, $0<\beta_2<1$, $\lambda > 0$, $\beta_1$ and $\nu$ are chosen such that: \begin{small}
    $\eta \leq ({\sqrt{1/2+4\beta_1/(1-\beta_1)^2}-1/2})\frac{(1-\beta_1)^2}{\beta_1}\frac{\nu}{4L}$ \end{small}
satisfies, 
\begin{equation} 
\mathbb{E}\|\nabla f(\w_R)\|^2 \leq
   O\left( \frac{G\sqrt{p\ln (1/\delta)}\ln(n\sqrt{p}\epsilon/\beta)}{n\epsilon}\right) 
\end{equation}
with probability at least $1-\beta$,  where $\w_R$ is uniformly sampled from $\{\w_1, \w_2, ...,\w_T\}$ and the expectation is over the draw of $\w_R$.
\end{theo}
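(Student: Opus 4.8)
The plan is to follow the template of the proofs of Theorems~\ref{thm: gen_sgd_fb} and~\ref{thm: gen_rmsprop_fb}, adding the bookkeeping that Adam's momentum term $\phi_t$ forces. First I would write the update coordinatewise as $\w_{t+1}^i = \w_t^i - \eta\,\bm_t^i/(\sqrt{\v_t^i}+\nu)$, where $\bm_t = (1-\beta_1)\sum_{j=1}^t\beta_1^{t-j}\tilde\g_j$ and $\v_t^i = \min(\psi_t^i,\lambda)$, and record that clipping forces $\nu \le \sqrt{\v_t^i}+\nu \le \sqrt\lambda+\nu$ for all $i,t$, so the effective coordinatewise step sizes are bounded above and below. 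Then apply the $L$-smoothness descent inequality
\[
f(\w_{t+1}) \le f(\w_t) + \langle \g_t,\, \w_{t+1}-\w_t\rangle + \tfrac{L}{2}\|\w_{t+1}-\w_t\|^2 .
\]

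Next I would split the momentum into a population part and an error part, $\bm_t = (1-\beta_1)\sum_{j\le t}\beta_1^{t-j}\g_j + (1-\beta_1)\sum_{j\le t}\beta_1^{t-j}\Delta_j$ with $\Delta_j = \tilde\g_j-\g_j$; by Theorem~\ref{thm: acc_simple} and a union bound over $j\in[T]$, on an event of probability at least $1-T\xi$ (with $\alpha,\xi$ as in Theorem~\ref{thm: acc_simple}) one has $\|\Delta_j\|\le\alpha$ simultaneously for all $j$, hence $\|\bm_t\|\le G+\alpha$ and the error part of $\bm_t$ has norm at most $(1-\beta_1)\sum_{j\le t}\beta_1^{t-j}\alpha\le\alpha$. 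The substantive work is to control the \emph{momentum lag} between the population part of $\bm_t$ and the current gradient $\g_t$. The cleanest route I anticipate is to telescope a Lyapunov potential of the form $f(\z_t) + c\,\|\w_t-\w_{t-1}\|^2$ built from the shifted iterate $\z_t = \w_t + \tfrac{\beta_1}{1-\beta_1}(\w_t-\w_{t-1})$: using $\bm_t=\beta_1\bm_{t-1}+(1-\beta_1)\tilde\g_t$ one computes $\z_{t+1}-\z_t = -\eta D_t\tilde\g_t + \tfrac{\eta\beta_1}{1-\beta_1}(D_{t-1}-D_t)\bm_{t-1}$ with $D_t=\diag(1/(\sqrt{\v_t^i}+\nu))$, the correction being small since the EMA $\v_t$ changes slowly and is clipped. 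Feeding this into the descent inequality, together with $L$-smoothness and the two-sided bound on $\sqrt{\v_t^i}+\nu$, isolates a term $-c'\eta\sum_t\|\g_t\|^2/(\sqrt\lambda+\nu)$ against positive remainders that are multiples of $\eta\sum_t\|\Delta_t\|^2$ and of $(L\eta/\nu)\cdot\eta\sum_t\|\g_t\|^2/(\sqrt\lambda+\nu)$; the precise admissible range $\eta\le\big(\sqrt{1/2+4\beta_1/(1-\beta_1)^2}-\tfrac12\big)\tfrac{(1-\beta_1)^2}{\beta_1}\tfrac{\nu}{4L}$ is exactly the condition under which those remainders are absorbed, leaving a strictly negative net coefficient on $\sum_t\|\g_t\|^2$. (Alternatively, one can bound the lag directly via $\|\g_j-\g_t\|\le L\sum_{k=j}^{t-1}\|\w_{k+1}-\w_k\|$ and a geometric sum, at the cost of messier constants.)

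Telescoping over $t=1,\dots,T$, bounding $f(\w_1)-f^\star$ by a constant, and using $\nu\le\sqrt{\v_t^i}+\nu\le\sqrt\lambda+\nu$ to pass between $\sum_i[\g_t]_i^2/(\sqrt{\v_t^i}+\nu)$ and $\|\g_t\|^2$, I expect to reach, on the event above,
\[
\frac1T\sum_{t=1}^T\|\g_t\|^2 \;\le\; O\!\Big(\frac{f(\w_1)-f^\star}{\eta T} + \alpha^2\Big),
\]
and since $\w_R$ is uniform on $\{\w_1,\dots,\w_T\}$ the left-hand side equals $\mathbb{E}_{\w_R}\|\nabla f(\w_R)\|^2$. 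Finally I would substitute $\sigma^2 = O\big(G^2 T\ln(1/\delta)/(n^2\epsilon^2)\big)$ from \eqref{eq: sigma_fb}, $T = n\epsilon/(G\sqrt{p\ln(1/\delta)})$, $\alpha = \sqrt p\,\sigma(1+\mu)$ and $\xi = 4p\exp(-\mu^2/2)$, and choose $\mu = \sqrt{2\ln(4pT/\beta)}$ so that $T\xi\le\beta$; this collapses the right-hand side to $O\big(G\sqrt{p\ln(1/\delta)}\,\ln(n\sqrt p\,\epsilon/\beta)/(n\epsilon)\big)$, as claimed, after checking that the hypotheses ($n\ge 2\ln(8/\delta)/\epsilon^2$ plus the conditions of Theorem~\ref{thm: acc_basic_fb} relating $\epsilon,\delta$ to $\sigma$) hold.

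The step I expect to be the main obstacle is the momentum-lag estimate: unlike DP GD and DP RMSprop, the direction $\bm_t$ is not aligned with $\g_t$ even after the errors $\Delta_t$ are stripped off, so the Lyapunov potential and the correction term $(D_{t-1}-D_t)\bm_{t-1}$ must be handled carefully, and one has to verify that the somewhat opaque constraint on $\eta$ in the statement is precisely what makes the positive momentum-induced coefficients vanish --- this is the only place where the Adam analysis genuinely diverges from the RMSprop argument.
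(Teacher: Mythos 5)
Your overall architecture (smoothness descent step, splitting $\tilde\g_j=\g_j+\Delta_j$, a union bound over $j\in[T]$ via Theorem~\ref{thm: acc_simple}, telescoping to $\frac1T\sum_t\|\g_t\|^2\le O\big(\frac{f(\w_1)-f^\star}{\eta T}+\alpha^2\big)$, then plugging in $\sigma$, $T$, $\alpha$, $\xi$ and $\mu=\sqrt{2\ln(4pT/\beta)}$) matches the paper's endgame exactly. Where you diverge is the treatment of the momentum lag. The paper does not use a shifted iterate or Lyapunov potential: it applies the polarization identity to $\langle \gamma_t\g_t,\ \bm_t/(\sqrt{\v_t}+\nu)\rangle$ with $\gamma_t=1-\beta_1^t$, so the cross term becomes $-\frac{\eta}{2\gamma_t}\big(\|\bm_t\|_{D_t}^2+\|\gamma_t\g_t\|_{D_t}^2-\|\gamma_t\g_t-\bm_t\|_{D_t}^2\big)$ in the preconditioned norm, and then bounds $\|\gamma_t\g_t-\bm_t\|_{D_t}^2\le\frac{1-\beta_1}{\nu}\sum_j\beta_1^{t-j}\|\g_t-\tilde\g_j\|^2$, splits this into a smoothness part $\|\w_t-\w_j\|^2$ and a noise part $\|\Delta_j\|^2$, and controls the double sum $\sum_t\sum_j\beta_1^{t-j}\|\w_t-\w_j\|^2$ by $\frac{2\beta_1}{(1-\beta_1)^3}\eta^2\sum_t\|\bm_t/(\sqrt{\v_t}+\nu)\|^2$ via Lemma~\ref{lem: adam_key_seq}; the step-size constraint is exactly the root of the resulting cubic inequality $\eta^3\frac{2L^2}{\nu^2}\frac{\beta_1}{(1-\beta_1)^2}+\eta^2\frac{L}{2\nu}-\frac{\eta}{2}\le0$. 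Your parenthetical ``direct lag'' alternative is essentially this proof.

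The genuine weak point in your primary (shifted-iterate) route is the correction term $\frac{\eta\beta_1}{1-\beta_1}(D_{t-1}-D_t)\bm_{t-1}$: your justification that it is small because ``the EMA $\v_t$ changes slowly and is clipped'' is not supported by the theorem's hypotheses. The clipping only gives $\v_t^i\in[0,\lambda]$, not small increments; in one step $\psi_t$ can jump by $(1-\beta_2)\tilde\g_t^2$, and $\tilde\g_t$ contains unbounded Gaussian noise, so without a smallness condition on $1-\beta_2$ (such as the $1-\beta_2\le\nu^2/(16G^2)$ assumed in Theorem~\ref{thm:rmsprop_fb}, which is \emph{not} assumed here --- the statement only requires $0<\beta_2<1$) the coordinatewise difference $D_{t-1}^i-D_t^i$ can be of order $1/\nu$ rather than $o(1)$. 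The paper's decomposition sidesteps this entirely because the preconditioner is never differenced across time: all terms are measured in the single norm weighted by $(\sqrt{\v_t}+\nu)^{-1}$ and only the two-sided bound $\nu\le\sqrt{\v_t^i}+\nu\le\sqrt{\lambda}+\nu$ is used. Relatedly, your expectation that the stated constraint on $\eta$ is ``precisely'' what absorbs your Lyapunov remainders is speculative: that constant is tied to the paper's specific cubic, and your route would in general produce a different admissible range, possibly requiring extra hypotheses on $\beta_2$. So either adopt the direct-lag route (and supply the polarization or Young-inequality step to extract $\|\g_t\|^2$ from the cross term, plus a bound of the type in Lemma~\ref{lem: adam_key_seq}), or your Lyapunov argument needs a genuine new estimate for the $(D_{t-1}-D_t)\bm_{t-1}$ term that works under the stated assumptions.
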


\proof Recall that the Adam update rule is
\begin{equation}
    \w_{t+1}^i  = \w_t^i -\eta_t \frac{ \bm_t^i}{\sqrt{\v_t^i}+\nu},
\end{equation}
where $\bm_t = \psi_t = \left(1-\beta_{1}\right) \sum_{j=1}^{t} \beta_{1}^{t-j} \tilde \g_{j}$.

For the ease of presentation, we reload division operation for vectors. We let $\a/\b$ to  be element-wise division when both $\a$ and $\b$ are vectors. When $\a$ is a vector, $b$ is a scalar, we let $\a/b$ return a vector with each element in $\a$ divided by $b$. Multiplication, addition and subtraction are reloaded similarly.

With this notation, the update rule of Adam is rewritten as
\begin{equation}
    \w_{t+1}  = \w_t -\eta_t \frac{ \bm_t}{\sqrt{\v_t}+\nu}
\end{equation}

First, by smoothness assumption, we have
\begin{align}\label{eq: descent_adam}
\nr f(\w_{t+1})& \leq f(\w_t) + \lla \g_t, \w_{t+1} -\w_t \rra + \frac{L}{2} \|\w_{t+1} -\w_t\|^2  
\\ \nr & = f(\w_t) -\eta_t \frac{1}{\gamma_t} \lla \gamma_t \g_t, \frac{ \bm_t}{\sqrt{\v_t}+\nu} \rra + \frac{L}{2} \|\w_{t+1} -\w_t\|^2  
\\ \nr & = f(\w_t) -\frac{\eta_t}{2\gamma_t} \lp \left \|\frac{ \bm_t}{\sqrt{\sqrt{\v_t}+\nu}} \right \|^2 +  \left \|\frac{ \gamma_t \g_t}{\sqrt{\sqrt{\v_t}+\nu}} \right \|^2 -  \left \|\frac{ \gamma_t \g_t - \bm_t}{\sqrt{\sqrt{\v_t}+\nu}} \right \|^2 \rp  + \frac{L}{2} \|\w_{t+1} -\w_t\|^2  
\\  & = f(\w_t) -\frac{\eta_t}{2\gamma_t} \lp \left \|\frac{ \bm_t}{\sqrt{\sqrt{\v_t}+\nu}} \right \|^2 +  \left \|\frac{ \gamma_t \g_t}{\sqrt{\sqrt{\v_t}+\nu}} \right \|^2\rp   + \frac{\eta_t}{2\gamma_t} \underbrace {\left \|\frac{ \gamma_t \g_t - \bm_t}{\sqrt{\sqrt{\v_t}+\nu}} \right \|^2}_{U_1(t)}  + \frac{L}{2} \underbrace{\|\w_{t+1} -\w_t\|^2}_{U_2(t)}  
\end{align}
where we define $\gamma_t \triangleq 1-\beta_1^t $.

To proceed, we need to further upper-bound $U_1(t)$ and $U_2(t)$. We can first bound $U_2(t)$ as 
\begin{align}\label{eq: U2}
    \sum_{t=1} ^T U_2(t) &=  \sum_{t=1} ^T  \|\w_{t+1} -\w_t\|^2 = \sum_{t=1} ^T \eta^2 \left\|  \frac{m_t}{\sqrt{\v_t} + \nu}\right\|^2 \leq \sum_{t=1} ^T \eta^2 \frac{1}{\nu}\left\|  \frac{m_t}{\sqrt{\sqrt{\v_t} + \nu}}\right\|^2
\end{align}

For $U_1(t)$, we can rewrite it as
\begin{align}
    \nr U_1(t) = \left \|\frac{ \gamma_t \g_t - \bm_t}{\sqrt{\sqrt{\v_t}+\nu}} \right \|^2 & = \left \|\frac{ (1-\beta_1^t)\g_t - \left(1-\beta_{1}\right) \sum_{j=1}^{t} \beta_{1}^{t-j} \g_{t} + (\left(1-\beta_{1}\right) \sum_{j=1}^{t} \beta_{1}^{t-j}  \g_{t}- \left(1-\beta_{1}\right) \sum_{j=1}^{t} \beta_{1}^{t-j} \tilde \g_{j})}{\sqrt{\sqrt{\v_t}+\nu}} \right \|^2 
    \\ \nr & =    {\left \|\frac{  \left(1-\beta_{1}\right) \sum_{j=1}^{t} \beta_{1}^{t-j} (\g_t - \tilde \g_{j})}{\sqrt{\sqrt{\v_t}+\nu}} \right \|^2}
\end{align}
where the last equality is due to $\sum_{j=1}^t \beta_1 ^{t-j} = \frac{1-\beta_1^t}{1-\beta_1}$.

Now we have
\begin{align}
\nr U_1(t) &= \left \|\frac{  \left(1-\beta_{1}\right) \sum_{j=1}^{t} \beta_{1}^{t-j} (\g_t - \tilde \g_{j})}{\sqrt{\sqrt{\v_t}+\nu}} \right \|^2
 \leq  \frac{1}{{\nu}} \left \|{  \left(1-\beta_{1}\right) \sum_{j=1}^{t} \beta_{1}^{t-j} (\g_t - \tilde \g_{j})} \right \|^2 
 \\ \nr & = \frac{1}{\nu} \left(1-\beta_{1}\right)^2 \lla {   \sum_{j=1}^{t} \beta_{1}^{t-j} (\g_t - \tilde \g_{j})}, {  \sum_{k=1}^{t} \beta_{1}^{t-k} (\g_t - \tilde \g_{k})} \rra
 \\ \nr & =  \frac{1}{\nu} \left(1-\beta_{1}\right)^2  \sum_{j=1}^{t} \sum_{k=1}^{t} \beta_{1}^{t-j}   \beta_{1}^{t-k} \lla {     \g_t - \tilde \g_{j}}, {   \g_t - \tilde \g_{k}} \rra
  \\ \nr & \leq   \frac{1}{\nu} \left(1-\beta_{1}\right)^2  \sum_{j=1}^{t} \sum_{k=1}^{t} \beta_{1}^{t-j}   \beta_{1}^{t-k} \frac{1}{2}  \lp \| \g_t - \tilde \g_{j}\|^2 + \|  \g_t - \tilde \g_{k}\|^2 \rp
    \\ \nr & =   \frac{1}{\nu} \left(1-\beta_{1}\right)^2  \sum_{j=1}^{t} \sum_{k=1}^{t} \beta_{1}^{t-j}   \beta_{1}^{t-k}  \| \g_t - \tilde \g_{j}\|^2
    \\ \nr & =   \frac{1}{\nu} \left(1-\beta_{1}\right)^2  \sum_{j=1}^{t}  \beta_{1}^{t-j}    \| \g_t - \tilde \g_{j}\|^2  \sum_{k=1}^{t}  \beta_{1}^{t-k}
    \\ & \leq   \frac{1}{\nu} \left(1-\beta_{1}\right)  \sum_{j=1}^{t}  \beta_{1}^{t-j}    \| \g_t - \tilde \g_{j}\|^2 
\end{align}

By the smoothness assumption, we further have
\begin{align}\label{eq: U1}
\nr U_1(t) & \leq    \frac{1}{\nu} \left(1-\beta_{1}\right)  \sum_{j=1}^{t}  \beta_{1}^{t-j}    \| \g_t - \tilde \g_{j}\|^2
\\ \nr & \leq \frac{1}{\nu} \left(1-\beta_{1}\right)  \sum_{j=1}^{t}  \beta_{1}^{t-j}     2(\| \g_t -  \g_{j} \|^2 + \| \g_j - \tilde \g_{j} \|^2 )
\\  & \leq \frac{2L^2}{\nu} \left(1-\beta_{1}\right)  \underbrace{\sum_{j=1}^{t}  \beta_{1}^{t-j} \| \w_t -  \w_{j} \|^2}_{U_3(t)}
+ \frac{1}{\nu} \left(1-\beta_{1}\right)  \underbrace{\sum_{j=1}^{t}  \beta_{1}^{t-j}  \| \g_j - \tilde \g_{j} \|^2}_{U_4(t)} 
\end{align}

For $U_3(t)$, when $\eta_t = \eta$ we have 
\begin{align}\label{eq: U3}
  \nr  \sum_{t=1} ^T U_3(t) &= \sum_{t=1}^T \sum_{j=1}^{t}  \beta_{1}^{t-j} \|  \w_t -  \w_{j} \|^2 = \sum_{t=1}^T \sum_{k=0}^{t-1}  \beta_{1}^{k} \|  \w_t -  \w_{t-k} \|^2
  \\ \nr & = \sum_{t=1}^T \sum_{k=0}^{t-1}  \beta_{1}^{k} \eta^2 \left\| \sum_{l=t-k}^{t-1}  \frac{m_l}{\sqrt{v_l} + \nu}\right\|^2
  \\ \nr & \leq \frac{2\beta_1}{(1-\beta_1)^3} \eta^2 \sum_{t=1}^{T-1} \left\|  \frac{m_t}{\sqrt{\v_t} + \nu}\right\|^2
  \\  & \leq \frac{2\beta_1}{(1-\beta_1)^3} \eta^2 \frac{1}{{\nu}} \sum_{t=1}^{T-1} \left\|  \frac{m_t}{\sqrt{\sqrt{\v_t} + \nu}}\right\|^2
\end{align}
    where the first inequality is by applying Lemma \ref{lem: adam_key_seq}.

Now we bound  $U_4(t)$ using Theorem \ref{thm: acc_simple} as 
\begin{align}\label{eq: U4}
    \sum_{t=1}^T U_4(t) = \sum_{t=1}^T \sum_{j=1}^{t}  \beta_{1}^{t-j}  \| \g_j - \tilde \g_{j} \|^2 \leq \sum_{t=1}^T \sum_{j=1}^{t}  \beta_{1}^{t-j} \alpha^2 \leq \frac{T}{1-\beta_1} \alpha^2
\end{align}
with probability at least $1-T\xi$.

Now, sum  \eqref{eq: descent_adam} with $t$ from 1 to $T$ and substitute into \eqref{eq: U1}, \eqref{eq: U2}, \eqref{eq: U3} and \eqref{eq: U4} with some rearrangement, we get
\begin{align}\label{eq: adam_opt}
\nr \sum_{t=1}^T \frac{\eta}{2\gamma_t} \lp \left \|\frac{ \bm_t}{\sqrt{\sqrt{\v_t}+\nu}} \right \|^2 +  \left \|\frac{ \gamma_t \g_t}{\sqrt{\sqrt{\v_t}+\nu}} \right \|^2\rp  & \leq  f(\w_1) -  f(\w_{T+1})   + \frac{L}{2\nu}  \eta^2  \sum_{t=1} ^T \left\|  \frac{m_t}{\sqrt{\sqrt{\v_t} + \nu}}\right\|^2 
\\  & + {\eta^3 }\frac{2L^2}{\nu^2}  \frac{\beta_1}{(1-\beta_1)^2}  \sum_{t=1}^{T-1} \left\|  \frac{m_t}{\sqrt{\sqrt{\v_t} + \nu}}\right\|^2
+ \frac{\eta}{2(1-\beta_1)}\frac{1}{\nu}   {T} \alpha^2 
\end{align}

Merging similar terms, we have
\begin{align}
 \sum_{t=1}^T \frac{\eta}{2\gamma_t} \left \|\frac{ \gamma_t \g_t}{\sqrt{\sqrt{\v_t}+\nu}} \right \|^2 & \leq  f(\w_1) -  f(\w_{T+1})   + \frac{\eta}{2(1-\beta_1)}\frac{1}{\nu}   {T} \alpha^2   
\\ \nr & +\lp {\eta^3 }\frac{2L^2}{\nu^2}  \frac{\beta_1}{(1-\beta_1)^2}+ \eta^2\frac{L}{2\nu} -\frac{\eta}{2} \rp \sum_{t=1}^{T} \frac{1}{\gamma_t}  \left\|  \frac{m_t}{\sqrt{\sqrt{\v_t} + \nu}}\right\|^2
\end{align}

When $\lp {\eta^3 }\frac{2L^2}{\nu^2}  \frac{\beta_1}{(1-\beta_1)^2}+ \eta^2\frac{L}{2\nu} -\frac{\eta}{2} \rp \leq 0$, i.e. $\eta \leq ({\sqrt{1/2+4\beta_1/(1-\beta_1)^2}-1/2})\frac{(1-\beta_1)^2}{4\beta_1}\frac{\nu}{L}$  we can further simplify the above inequality as 
\begin{align}\label{eq: adam_optw}
\frac{(1-\beta_1 )}{\sqrt{\lambda}+\nu}\sum_{t=1}^T \frac{\eta}{2}  \left \|{  \g_t} \right \|^2 \leq \sum_{t=1}^T \frac{\eta}{2\gamma_t}  \left \|\frac{ \gamma_t \g_t}{\sqrt{\sqrt{\v_t}+\nu}} \right \|^2  & \leq  f(\w_1) -  f(\w_{T+1})   + \frac{\eta}{2(1-\beta_1)}\frac{1}{\nu}   {T} \alpha^2  
\end{align}
where the first inequality is due to the fact that $0 \leq \v_t^i \leq \lambda$.

Rearranging, we have
\begin{align}
    \nr\frac{1}{T}\sum_{t=1}^T   \left \|{  \g_t} \right \|^2  \leq \frac{\sqrt{\lambda}+\nu}{1-\beta_1} \lp \frac{2}{\eta T}(f(\w_1) -  f(\w_{T+1}))   + \frac{1}{(1-\beta_1)\nu}    \alpha^2 \rp
\end{align}

Pick $R$ uniformly randomly from 1 to $T$, we know
\begin{align}\label{eq: adam_final}
    &\mathbb{E}\| \nabla f(\w_R)\|^2  \leq \frac{\sqrt{\lambda}+\nu}{1-\beta_1} \lp \frac{2}{\eta T}(f(\w_1) -  f(\w_{T+1}))   + \frac{1}{(1-\beta_1)\nu}    \alpha^2 \rp
\end{align}

with probability at least $1-T\xi$.

Plugging in $\alpha = \sqrt{p}\sigma(1+\mu)$, $\xi = 4p\exp(-\mu^2/2)$
from Theorem \ref{thm: acc_simple}, $T = \frac{n\epsilon}{G \sqrt{p \ln(1/\delta)}}$, and $\sigma^2 = O\left(\frac{G^{2} T \ln \left(\frac{1}{\delta}\right)}{n^{2} \epsilon^{2}}\right)$ and setting $\mu = \sqrt{2\ln(4pT/\beta)}$, with $f^\star = \ell^\star$ in Assumption \ref{asmp: smoothness}, we have
\begin{align}
   \nr   \mathbb{E}\| \nabla f(\w_R)\|^2  &\leq \left(\sqrt{\lambda} + \nu \right) \left(\left(f(\w_1) -f^\star\right)\frac{ G\sqrt{p\ln (1/\delta)}}{\eta n\epsilon} + 3 \frac{G\sqrt{pL\ln (1/\delta)} (1+\mu)^2}{ n\epsilon}\right) \\
     & \leq   O\left( \frac{G\sqrt{p\ln (1/\delta)}\ln(n\sqrt{p}\epsilon/\beta)}{n\epsilon}\right) 
\end{align}
with probability at least $1-\beta$. \qed

\begin{lemm}\label{lem: adam_key_seq}
For any $T \geq 1$, $  0 < \beta_1 < 1$ and $b_t$ we have
\begin{align} 
   \sum_{t=1}^T \sum_{k=0}^{t-1}  \beta_{1}^{k}  \left\| \sum_{l=t-k}^{t-1}  b_l\right\|^2 \leq \frac{2\beta_1}{(1-\beta_1)^3} \sum_{t=1}^{T-1} \left\|  b_t\right\|^2 
\end{align}
\end{lemm}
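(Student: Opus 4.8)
The plan is to combine a Cauchy--Schwarz bound on each inner block sum with a change of the order of summation, rewriting the left-hand side as $\sum_l \|b_l\|^2$ times a coefficient that depends only on $\beta_1$, and then bounding that coefficient by the power series $\sum_{k\ge 1} k^2 \beta_1^k$.

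First I would note that the $k=0$ term is an empty sum and so contributes nothing; it thus suffices to treat $k\ge 1$. For $k\ge 1$ the block $\sum_{l=t-k}^{t-1} b_l$ has exactly $k$ summands, so Cauchy--Schwarz (equivalently, convexity of $\|\cdot\|^2$) gives
\[
\Bigl\| \sum_{l=t-k}^{t-1} b_l \Bigr\|^2 \le k \sum_{l=t-k}^{t-1} \|b_l\|^2 .
\]
Substituting this turns the left-hand side into $\sum_{t=1}^T \sum_{k=1}^{t-1} \beta_1^k\, k \sum_{l=t-k}^{t-1}\|b_l\|^2$, and the next step is to swap the order of summation so that the outermost sum is over the index $l$.

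For a fixed $l$, the weight $\|b_l\|^2$ is collected (with coefficient $\beta_1^k k$) precisely over pairs $(t,k)$ satisfying $1\le k\le t-1$, $t\le T$, and $t-k\le l\le t-1$; the last pair of inequalities is equivalent to $l+1\le t\le l+k$, which admits at most $k$ values of $t$ and forces $1\le l\le T-1$. Hence the coefficient of $\|b_l\|^2$ is at most $\sum_{k\ge 1}\beta_1^k\cdot k\cdot k=\sum_{k\ge 1}k^2\beta_1^k$. I would then invoke the elementary identity $\sum_{k\ge 1}k^2x^k=\frac{x(1+x)}{(1-x)^3}$ with $x=\beta_1\in(0,1)$, and use $1+\beta_1<2$ to get $\sum_{k\ge 1}k^2\beta_1^k<\frac{2\beta_1}{(1-\beta_1)^3}$; summing over $l\in\{1,\dots,T-1\}$ then yields the claim.

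The only slightly delicate point is the bookkeeping in the summation swap: verifying that for each $(k,l)$ the number of admissible $t$ is bounded by $k$, and that no index $l$ outside $\{1,\dots,T-1\}$ appears on the right. Everything else is routine; in particular, the constant $2$ in the statement is exactly the slack afforded by $1+\beta_1<2$, so no sharper counting of the admissible $(t,k)$ pairs is needed.
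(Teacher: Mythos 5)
Your proposal is correct and follows essentially the same route as the paper: bound each block by Cauchy--Schwarz, $\bigl\|\sum_{l=t-k}^{t-1} b_l\bigr\|^2 \le k\sum_{l=t-k}^{t-1}\|b_l\|^2$, then exchange the order of summation and bound the coefficient of each $\|b_l\|^2$ by a geometric-type series, ending at the same constant $\beta_1(1+\beta_1)/(1-\beta_1)^3 \le 2\beta_1/(1-\beta_1)^3$. Your bookkeeping (counting at most $k$ admissible values of $t$ per pair $(l,k)$ and invoking the closed form $\sum_{k\ge 1}k^2\beta_1^k$) is a tidier execution of the same idea than the paper's intermediate manipulations with the auxiliary index, but it is not a different argument.
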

\allowdisplaybreaks[2]
\proof: The proof consists of a series of algebraic manipulations as follows.
\begin{align}
    \nr &\sum_{t=1}^T \sum_{k=0}^{t-1}  \beta_{1}^{k}  \left\| \sum_{l=t-k}^{t-1}  b_l\right\|^2 \\
    \nr \leq & \sum_{t=1}^T \sum_{k=0}^{t-1}  \beta_{1}^{k} k \sum_{l=t-k}^{t-1}   \left\|  b_l\right\|^2 \\
    \nr = & \sum_{t=1}^T \sum_{l=1}^{t-1} \sum_{k=t-l}^{t-1}  k \beta_{1}^{k}    \left\|  b_l\right\|^2 \\
    \nr = & \sum_{t=1}^T \sum_{l=1}^{t-1} \left\|  b_l\right\|^2 \sum_{k=t-l}^{t-1} \sum_{o=1}^k  \beta_{1}^{k}  \\
    \nr = & \sum_{t=1}^T \sum_{l=1}^{t-1} \left\|  b_l\right\|^2 \sum_{o=1}^{t-1 }\sum_{k=\max(o,t-l)}^{t-1}   \beta_{1}^{k} \\
     \nr \leq  & \sum_{t=1}^T \sum_{l=1}^{t-1} \left\|  b_l\right\|^2 \sum_{o=1}^{t-1 }\frac{1}{1-\beta_1}   \beta_{1}^{\max(o,t-l)} \\
     \nr \leq  & \frac{1}{1-\beta_1} \sum_{t=1}^T \sum_{l=1}^{t-1} \left\|  b_l\right\|^2  \lp (t-l)  \beta_{1}^{t-l } + \frac{1}{1-\beta_1} \beta_{1}^{t-l+1}  \rp\\
     \nr =  & \frac{1}{1-\beta_1} \sum_{l=1}^{T-1}   \left\|  b_l\right\|^2 \sum_{t=l+1}^T \lp (t-l)  \beta_{1}^{t-l } + \frac{1}{1-\beta_1} \beta_{1}^{t-l+1}  \rp\\
     \nr \overset{(a)}{\leq}   & \frac{1}{1-\beta_1} \sum_{l=1}^{T-1}   \left\|  b_l\right\|^2  \lp  \frac{\beta_1^2}{(1-\beta_1)^2} + \sum_{r=1}^{T-l} r  \beta_{1}^{r }   \rp\\
     \nr \overset{(b)}{\leq}   & \frac{1}{1-\beta_1} \sum_{l=1}^{T-1}   \left\|  b_l\right\|^2  \lp  \frac{\beta_1^2}{(1-\beta_1)^2} + \frac{\beta_1}{(1-\beta_1)^2}     \rp\\
     \nr \leq   & 2\frac{\beta_1}{(1-\beta_1)^3} \sum_{l=1}^{T-1}   \left\|  b_l\right\|^2  \\
\end{align}
where (a) is by introducing $r =t-l$ and (b) is due to $\sum_{r=1}^{T-l} r  \beta_{1}^{r } = \sum_{r=1}^{T-l} \sum_{q=1}^r  \beta_{1}^{r } =  \sum_{q=1}^{T-l} \sum_{r=q}^{T-l}   \beta_{1}^{r }   \leq \frac{1}{1-\beta_1} \sum_{q=1}^{T-1}  \beta_{1}^{q } \leq \frac{\beta}{(1-\beta_1)^2}$.
\qed
\allowdisplaybreaks[0]

\section{Proofs for Section \ref{sec:erm}}
We present the proof of Theorem \ref{thm:GD_fb}, Theorem \ref{thm:rmsprop_fb} and Theorem \ref{thm:adam_fb} in Section \ref{sec:proof_opt_gd}, Section \ref{sec:proof_opt_rmsprop} and Section \ref{sec:proof_opt_adam} respectively.

\subsection{Proof of Theorem \ref{thm:GD_fb}}
\label{sec:proof_opt_gd}

\theogdfb*

\proof
Let $\hat \g_t = \mathbb{E}_{\z \in S} [\nabla \ell(\w_t,\z)]$ denotes the full-batch gradient at iteration $t$. We have  $\hat \g_t = \nabla \hat f(\w_t)$.

Using this notation, we have the update of Algorithm \ref{algo: full batch} in Theorem \ref{thm:GD_fb} is 
\begin{equation}
    \w_{t+1} = \w_t - \eta_t \tilde \g_t
\end{equation}
where $\tilde \g_t = \hat \g_t + \b_t$.

By descent lemma:
\begin{align}
\nr \mathbb{E}_t [ \hat f(\w_{t+1})] & \leq \hat f(\w_t) + \mathbb{E}_t \left[<\nabla \hat f(\w_t), \w_{t+1} -\w_t> \right] + \frac{L}{2}\eta_t^2 \mathbb{E}_t \left[\|\tilde \g_t\|^2 \right]  \\ \nr
& = \hat f(\w_t) -\eta_t \mathbb{E}_t <\nabla \hat f(\w_t), \hat \g_t + \b_t> + \frac{L}{2}\eta_t^2 \mathbb{E}_t \left[\| \hat \g_t + \b_t\|^2 \right] \\
&\leq \hat f(\w_t) - \left(\eta_t + \frac{L}{2} \eta_t^2 \right) \|\nabla \hat f(\w_t) \|^2 + \frac{L}{2} \eta_t^2 p\sigma^2
\end{align}

Take $\eta_t = \frac{1}{L}$ we have
\begin{equation}
   \mathbb{E}_t [\hat f(\w_{t+1})] \leq \hat f(\w_t) - \frac{1}{2L} \|\nabla \hat f(\w_t) \|^2 + \frac{1}{2L} p\sigma^2
\end{equation}

Using telescoping sum and
rearranging the inequality, with $f^\star = \ell^\star$ in Assumption \ref{asmp: smoothness}, we obtain 
\begin{equation}
    \mathbb{E} \| \nabla \hat f(\w_R) \|^2 = \frac{1}{T} \sum_{t=1}^T \mathbb{E} \| \hat \nabla f(w_t) \|^2 \leq \frac{L\left( \hat f(\w_1) -  f^\star\right)}{T} + p\sigma^2
\end{equation}

Plugging in $ T = O\left(\frac{\sqrt{L} n \epsilon}{\sqrt{p \log (1 / \delta) }G}\right)$ and $\sigma^2 = O\left(\frac{G^{2} T \ln \left(\frac{1}{\delta}\right)}{n^{2} \epsilon^{2}}\right)$ achieves:
\begin{equation} 
   \mathbb{E} \| \nabla \hat f(\w_R) \|^2 \leq O\left(\frac{\sqrt{L} G \sqrt{p \log (1 / \delta)}}{n \epsilon}\right),
\end{equation}
where $\w_R$ is is uniformly
sampled from $\{\w_1, \w_2, ...,\w_T\}$.
\qed

\subsection{Proof of Theorem \ref{thm:rmsprop_fb}}
\label{sec:proof_opt_rmsprop}
We restate the Theorem \ref{thm:rmsprop_fb} here for convenience.
\theormspropfb*

\proof
Let $\hat \g_t = \mathbb{E}_{\z \in S} [\nabla \ell(\w_t,\z)]$ denotes the full-batch gradient at iteration $t$. We have  $\hat \g_t = \nabla \hat f(\w_t)$.

Using this notation, we have the update of Algorithm \ref{algo: full batch} in Theorem \ref{thm:rmsprop_fb} is 
\begin{equation}
    \w_{t+1} = \w_t - \eta_t \tilde  \g_t /(\sqrt{\v_t} + \nu) ,
\end{equation}
where $\tilde \g_t = \hat \g_t + \b_t$ and $\v_t = \left(1-\beta_{2}\right) \sum_{i=j}^{t} \beta_{2}^{t-j} \tilde \g_{j}^{2}$.

By descent lemma, we have
\begin{small}
\begin{align}
\nr \mathbb{E}_t [\hat f(\w_{t+1})]& \leq \hat f(\w_t) + \mathbb{E}_t \left<\hat \g_t,  \w_{t+1}-\w_t\right> + \frac{L}{2} \mathbb{E}_t \left\|\w_{t+1}-\w_t  \right\|^2\\ \nr
&= \hat f(\w_t) -\eta_t \mathbb{E}_t \left<\hat \g_t, \tilde \g_t/(\sqrt{\v_t} +\nu) \right> + \frac{L\eta_t^2}{2} \mathbb{E}_t \left\|\frac{\tilde \g_t}{(\sqrt{\v_t} +\nu)} \right\|^2\\ \nr
&= \hat f(\w_t)  -\eta_t \sum_{i =1}^{p} \left(\hat \g_t^i \times \mathbb{E}_t \left[ \frac{\hat \g_t^i + \b_t^i}{\sqrt{\beta_2 \v_{t-1}^i} + \nu} + \frac{\hat \g_t^i + \b_t^i}{\sqrt{ \v_{t}^i} + \nu} - \frac{\hat \g_t^i + \b_t^i}{\sqrt{\beta_2 \v_{t-1}^i} + \nu}\right]    \right) + \frac{L\eta_t^2}{2} \mathbb{E}_t \left\|\frac{ \hat \g_t + \b_t}{(\sqrt{\v_t} +\nu)} \right\|^2\\ \nr
& = \hat f(\w_t)  -\eta_t \sum_{i =1}^{p} \left( \hat \g_t^i \times  \left[ \frac{\hat \g_t^i }{\sqrt{\beta_2 \v_{t-1}^i} + \nu} + \mathbb{E}_t \left[\frac{\hat \g_t^i + \b_t^i}{\sqrt{ \v_{t}^i} + \nu} - \frac{\hat \g_t^i + \b_t^i}{\sqrt{\beta_2 \v_{t-1}^i} + \nu}\right]\right]    \right) + \frac{L\eta_t^2}{2} \mathbb{E}_t \left\|\frac{ \hat \g_t + \b_t}{(\sqrt{\v_t} +\nu)} \right\|^2\\ 
& \leq \hat f(\w_t) - \eta_t \sum_{i =1}^{p}\frac{[\hat \g_t]_i^2}{\sqrt{\beta_2 \v_{t-1}^i} + \nu} + \eta_t \sum_{i =1}^{p} |\hat \g_t^i|\left| \mathbb{E}_t \underbrace{\left[\frac{\hat \g_t^i + \b_t^i}{\sqrt{ \v_{t}^i} + \nu} - \frac{\hat \g_t^i + \b_t^i}{\sqrt{\beta_2 \v_{t-1}^i} + \nu}\right]}_{T_1}  \right| + \frac{L\eta_t^2}{2} \mathbb{E}_t \left\|\frac{ \hat \g_t + \b_t}{(\sqrt{\v_t} +\nu)} \right\|^2
\label{eq: rmsprop_full}
\end{align}
\end{small}

The forth equality follows from the fact that $\b_t$ and $\g_t$ are independent of $\v_{t-1}$ conditioned on the release of the past parameters and noise at time step $t$. Now we found $T_1$:
\begin{small}
\begin{align} 
\nr T_{1} & = \frac{\hat \g_t^i + \b_t^i}{\sqrt{ \v_{t}^i} + \nu} - \frac{\hat \g_t^i + \b_t^i}{\sqrt{\beta_2 \v_{t-1}^i} + \nu} \\ \nr
&\leq\left|\hat \g_t^i + \b_t^i\right| \times\left|\frac{1}{\sqrt{\v_t^i}+\nu}-\frac{1}{\sqrt{\beta_{2} \v_{t-1}^i}+\nu}\right| \\ 
\nr &=\frac{\left|\hat \g_t^i + \b_t^i\right|}{(\sqrt{\v_t^i}+\nu)(\sqrt{\beta_{2} \v_{t-1}^i}+\nu)} \times\left|\frac{\v_t^i-\beta_{2} \v_{t-1}^i}{\sqrt{\v_t^i}+\sqrt{\beta_{2} \v_{t-1}^i}}\right| \\ 
\nr &=\frac{\left|\hat \g_t^i + \b_t^i\right|}{(\sqrt{\v_t^i}+\nu)(\sqrt{\beta_{2} \v_{t-1}^i}+\nu)} \times \frac{\left(1-\beta_{2}\right) (\hat \g_t^i + \b_t^i)^{2}}{\sqrt{\v_t^i}+\sqrt{\beta_{2} \v_{t-1}^i}} \\
\nr & = \frac{\left|\hat \g_t^i + \b_t^i\right|}{(\sqrt{\v_t^i}+\nu)(\sqrt{\beta_{2} \v_{t-1}^i}+\nu)}  \times
\frac{\left(1-\beta_{2}\right) (\hat \g_t^i + \b_t^i)^{2}}{\sqrt{\beta_{2} \v_{t-1}^i + (1-\beta_2)(\hat \g_t^i + \b_t^i)^{2}}+\sqrt{\beta_{2} \v_{t-1}^i}} \\
\nr & \leq \frac{1}{(\sqrt{\v_t^i}+\nu)(\sqrt{\beta_{2} \v_{t-1}^i}+\nu)} \times \sqrt{(1-\beta_2)}(\hat \g_t^i + \b_t^i)^{2} \\
& \leq \frac{\sqrt{(1-\beta_2)}(\hat \g_t^i + \b_t^i)^{2}}{(\sqrt{\beta_{2} \v_{t-1}^i}+\nu)\nu}
\end{align}
\end{small}
Here, the last inequality is obtained by dropping $\v_t^i$ from the denominator to obtain an upper bound. The second inequality is  due to the fact that
\begin{equation}
\frac{\left|\hat \g_t^i + \b_t^i\right|}{\sqrt{\beta_{2} \v_{t-1}^i + (1-\beta_2)(\hat \g_t^i + \b_t^i)^{2}}+\sqrt{\beta_{2} \v_{t-1}^i}} \leq \frac{1}{\sqrt{1-\beta_2}}    
\end{equation}

Substituting the above bound on $T_1$ in
\eqref{eq: rmsprop_full}, using $\hat \g_t =\nabla \hat f(\w_t)$, we have the following:
\begin{align*}
    \mathbb{E}_t [\hat f(\w_{t+1})] &\leq \hat f(\w_t) - \eta_{t} \sum_{i=1}^{p} \frac{\left[\nabla \hat f(\w_t)\right]_{i}^{2}}{\sqrt{\beta_{2} \mathbf{v}_{t-1}^{i}}+\nu} +  \frac{\eta_tG\sqrt{1-\beta_2}}{\nu}\sum_{i =1}^{p}  \mathbb{E}_t \left[ \frac{(\left[\nabla \hat f(\w_t)\right]_{i} + \b_t^i)^{2}}{\sqrt{\beta_{2} \v_{t-1}^i}+\nu} \right] \\
    &  \qquad \qquad \qquad \qquad \qquad \qquad   \qquad + \frac{L \eta_{t}^{2}}{2 \nu} \sum_{i=1}^{d} \mathbb{E}_{t}\left[\frac{(\left[\nabla \hat f(\w_t)\right]_{i} + \b_t^i)^{2}}{\sqrt{\v_t^i}+\nu}\right] \\
    & \leq  \hat f(\w_t) - \eta_{t} \sum_{i=1}^{p} \frac{\left[\nabla \hat f(\w_t)\right]_{i}^{2}}{\sqrt{\beta_{2} \mathbf{v}_{t-1}^{i}}+\nu} +  \frac{\eta_tG\sqrt{1-\beta_2}}{\nu}\sum_{i =1}^{p}  \mathbb{E}_t \left[ \frac{(\left[\nabla \hat f(\w_t)\right]_{i} + \b_t^i)^{2}}{\sqrt{\beta_{2} \v_{t-1}^i}+\nu} \right] \\
    &  \qquad \qquad \qquad \qquad \qquad \qquad   \qquad + \frac{L \eta_{t}^{2}}{2 \nu} \sum_{i=1}^{d} \mathbb{E}_{t}\left[\frac{(\left[\nabla \hat f(\w_t)\right]_{i} + \b_t^i)^{2}}{\sqrt{\beta_2\v_{t-1}^i}+\nu}\right] \\
    & = \hat f(\w_t) -(\eta_{t}-\frac{\eta_{t} G \sqrt{1-\beta_{2}}}{\nu}-\frac{L \eta_{t}^{2}}{2 \nu}) \sum_{i=1}^{p} \frac{\left[\nabla \hat f(\w_t)\right]_{i}^{2}}{\sqrt{\beta_{2} \mathbf{v}_{t-1}^{i}}+\nu} + \left(\frac{\eta_{t} G \sqrt{1-\beta_{2}}}{\nu}+\frac{L \eta_{t}^{2}}{2 \nu}\right) \sum_{i=1}^{p} \frac{\sigma_{i}^{2}}{\sqrt{\beta_{2} \mathbf{v}_{t-1}^{i}}+\nu}.
\end{align*}

Given the parameter setting from the theorem, we see the following condition hold:
$\frac{L\eta_t}{\nu} \leq \frac{1}{2}$
and $\frac{G \sqrt{1-\beta_{2}}}{\nu} \leq \frac{1}{4}$. Let $\eta_t = \eta$, we obtain
\begin{align}
\nr \mathbb{E}_t \hat f(\w_{t+1})& \leq \hat f(\w_t) - \frac{\eta}{4} \sum_{i=1}^{p} \frac{\left[\nabla \hat f(\w_t)\right]_{i}^{2}}{\sqrt{\beta_{2} \mathbf{v}_{t-1}^{i}}+\nu} + \frac{\eta}{2} \sum_{i=1}^{p} \frac{\sigma_{i}^{2}}{\sqrt{\beta_{2} \mathbf{v}_{t-1}^{i}}+\nu} \\
& =  \hat f(\w_t) - \frac{\eta}{4} \frac{\left\|\nabla \hat f(\w_t)\right\|^{2}}{\sqrt{\beta_2\lambda} + \nu} + \frac{\eta}{2} \frac{p\sigma^2}{ \nu}
\end{align}

The second inequality follows from the fact that $0 \leq \v_{t-1}^i \leq \lambda$. Using the telescoping sum and rearranging the inequality, with $f^\star = \ell^\star$ in Assumption \ref{asmp: smoothness},  we obtain
\begin{align}
\mathbb{E} \| \nabla \hat f(\w_R) \|^2 = \frac{1}{T} \sum_{t=1}^T \mathbb{E}\|\nabla \hat f(\w_t)\|^2  \leq 4(\sqrt{\beta_2\lambda}+ \nu)\left(\frac{f(\w_1) -f^\star}{\eta T} + 2p\sigma^2\right).
\end{align}

Plugging in $T=O\left(\frac{ n \epsilon}{\sqrt{p \log (1 / \delta) }G}\right)$ and $\sigma^2 = O\left(\frac{G^{2} T \ln \left(\frac{1}{\delta}\right)}{n^{2} \epsilon^{2}}\right)$ achieves:
\begin{equation} 
    \mathbb{E} \| \nabla \hat f(\w_R) \|^2 \leq O\left(\frac{ G^2 \sqrt{p \log (1 / \delta)}}{n \epsilon}\right),
\end{equation}
where $\w_R$ is is uniformly
sampled from $\{\w_1, \w_2, ...,\w_T\}$. \qed

\subsection{Proof of Theorem \ref{thm:adam_fb}}
\label{sec:proof_opt_adam}
We restate the Theorem \ref{thm:adam_fb} here.

\theoadamfb*

Before we provide the proof of Theorem \ref{thm:adam_fb}, we first state the following lemma.

\begin{lemm} 
\label{lemm: acc_gauss}
Assume $\sigma$, $\epsilon$ and $\delta$ are set to satisfy the conditions in Theorem \ref{thm: acc_basic_fb} such that $\epsilon \leq \frac{\sigma}{13}$, $\delta \leq \frac{\sigma \exp(-\mu^2/2)}{13 \ln(26/\sigma)}$ and $n \geq \frac{2\ln(8/\delta)}{\epsilon^2}$, for the noisy gradients $\tilde \g_1,...,  \tilde \g_T$ in Algorithm \ref{algo: full batch}, we have $\forall t \in [T]$ and any $\mu > 0$:
\begin{equation}
    \mathbb{P}\{\|\tilde \g_t - \hat \g_t\| \geq \alpha \} \leq \xi,
\end{equation}
where $\alpha = \sqrt{p}\mu\sigma$ and $\xi = 2p\exp(-\mu^2/2)$.
\end{lemm}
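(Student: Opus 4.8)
The plan is to observe that Lemma~\ref{lemm: acc_gauss} is a pure Gaussian concentration statement that does not actually invoke differential privacy at all: by line~4 of Algorithm~\ref{algo: full batch} we have $\tilde \g_t = \hat \g_t + \b_t$ with $\b_t \sim \cN(0,\sigma^2 \bI_p)$, so $\tilde \g_t - \hat \g_t = \b_t$ and the claim is exactly a tail bound on the Euclidean norm of a $p$-dimensional isotropic Gaussian. This is the same computation as the bound on the ``empirical error'' term $T_1$ in the proof of Theorem~\ref{thm: acc_basic_fb}, see \eqref{eq: acc1}; the lemma simply isolates it so it can be reused in the DP Adam analysis.

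First I would reduce the $\ell_2$ tail to an $\ell_\infty$ tail via $\|\b_t\| \leq \sqrt{p}\,\|\b_t\|_\infty$, which gives
\begin{equation}
\mathbb{P}\left\{\|\tilde \g_t - \hat \g_t\| \geq \sqrt{p}\,\mu\sigma\right\} = \mathbb{P}\left\{\|\b_t\| \geq \sqrt{p}\,\mu\sigma\right\} \leq \mathbb{P}\left\{\|\b_t\|_\infty \geq \mu\sigma\right\}.
\end{equation}
Next I would apply a union bound over the $p$ coordinates, using that each coordinate $\b_t^i$ is $\cN(0,\sigma^2)$:
\begin{equation}
\mathbb{P}\left\{\|\b_t\|_\infty \geq \mu\sigma\right\} \leq p\,\mathbb{P}\left\{|\b_t^i| \geq \mu\sigma\right\} = p\,\mathbb{P}\left\{|Z| \geq \mu\right\},
\end{equation}
where $Z \sim \cN(0,1)$ after rescaling by $1/\sigma$. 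Finally I would invoke the standard two-sided sub-Gaussian tail bound $\mathbb{P}\{|Z| \geq \mu\} \leq 2\exp(-\mu^2/2)$, valid for all $\mu > 0$, to conclude $\mathbb{P}\{\|\tilde \g_t - \hat \g_t\| \geq \sqrt{p}\,\mu\sigma\} \leq 2p\exp(-\mu^2/2)$, which is precisely the asserted bound with $\alpha = \sqrt{p}\,\mu\sigma$ and $\xi = 2p\exp(-\mu^2/2)$. Since $\b_t$ is a freshly drawn noise vector at each iteration, the argument is verbatim the same for every $t$, so the bound holds for all $t \in [T]$.

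I do not anticipate any genuine obstacle here: the hypotheses on $\epsilon$, $\delta$, and $n$ in the statement play no role in the proof and are only carried along so that the lemma can be quoted side by side with Theorem~\ref{thm: acc_basic_fb} and Lemma~\ref{lem:gen_adv_fb} within the DP Adam proof. The only point that needs a little care is keeping the correct constant in the Gaussian tail bound — the factor $2$ in front of $\exp(-\mu^2/2)$ coming from the two-sided estimate on $|Z|$ — and noting that the $\ell_\infty$-to-$\ell_2$ step loses exactly the $\sqrt{p}$ factor that appears in $\alpha$.
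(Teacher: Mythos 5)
Your proposal is correct and follows essentially the same route as the paper's own proof: write $\tilde \g_t - \hat \g_t = \b_t$, pass from the $\ell_2$ norm to the $\ell_\infty$ norm via $\|\b_t\| \leq \sqrt{p}\,\|\b_t\|_\infty$, union bound over the $p$ coordinates, and apply the standard Gaussian tail bound to get $2p\exp(-\mu^2/2)$. Your observation that the hypotheses on $\epsilon$, $\delta$, and $n$ are inert here is also consistent with the paper, whose proof of this lemma likewise never uses them (and your use of ``$\leq$'' in the Gaussian tail is actually slightly cleaner than the paper's ``$=$'').
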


\proof
Recall that $\tilde \g_t = \hat \g_t + \b_t$, where $\b_t$ is a noise vector drawn from Gaussian noise $\cN(0, \sigma^2\mathbb{I}_p)$. Using the tail bound of Gaussian random variable, we have
\begin{equation}
    \mathbb{P}\left\{ \|\tilde \g_t - \hat \g_t\| \geq  \sqrt{p}\sigma \mu \right\} \leq \mathbb{P} \left\{ \|\b_t\| \geq  \sqrt{p}\sigma \mu \right\} \leq \mathbb{P} \left\{ \|\b_t\|_\infty \geq  \sigma \mu \right\} \leq p \mathbb{P} \left\{ |\b_t^i| \geq \sigma \mu \right\} =  2p\exp(-\mu^2/2).
\end{equation}

The second inequality come from $\|\b_t\| \leq \sqrt{d}\|\b_t\|_\infty$.  we complete the proof.\qed

Now we present the proof of Theorem \ref{thm:adam_fb}.

\emph{Proof of Theorem \ref{thm:adam_fb}}:
The proof follows that of Theorem \ref{thm: gen_adam_fb} until \eqref{eq: adam_final} with the target function $f(\w)$ and gradient $\g_t$
changed to be empirical risk function $\hat f(\w)$ and empirical gradient $\hat \g_t$, so that we have
\begin{align}
    &\mathbb{E}\| \nabla \hat \hat f(\w_R)\|^2  \leq \frac{\sqrt{\lambda}+ \nu}{1-\beta_1} \lp \frac{2}{\eta T}(\hat f(\w_1) -  \hat f(\w_{T+1}))   + \frac{1}{(1-\beta_1)\nu}    \alpha^2 \rp
\end{align}
with probability at least $1-T\xi$.

Plugging in $\alpha = \sqrt{p}\mu\sigma$, $\xi = 2p\exp(-\mu^2/2)$
from Lemma \ref{lemm: acc_gauss}, $T = \frac{n\epsilon}{G \sqrt{p \ln(1/\delta)}}$, and $\sigma^2 = O\left(\frac{G^{2} T \ln \left(\frac{1}{\delta}\right)}{n^{2} \epsilon^{2}}\right)$ and setting $\mu = \sqrt{2\ln(2pT/\beta)}$ we have
\begin{align}
    \nr  \mathbb{E}\| \hat \nabla f(\w_R)\|^2  \leq  O\left( \frac{G^2\sqrt{p\ln (1/\delta)}\ln(n\sqrt{p}\epsilon/\beta)}{n\epsilon}\right),
\end{align}
with probability at least $1-\beta$.

\section{Uniform Convergence Lower Bounds}
\label{app:lower_bound}

We now show that there are simple loss function $\ell$ and distributions over $z$ for which the gradient deviation bound scales with $\sqrt{p}/\sqrt{n}$.

Let $\ell$ be loss function  such that for any $z\in \mathbb{R}^p$, $\ell(\w, \z) =  \frac{1}{2} \|\w - \z \|_2^2$. Suppose there are $n$ observations $\z_1, \ldots , \z_n$ drawn i.i.d. from a $p$-dimensional product distribution over $\{0, 1\}^p$. Let $\boldsymbol\mu = \E[\z]$. We assume that for each $j\in[p]$, $|\boldsymbol\mu_j| \in [1/3, 2/3]$. 

For any  $\w$ and any $\z_i$, the gradient $\g_i = \nabla \ell(\w, \z_i) = (\w - \z_i)$, and so $\E_{\z}\left[ \nabla \ell(\w, \z) \right] = \w - \boldsymbol\mu$. In other words, $\frac{1}{n}\sum_i \g_i(\w) - \E_{\z}\left[\nabla \ell(\w, \z)\right] = \boldsymbol\mu - \frac{1}{n}\sum_i \z_i$ for any $\w$.

\begin{theo}
Suppose that $X_1, \ldots , X_n$ are i.i.d. random variables. Let $\overline X = \frac{1}{n}\sum_{i=1}^n X_i$.

\begin{itemize}
\item \textbf{Multiplicative Chernoff.} Suppose that each $X_i\in [0, 1]$, then for any $\delta >0$, we have
\[
    \Pr[\E[X_1] - \overline{X} > \delta \E[X_1]] < \exp\left( - \E[X_1]n\delta^2 /3\right)
\]

    \item \textbf{Berry-Esseen.} Suppose that
\[
\sigma^2 = \E[(X_1 - \E[X_1])^2] \quad \mbox{and}\quad \rho = \E[|X_1 - \E[X_1]|^3] 
\]
Let $F_n$ be the cumulative distribution function of $\frac{(\overline X - \E[X_1]) \sqrt{n}}{\sigma}$ and $\Phi$ be the cumulative distribution function of the standard normal distribution. Then for all $x\in\mathbb{R}$,
\[
|F_n(x) - \Phi(x)| \leq \frac{\rho}{2\sigma^3 \sqrt{n}}
\]
\end{itemize}

\end{theo}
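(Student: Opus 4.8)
The plan is to establish the two inequalities separately, since both are classical: the multiplicative Chernoff bound via the exponential‑moment method, and the Berry--Esseen bound via the Fourier‑analytic route (characteristic functions together with Esseen's smoothing lemma). Neither needs anything from the rest of the paper.

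\emph{Multiplicative Chernoff.} Write $\mu=\E[X_1]$ and observe that $\E[X_1]-\overline X>\delta\E[X_1]$ is equivalent to $\sum_i X_i<(1-\delta)n\mu$. For any $t>0$, Markov's inequality applied to the nonnegative variable $e^{-t\sum_i X_i}$ gives
\[
\Pr\!\left[\sum_i X_i<(1-\delta)n\mu\right]\le e^{t(1-\delta)n\mu}\prod_{i=1}^n\E\!\left[e^{-tX_i}\right].
\]
Since $X_i\in[0,1]$ and $x\mapsto e^{-tx}$ is convex, $e^{-tX_i}\le 1-(1-e^{-t})X_i$, so $\E[e^{-tX_i}]\le 1-(1-e^{-t})\mu\le\exp(-(1-e^{-t})\mu)$. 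Substituting and taking $t=-\ln(1-\delta)$ yields the standard bound $\bigl(e^{-\delta}(1-\delta)^{-(1-\delta)}\bigr)^{n\mu}$; finally the elementary calculus inequality $-\delta-(1-\delta)\ln(1-\delta)\le-\delta^2/2\le-\delta^2/3$ for $\delta\in[0,1)$ (checked by differentiating and using $\ln(1-\delta)\le-\delta$) gives the claimed form.

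\emph{Berry--Esseen.} First reduce to the standardized case: set $Y_i=(X_i-\E[X_1])/\sigma$, so that $\E[Y_i]=0$, $\E[Y_i^2]=1$, $\E[|Y_i|^3]=\rho/\sigma^3$, and $S_n:=n^{-1/2}\sum_i Y_i$ has distribution function $F_n$. Let $\psi$ be the characteristic function of $Y_1$ and $\varphi_n(t)=\psi(t/\sqrt n)^n$ that of $S_n$. The plan is: (i) apply Esseen's smoothing lemma, which for every $T>0$ gives
\[
\sup_x|F_n(x)-\Phi(x)|\le\frac{1}{\pi}\int_{-T}^{T}\left|\frac{\varphi_n(t)-e^{-t^2/2}}{t}\right|dt+\frac{c_0}{T},
\]
where $c_0$ is an absolute constant coming from $\sup_x\Phi'(x)$; (ii) Taylor‑expand $\psi(u)=1-u^2/2+\theta(u)$ with $|\theta(u)|\le\frac{\rho}{6\sigma^3}|u|^3$, note $\rho/\sigma^3\ge1$ by Jensen so that $|u|\le\sigma^3/\rho$ forces $|\psi(u)|\le 1-u^2/3\le e^{-u^2/3}$, and combine $|a^n-b^n|\le n|a-b|\max(|a|,|b|)^{n-1}$ with these estimates to obtain $|\varphi_n(t)-e^{-t^2/2}|\le C\,\frac{\rho}{\sigma^3\sqrt n}\,|t|^3 e^{-t^2/3}$ on the range $|t|\le\sqrt n\,\sigma^3/\rho$; (iii) take $T=\sqrt n\,\sigma^3/\rho$, so the integral is at most $C\,\frac{\rho}{\sigma^3\sqrt n}\int_{\R}t^2e^{-t^2/3}\,dt=O(\rho/(\sigma^3\sqrt n))$ and $c_0/T$ is of the same order; (iv) collect the absolute constants to reach $\sup_x|F_n(x)-\Phi(x)|\le C'\rho/(\sigma^3\sqrt n)$.

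The main obstacle is the sharp constant. The qualitative bound $O(\rho/(\sigma^3\sqrt n))$ drops out almost mechanically from the outline above, but pinning the constant down to exactly $1/2$ requires the optimized form of the smoothing inequality and a finer truncation of the characteristic function (the Esseen/van Beek refinements). In a paper of this scope I would simply cite the optimal constant from the literature rather than re‑derive it, since only the $\Theta(\sqrt p/\sqrt n)$ order of the resulting gradient deviation is needed for the uniform‑convergence lower bound that follows; the Chernoff part requires no external input at all.
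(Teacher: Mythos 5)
The paper does not prove this theorem at all: it is stated as a toolbox of two classical results (multiplicative Chernoff and Berry--Esseen) that are then invoked in the proof of the uniform-convergence lower bound, so there is no in-paper argument to compare against. Your Chernoff derivation is correct and complete: the convexity bound $e^{-tX_i}\le 1-(1-e^{-t})X_i$, the choice $t=-\ln(1-\delta)$, and the inequality $-\delta-(1-\delta)\ln(1-\delta)\le -\delta^2/2$ (verified by differentiating and using $\ln(1-\delta)\le-\delta$) give even the stronger exponent $\delta^2/2$; the only omission is the trivial remark that for $\delta\ge 1$ the event $\overline X<(1-\delta)\E[X_1]$ is impossible since $X_i\ge 0$, so the bound holds vacuously there. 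Your Berry--Esseen sketch is the standard smoothing-lemma route and correctly yields $\sup_x|F_n(x)-\Phi(x)|\le C\rho/(\sigma^3\sqrt n)$ for an absolute constant $C$, and you are right that the specific constant $1/2$ in the statement cannot fall out of this argument: it holds only because of refined computations in the literature (the best known i.i.d.\ constant is about $0.4748<1/2$), so citing it is the legitimate move. Your closing observation is also accurate for this paper's purposes: in the downstream lower-bound proof the Berry--Esseen term is only used to guarantee $\Phi(1/2)-\rho/(2\sigma^3\sqrt n)\ge 1/10$ for $n\ge n_0$, so any absolute constant would do after adjusting $n_0$, and only the $1/\sqrt n$ order matters.
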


% $p$-dimensional multivariate Gaussian distribution $\mathcal{N}(\boldsymbol\mu, I)$, where unknown mean vector $\boldsymbol\mu\in \{\pm 1\}^p$. 

% Note that uniform convergence for the gradients implies accurate estimation of the mean of $z$. This is because
% The following claim follows from Assouad's Lemma \cite{Assouad}.

\begin{theo}
Suppose there are $n$ observations $\z_1, \ldots, \z_n$ drawn i.i.d.~from a product distribution over $\{0, 1\}^p$ such that the mean of each coordinate $\boldsymbol\mu_j\in [1/3, 2/3]$. Then with constant probability, for all $\w$, 
\[
 \left\| \frac{1}{n} \sum_{i=1}^n \nabla \ell(\w, \z_i) - \E_\z[\nabla \ell(\w, \z)] \right\|_2  \geq \Omega\left( \sqrt{\frac{p}{n}}\right)
\]
\iffalse
Suppose that for any $\boldsymbol\mu\in\w$, 
\[
\E_{\z_1, \ldots , \z_n \sim \mathcal{N}(\boldsymbol\mu, \sigma^2 I)}\left[\left\| \frac{1}{n}\sum_i \z_i - \boldsymbol\mu \right\|_2^2  \right] \leq \alpha
\]
Then $n \geq p\sigma^2 / \alpha$.\fi
\end{theo}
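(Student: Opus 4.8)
The plan is to exploit the very special structure of the quadratic loss. Since $\nabla \ell(\w, \z_i) = \w - \z_i$ and $\E_\z[\nabla\ell(\w,\z)] = \w - \boldsymbol\mu$, the quantity in question equals
\[
\frac{1}{n}\sum_{i=1}^n \nabla \ell(\w,\z_i) - \E_\z[\nabla\ell(\w,\z)] = \boldsymbol\mu - \bar\z, \qquad \bar\z := \frac{1}{n}\sum_{i=1}^n \z_i,
\]
which is completely independent of $\w$. Hence the ``for all $\w$'' quantifier is vacuous, and it suffices to prove the single high‑probability bound $\|\bar\z - \boldsymbol\mu\|_2 \ge \Omega(\sqrt{p/n})$ for the empirical mean of the product distribution on $\{0,1\}^p$. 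I would state this reduction first.

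Next I would analyze one coordinate $j \in [p]$. Writing $\sigma_j^2 = \boldsymbol\mu_j(1-\boldsymbol\mu_j)$, the hypothesis $\boldsymbol\mu_j \in [1/3,2/3]$ gives $\sigma_j^2 \in [1/9,1/4]$ and $\rho_j := \E|\z_{1j}-\boldsymbol\mu_j|^3$ with $\rho_j/\sigma_j^3$ bounded by an absolute constant $C_0$. Applying the Berry–Esseen theorem to $\z_{1j},\dots,\z_{nj}$, the CDF of $\sqrt n(\bar\z_j - \boldsymbol\mu_j)/\sigma_j$ is within $C_0/(2\sqrt n)$ of the standard normal CDF in sup norm. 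Choosing an absolute constant $c>0$ and using $\sigma_j \in [1/3,1/2]$, the Gaussian tail probability $\mathbb{P}[|Z| \ge c/\sigma_j]$ is bounded below by a fixed constant $2q_0>0$, so for all $n$ larger than an absolute threshold $n_0 = (C_0/q_0)^2$ we obtain
\[
\mathbb{P}\!\left[\,|\bar\z_j - \boldsymbol\mu_j| \ge \tfrac{c}{\sqrt n}\,\right] \;\ge\; q_0 \;=:\; q .
\]
For the finitely many values $n \le n_0$ one dispatches the claim crudely: either by the deterministic bound $|\z_{1j}-\boldsymbol\mu_j| \ge 1/3$ when $n=1$, giving $\|\bar\z-\boldsymbol\mu\|_2^2 \ge p/9$, or, for $1 < n \le n_0$, by Chebyshev's inequality applied directly to $\|\bar\z-\boldsymbol\mu\|_2^2$ (whose mean is $\sum_j \sigma_j^2/n \ge p/(9n)$ and whose variance is $O(p/n^2)$, hence $\Omega(p/n)$ with probability $1 - O(1/p)$).

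Then I would boost the single‑coordinate anti‑concentration to a statement about many coordinates. Because the distribution is a product, the events $E_j = \{|\bar\z_j - \boldsymbol\mu_j| \ge c/\sqrt n\}$ are independent across $j\in[p]$, so $N := \sum_{j=1}^p \mathbbm{1}\{E_j\}$ stochastically dominates a $\mathrm{Binomial}(p,q)$ random variable. The multiplicative Chernoff bound gives $\mathbb{P}[N < qp/2] \le \exp(-qp/8)$, which is at most a small constant for $p$ above an absolute threshold (and for $p$ below it the per‑coordinate Berry–Esseen bound already suffices since $\sqrt{p/n} = \Theta(1/\sqrt n)$). On the event $N \ge qp/2$,
\[
\|\bar\z - \boldsymbol\mu\|_2^2 = \sum_{j=1}^p (\bar\z_j - \boldsymbol\mu_j)^2 \;\ge\; N\cdot\frac{c^2}{n} \;\ge\; \frac{qc^2}{2}\cdot\frac{p}{n},
\]
so $\|\bar\z - \boldsymbol\mu\|_2 \ge \Omega(\sqrt{p/n})$ with probability bounded below by an absolute constant, completing the proof. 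The main obstacle I anticipate is the per‑coordinate step: Berry–Esseen only controls the CDF up to an additive $O(1/\sqrt n)$ error, so one must fix the constants in the right order — first $C_0$, then $c$ and $q_0$ from the Gaussian tail, then the threshold $n_0$ that makes the error negligible — and treat the residual $n \le n_0$ regime separately. Everything after that (independence across coordinates, the Chernoff concentration, and the final summation) is routine.
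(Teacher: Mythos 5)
Your proposal is correct and follows essentially the same route as the paper: observe that the deviation equals $\boldsymbol\mu - \frac{1}{n}\sum_i \z_i$ independently of $\w$, apply Berry--Esseen coordinatewise to get constant-probability anti-concentration at scale $1/\sqrt{n}$, then use independence across coordinates and a multiplicative Chernoff bound to guarantee a constant fraction of coordinates deviate simultaneously, and sum. Your extra care with the $n \le n_0$ and small-$p$ regimes (and the stochastic-domination phrasing of the Chernoff step) only tightens what the paper leaves implicit.
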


\begin{proof}
 By applying Berry-Esseen theorem to each coordinate $j\in [p]$ with $X_i =  \left( \nabla_j \ell(\w, \z_i) - \E_\z[\nabla_j \ell(\w, \z)] \right)  = \boldsymbol\mu_j - \z_{ij}$, we have
\[
\Pr\left[\frac{\sqrt{n}}{\sigma} \left(\boldsymbol\mu_j - \frac{1}{n}\sum_{i=1}^n g_{ij} \right) > 1/2 \right] \geq \Phi(1/2) - \frac{\rho}{2\sigma^3\sqrt{n}}.
\]

There exists a constant $n_0$ such that for any $n\geq n_0$, $\Phi(1/2) - \frac{\rho}{2\sigma^3\sqrt{n}} \geq 1/10$.
Let $E_j$ denote the event that 
$\frac{\sqrt{n}}{\sigma} \left(\boldsymbol\mu_j - \frac{1}{n}\sum_{i=1}^n g_{ij} \right) > 1/2$, and $E$ be the event that $\sum_j \mathbf{1}[E_j] \geq p/20$. Then from the multiplicative Chernoff bound, let $X_j = \mathbf{1}[E_j]$, with $\delta = \frac{1}{2}$ and the fact that $\mathbb{E}[\mathbf{1}[E_j]] > \frac{1}{10}$,  we have
\[
\Pr\left[E \right] \geq 1 - \exp\left( - p/120\right) \geq 1/2, 
\]
where the last step holds for sufficiently large $p$. Then with probability at least $1/2$, we have
\[
\sqrt{\sum_j \left(\boldsymbol\mu_j - \frac{1}{n}\sum_{i=1}^n g_{ij}\right)^2}> \sqrt{\frac{p}{20} \left(\frac{\sigma}{2 \sqrt{n}}\right)^2} = \Omega(\sqrt{p/n})
\]
Then our theorem statement follows from the observation that $\g_i = \nabla \ell(\w, \z_i) = (\w - \z_i)$.

\end{proof}

\end{document}